\documentclass[lettersize,journal]{IEEEtran}
\usepackage[T1]{fontenc}
\usepackage{amsmath,amsfonts}
\usepackage{algorithmic}
\usepackage{algorithm}
\usepackage{array}
\usepackage[caption=false,font=normalsize,labelfont=sf,textfont=sf]{subfig}
\usepackage{textcomp}
\usepackage{stfloats}
\usepackage{url}
\usepackage{verbatim}
\usepackage{graphicx}
\usepackage{cite}
\usepackage{booktabs} 
\usepackage{multirow}
\usepackage{balance}
\usepackage{bbm}
\usepackage{amsthm}
\usepackage{hyperref}
\newtheorem{theorem}{Theorem}[section]

\newtheorem{proposition}[theorem]{Proposition}
\newtheorem{corollary}[theorem]{Corollary}
\newtheorem{definition}{Definition}[section]
\newtheorem{assumption}{Assumption}[section]
\newtheorem{remark}{Remark}[section]

\begin{document}

\title{Depth as Prior Knowledge for Object Detection}

\author{Moussa Kassem Sbeyti\quad Nadja Klein\\
Scientific Computing Center, Karlsruhe Institute of Technology\\
\tt\small{\{moussa.sbeyti, nadja.klein\}@kit.edu}\\
        % <-this % stops a space
% \thanks{This paper was produced by the IEEE Publication Technology Group. They are in Piscataway, NJ.}% <-this % stops a space
% \thanks{Manuscript received February 05, 2026}
}

% The paper headers
% \markboth{IEEE Transactions on Pattern Analysis and Machine Intelligence}%
% {Shell \MakeLowercase{\textit{et al.}}: A Sample Article Using IEEEtran.cls for IEEE Journals}

% \IEEEpubid{\copyright~2026 IEEE} %0000--0000/00\$00.00~
% Remember, if you use this you must call \IEEEpubidadjcol in the second
% column for its text to clear the IEEEpubid mark.

\maketitle

\begin{abstract}
Detecting small and distant objects remains challenging for object detectors due to scale variation, low resolution, and background clutter. Safety-critical applications require reliable detection of these objects for safe planning. Depth information can improve detection, but existing approaches require complex, model-specific architectural modifications. We provide a theoretical analysis followed by an empirical investigation of the depth-detection relationship. Together, they explain how depth causes systematic performance degradation and why depth-informed supervision mitigates it. We introduce DepthPrior, a framework that uses depth as prior knowledge rather than as a fused feature, providing comparable benefits without modifying detector architectures. DepthPrior consists of Depth-Based Loss Weighting (DLW) and Depth-Based Loss Stratification (DLS) during training, and Depth-Aware Confidence Thresholding (DCT) during inference. The only overhead is the initial cost of depth estimation. 
Experiments across four benchmarks (KITTI, MS COCO, VisDrone, SUN RGB-D) and two detectors (YOLOv11, EfficientDet) demonstrate the effectiveness of DepthPrior, achieving up to +9\% mAP$_S$ and +7\% mAR$_S$ for small objects, with inference recovery rates as high as 95:1 (true vs.~false detections). DepthPrior offers these benefits without additional sensors, architectural changes, or performance costs. Code is available at \url{https://github.com/mos-ks/DepthPrior}.
\end{abstract}
\begin{IEEEkeywords}
Adaptive thresholding, geometric constraints, guided supervision, monocular depth
\end{IEEEkeywords}

%%%%%%%%% INTRODUCTION
\section{Introduction}
Object detectors achieve strong performance on standard benchmarks, though their performance varies across object distances. In autonomous driving, a pedestrian close to the camera occupies a large image area with rich visual features. The same pedestrian at a greater distance occupies fewer pixels with minimal distinguishable characteristics. This geometric transformation affects all detectors, yet standard supervised training treats objects uniformly regardless of distance. Despite this, the systematic relationship between object distance and detection performance has received limited attention.

Current monocular depth estimators use self-supervised learning to accurately predict depth (and thus distance) at scale~\cite{yang2024depth}, while state-of-the-art detectors still rely on supervised learning~\cite{ciocarlan2024self}. Depth information can improve supervision signals, but existing approaches require invasive architectural modifications that limit practical deployment, e.g., multi-branch networks~\cite{zhang2023monodetr}, specialized convolution operators~\cite{ding2020learning}, or complex feature fusion~\cite{gupta2014learning,song2017depth}. Moreover, depth-informed inference remains unstudied. Distant objects receive systematically lower confidence scores even when correctly detected. However, standard post-processing applies uniform thresholds across the depth range, discarding valid detections.

Rather than treating depth as features to be fused, we use it as prior knowledge to inform training and inference. This requires no architectural changes, maintaining independence from detector choice or deployment domain. We introduce \emph{DepthPrior}, a framework with three components. Depth-Based Loss Weighting (DLW) weights each object's loss based on depth, scaling contributions across the batch (inter-image depth relationship). Depth-Based Loss Stratification (DLS) decomposes detection loss per image into close and distant intervals with separate weights (intra-image depth relationship). Depth-Aware Confidence Thresholding (DCT) learns depth-dependent confidence thresholds via splines, optimizing true vs.~false detection trade-offs during inference.

Our main contributions are:
\begin{itemize}
\item We theoretically formalize the depth-detection relationship. As distance increases, confidence decreases and missing detections increase, revealing a mismatch between uniform supervision and the geometric nature of detection difficulty.
\item We introduce a modular framework that exploits depth to inform training and inference without architectural modifications, maintaining deployment flexibility.
\item We empirically investigate the depth-dependency of object detectors and validate DepthPrior across four benchmarks (indoor/outdoor, aerial/ground-level) and two detector families (anchor-based and anchor-free).
\end{itemize}
%%%%%%%%% RELATED WORK
\section{Related Work} \label{sec:rel_work}
\subsection{Depth-Aware Object Detection}
Monocular depth estimation has progressed from CNN-based~\cite{eigen2014depth} to transformer-based~\cite{ranftl2021vision} models. 
Recent approaches such as DepthAnything~\cite{yang2024depth} achieve strong zero-shot generalization. This enables reliable depth integration into detection pipelines without additional sensors.

Depth-aware object detection has evolved through distinct paradigms balancing the trade-off between performance and architectural complexity. Early works~\cite{gupta2014learning,hou2016deeply} extract complementary features (e.g., geometric contours) from depth maps and fuse them with RGB features. Other approaches adopt weak supervision followed by joint fine-tuning~\cite{song2017depth}. For 3D detection, Ding et al.~\cite{ding2020learning} introduce depth-guided convolutions where kernels adapt to local depth from LiDAR. Zhang et al.~\cite{zhang2023monodetr} guide the decoder of transformers with parallel depth and visual encoders connected via cross-attention. Cetinkaya et al.~\cite{cetinkaya2022does} show that using raw depth without feature extraction decreases detection performance. Depth must first be transformed into meaningful features, either manually~\cite{hou2016deeply} or through learned encoders~\cite{song2017depth}. However, these methods require careful architecture design, may suffer from modality imbalance, and incur high computational overhead. In contrast, DepthPrior treats depth as prior knowledge guiding training and inference, avoiding feature fusion overhead and enabling plug-and-play integration with any detector.
 
\subsection{Multi-Task Learning for Object Detection} 
An alternative strategy to depth-aware detection is multi-task learning (MTL). MTL jointly optimizes depth estimation and detection through shared representations~\cite{kendall2018multi,chen2018multi,yu2021yolo}. To mitigate negative transfer, uncertainty-weighted balancing~\cite{kendall2018multi} and learnable weights~\cite{liebel2018auxiliary} have been introduced. However, negative transfer remains problematic as task affinities depend on dataset characteristics~\cite{standley2020tasks,vandenhende2021multi}. In MTL, the additional decoders required for each task increase computational cost, offsetting the benefits of parameter sharing in the encoder.

In contrast, DepthPrior uses depth post estimation, integrating into existing detectors without retraining depth-specific backbones or task-specific heads.

\subsection{Prior Knowledge in Object Detection} 
A third category of depth-aware detection leverages geometric constraints~\cite{zhang2021objects}. Chen et al.~\cite{chen2020monopair} predict 3D distances between adjacent objects. They use pairwise spatial relationships as constraints in a joint optimization process to refine locations of occluded objects. Pseudo-LiDAR~\cite{wang2019pseudo} generates pseudo point clouds from depth for monocular 3D detection. Beyond depth, other priors include surface normals~\cite{bansal2016marr}, segmentation-guided attention~\cite{miao2022prior}, and occurrence frequency of objects and their color as scene weights for classification~\cite{ding2018prior}. 

DepthPrior uses depth not to extend 2D detection to 3D, but to improve performance on distant and small objects.

\subsection{Confidence Calibration and Threshold Optimization}
Depth in training has received more attention than in inference. Post-hoc calibration methods such temperature scaling~\cite{guo2017calibration} address miscalibration uniformly without considering geometric structure. Non-maximum suppression~\cite{hosang2017learning} operates on spatial overlap without accounting for depth-dependent characteristics. Test-time adaptation~\cite{wang2020tent} demonstrates that inference-time adjustments can improve performance, but none consider depth-aware post-processing. DepthPrior introduces geometric structure into threshold optimization, learning depth-dependent adjustments reflecting varying difficulty across distances.
%%%%%%%%% METHODS
\section{Theoretical Considerations} \label{sec:dd_rel_methods}
Distant objects appear smaller and are harder to detect, while nearby objects are large and easy to detect. Standard detectors use depth-uniform loss functions, implicitly assuming detection difficulty is independent of depth. We argue this assumption is flawed. Consider autonomous driving: a car at $10$~meters occupies $200\times150$ pixels with rich features and consistent detection losses. The same car at $80$~meters reduces to $25\times18$ pixels with high loss variance. This variance arises from multiple compounding factors. Sources of noise that are negligible for nearby objects, such as pixel noise, background interference, and annotation errors, become significant perturbations for distant ones. We formalize how loss variance scales with depth, contradicting the homoscedastic assumption in standard supervised training.

This section builds the methodological fundament for the components of DepthPrior presented in the next Section \ref{sec:methods}. 

\subsection{Variance Model of Detection Loss}
\begin{definition}[Signal Quality Function]
\label{def:signal_quality}
For an object at distance $d \in \mathbb{R}^+$ from the camera, we define the signal quality function $Q: \mathbb{R}^+ \to \mathbb{R}^+$ as a measure of the visual information available in a captured image to an object detector.
\end{definition}

We use this definition to model how detection loss variance increases as visual information degrades. We make the following assumptions about the relationship between distance, signal quality, and loss variance.

\begin{assumption}[Inverse-Square Law] \label{assumption:intensity_distance}
The signal quality function follows the inverse-square law governing light intensity and distance~\cite{hecht1974optics}:
\begin{equation}
Q(d) = \frac{\kappa}{d^2},
\end{equation}
where $\kappa \in \mathbb{R}^+$ aggregates camera intrinsic parameters and object-specific properties. This is empirically validated in computer vision~\cite{hao2023understanding}. 
\end{assumption}

\begin{assumption}[Variance-Signal Relationship] \label{assumption:variance_signal}
Detection loss variance increases as signal quality degrades. Aleatoric uncertainty (noise inherent in observations) varies with input characteristics such as resolution and occlusion~\cite{kendall2017uncertainties}. Since distant objects exhibit degraded input quality, we model their loss variance as inversely related to signal quality:
\begin{equation}
\text{Var}[\mathcal{L} \mid d] = \frac{\alpha^2}{Q(d)} + \sigma_\varepsilon^2,
\end{equation}
where $\alpha^2 > 0$ is the signal-dependent variance scaling parameter and $\sigma_\varepsilon^2 > 0$ represents the irreducible, depth-independent variance arising from sources such as annotation noise or sensor noise.
\end{assumption}

\begin{proposition}[Depth-Induced Heteroscedasticity]
\label{prop:depth_heteroscedasticity}
Under Assumptions~\ref{assumption:intensity_distance} and~\ref{assumption:variance_signal}, the conditional variance of the detection loss is:
\begin{equation}
\text{Var}[\mathcal{L} \mid d] = \sigma_0^2 d^2 + \sigma_\varepsilon^2,
\end{equation}
where $\sigma_0^2 = \alpha^2/\kappa$.
\end{proposition}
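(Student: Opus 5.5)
The plan is a direct substitution: combine Assumption~\ref{assumption:intensity_distance} with Assumption~\ref{assumption:variance_signal} and simplify. I would first note that $Q(d)=\kappa/d^2$ is well defined and strictly positive for every $d\in\mathbb{R}^+$, because $\kappa>0$ and $d>0$. This lets us take its reciprocal, $1/Q(d)=d^2/\kappa$. The positivity check is needed so that the expression $\alpha^2/Q(d)$ in Assumption~\ref{assumption:variance_signal} is meaningful, consistent with Definition~\ref{def:signal_quality} requiring $Q$ to map into $\mathbb{R}^+$.

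Next I would substitute this reciprocal into the variance model:
\begin{equation*}
\text{Var}[\mathcal{L}\mid d] = \alpha^2\cdot\frac{d^2}{\kappa} + \sigma_\varepsilon^2 = \frac{\alpha^2}{\kappa}\, d^2 + \sigma_\varepsilon^2 .
\end{equation*}
Defining $\sigma_0^2 := \alpha^2/\kappa$ then gives exactly the claimed form. I would also remark that $\sigma_0^2>0$ because $\alpha^2>0$ and $\kappa>0$, so writing it as a square of a positive scale is legitimate.

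To make the heteroscedasticity reading explicit, I would add that the map $d\mapsto \sigma_0^2 d^2+\sigma_\varepsilon^2$ is strictly increasing on $\mathbb{R}^+$, since its derivative is $2\sigma_0^2 d>0$. It tends to the floor $\sigma_\varepsilon^2$ as $d\to 0^+$ and grows quadratically in $d$. Hence the variance is not constant in depth, which contradicts the homoscedastic assumption.

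There is no genuine obstacle; the proof is purely algebraic. The only point requiring care is bookkeeping of the assumptions. One must state that $\kappa$ and $\alpha^2$ are treated as depth-independent constants, as the assumptions stipulate. Otherwise the quadratic dependence on $d$ would not follow.
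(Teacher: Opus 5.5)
Your proposal is correct and matches the paper's proof: substitute $Q(d)=\kappa/d^2$ from Assumption~\ref{assumption:intensity_distance} into Assumption~\ref{assumption:variance_signal}, simplify to $\alpha^2 d^2/\kappa + \sigma_\varepsilon^2$, and set $\sigma_0^2=\alpha^2/\kappa$. Your positivity and monotonicity remarks go beyond the paper's version but are harmless.
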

This contradicts the homoscedastic assumption implicit in standard supervised training. See Supplementary Material Section~A for proof.

\subsection{From Visual Information Degradation to Training Bias}
This heteroscedasticity (Proposition~\ref{prop:depth_heteroscedasticity}) induces systematic training bias.

\begin{corollary}[Bias Toward Nearby Objects]
\label{cor:training_bias}
Under stochastic gradient descent (SGD) with uniform weighting, objects with lower loss variance (nearby) are fitted before those with higher loss variance (distant). Given a fixed training budget, this results in systematic underfitting of distant objects.
\end{corollary}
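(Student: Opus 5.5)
To make the corollary provable, I would first fix a stylized model and then argue within it. Take a single parameter $\theta$ (or a block of parameters) governing fit on objects at depth $d$. Model the per-object loss as a quadratic population objective plus zero-mean noise whose variance is given by Proposition~\ref{prop:depth_heteroscedasticity}. So stochastic gradients at depth $d$ have the form $g_d(\theta) = \nabla \bar{\mathcal{L}}_d(\theta) + \xi_d$, with $\mathbb{E}[\xi_d]=0$ and $\text{Var}[\xi_d] \propto \sigma_0^2 d^2 + \sigma_\varepsilon^2$. The natural reading is that gradient noise inherits the loss variance; I would state this link explicitly as a modelling step, since it is the bridge from the proposition to the optimisation claim. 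Under uniform weighting, SGD with step size $\eta$ then becomes a standard noisy linear recursion for each depth group.

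The key steps, in order, are as follows.

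\textbf{Step 1: error recursion.} For a $\mu$-strongly convex, $L$-smooth quadratic $\bar{\mathcal{L}}_d$, derive
\[
\mathbb{E}\|\theta_t-\theta_d^*\|^2 \le (1-\eta\mu)^{2t}\|\theta_0-\theta_d^*\|^2 + \frac{\eta\,(\sigma_0^2 d^2+\sigma_\varepsilon^2)}{\mu(2-\eta\mu)}.
\]
This is the textbook bound for SGD with constant step size.

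\textbf{Step 2: interpretation.} The noise floor and the transient both grow with $d^2$. Define the time $T_\epsilon(d)$ needed to reach expected excess loss $\epsilon$. Solving the bound, for instance after choosing the step size optimally or using a decaying schedule $\eta_t\sim 1/(\mu t)$, gives $T_\epsilon(d) \gtrsim (\sigma_0^2 d^2+\sigma_\varepsilon^2)/(\mu^2\epsilon)$. This is monotonically increasing in $d$, so nearby objects reach a given fit level first, which is the ``fitted before'' claim.

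\textbf{Step 3: fixed budget.} Fix a budget $T$. The residual excess risk at $T$ is $\Theta\big((\sigma_0^2 d^2+\sigma_\varepsilon^2)/(\mu T)\big)$, again increasing in $d$. Distant objects therefore have systematically larger residual loss, i.e.\ they are underfit.

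Shared parameters need care. In a real detector all depths share $\theta$, and uniform weighting minimises the mixture $\sum_d \pi_d \bar{\mathcal{L}}_d$. I would argue either with separable parameter blocks, which is the cleanest case, or by noting that the aggregate gradient noise is dominated by the high-variance distant terms. Even then, the per-group excess loss at $T$ is largest where the signal-to-noise ratio $\|\nabla\bar{\mathcal{L}}_d\|^2/\text{Var}[\xi_d]$ is smallest, namely at large $d$.

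The main obstacle is that the corollary is informal. ``Fitted before'' is not well defined without a model, and the proposition concerns loss variance rather than gradient variance. I would address this by stating the gradient-noise assumption as a mild extension of Assumption~\ref{assumption:variance_signal}, and by proving the result in the separable quadratic setting. I would then remark that the ordering by SNR carries over qualitatively to the shared-parameter case.
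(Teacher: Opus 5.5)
Your route is genuinely different from the paper's. The paper never models the optimizer. It takes the monotonicity $\mathrm{Var}[\mathcal{L}\mid d_1]<\mathrm{Var}[\mathcal{L}\mid d_2]$ for $d_1<d_2$ from Proposition~\ref{prop:depth_heteroscedasticity} and maps it onto empirical findings: Toneva et al.'s unforgettable/forgettable split (forgetting frequency vs.\ margin, Spearman $-0.74$) and Arpit et al.'s observation that networks fit consistent patterns before noisy ones. It then concludes that nearby objects are the ``unforgettable'' ones, so a fixed budget leaves distant ones underfit. That argument speaks directly about deep detectors with shared features, but equating low loss variance with unforgettability is an analogy, not a derivation.

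Your stylized SGD model buys an actual inequality with explicit dependence on $\sigma_0^2d^2+\sigma_\varepsilon^2$. The price is assumptions the paper never makes: gradient noise inheriting the loss variance, and separable blocks. Your shared-parameter remark is no more rigorous than the paper's citation step. With a shared $\theta$, the iterate's fluctuation is common to all depths, so noise dominated by distant terms degrades every group. It penalizes distant objects preferentially only if it concentrates in directions their loss is sensitive to, which is partial separability again. Note also that in your model, weighting a block by $w$ just rescales its step to $\eta w$. That shortens the transient but raises the floor in your Step~1 bound to $\eta w\sigma^2(d)/(\mu(2-\eta w\mu))$. So your formalization certifies the bias without, on its own, motivating the paper's later choice $w^*\propto\mathrm{Var}[\mathcal{L}\mid d]$.

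One step fails as written. Step~1 is an upper bound, so solving it for $\epsilon$ gives an upper bound on $T_\epsilon(d)$ (a sufficient time), not the lower bound you state. Comparing two upper bounds cannot show that distant objects are fitted \emph{later}, and the lower half of the $\Theta$ in Step~3 is equally unsupported. Also, nothing in your model makes the transient $(1-\eta\mu)^{2t}\|\theta_0-\theta_d^*\|^2$ grow with $d^2$; only the noise term does.

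The repair is to use the fact that, for a quadratic with additive noise, the recursion is an identity. Take $\bar{\mathcal{L}}_d(\theta)=\min\bar{\mathcal{L}}_d+\tfrac12(\theta-\theta_d^*)^\top H(\theta-\theta_d^*)$, with noise independent across steps and of the iterate. Write $\sigma^2(d)=\sigma_0^2d^2+\sigma_\varepsilon^2$, $\mathrm{Cov}(\xi_d)=\sigma^2(d)\Sigma$ with $\Sigma\neq0$, $A=I-\eta H$, $P_0=\mathbb{E}[(\theta_0-\theta_d^*)(\theta_0-\theta_d^*)^\top]$ and $E_t(d)=\mathbb{E}[\bar{\mathcal{L}}_d(\theta_t)]-\min\bar{\mathcal{L}}_d$. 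Then
\[
E_t(d)=\tfrac12\mathrm{tr}\bigl(HA^tP_0A^t\bigr)+\tfrac{\eta^2}{2}\,\sigma^2(d)\sum_{j=0}^{t-1}\mathrm{tr}\bigl(HA^j\Sigma A^j\bigr).
\]
You must also assume explicitly that $H$, $P_0$ and the share of objects per depth group do not depend on $d$. Otherwise, for example, a distant group starting closer to its optimum can be fitted first. Under that assumption, $E_t(d)$ is strictly increasing in $d$ for every $t\ge1$, and likewise under any step schedule shared across depths. This gives $T_\epsilon(d_1)\le T_\epsilon(d_2)$ and a larger residual at any fixed budget directly, with no rates needed. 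It also removes the mix-up between squared distance (your $\mu^{-2}$ in Step~2) and excess loss (your $\mu^{-1}$ in Step~3).
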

The proof combines Proposition~\ref{prop:depth_heteroscedasticity} with empirical observations that neural networks prioritize low-variance examples~\cite{arpit2017closer,toneva2019empirical}; see Supplementary Material Section~A.

\begin{remark}
Samples that yield consistent, low-variance losses are 
fitted early and remain ``unforgettable''. High-variance samples are fitted later and forgotten more frequently~\cite{toneva2019empirical}. This motivates curriculum learning strategies that explicitly present easy samples first~\cite{bengio2009curriculum,kumar2010self}. Our contribution identifies depth as a systematic source of loss variance heteroscedasticity in object detection. Nearby objects exhibit low loss variance (analogous to ``unforgettable'' samples), while distant objects exhibit high loss variance (analogous to ``forgettable'' samples). This disadvantages distant objects.
\end{remark}

\subsection{Compensating for Training Bias}
\label{sec:methodswL}
We introduce two independent mechanisms to counteract the depth-dependent training bias established in corollary~\ref{cor:training_bias}.

\begin{definition}[Variance-Compensating Weights]
\label{def:variance_weights}
We define variance-compensating weights based on Proposition~\ref{prop:depth_heteroscedasticity} and Corollary~\ref{cor:training_bias} as
\begin{equation}
w^*(d) \propto \text{Var}[\mathcal{L} \mid d] = \sigma_0^2 d^2 + \sigma_\varepsilon^2.
\end{equation}
For large depths where $d \gg \sigma_\varepsilon/\sigma_0$, this simplifies to $w^*(d) \propto d^2$.
\end{definition}

\begin{remark}
While the aleatoric uncertainty inherent to distant objects cannot be eliminated as the visual information is degraded, variance-compensating weights prevent this uncertainty from biasing optimization toward easier nearby examples. This relates to importance sampling approaches that prioritize informative samples~\cite{katharopoulos2018not} and hard example mining methods that emphasize difficult detections~\cite{shrivastava2016training}.
\end{remark}

Beyond continuous reweighting, stratification enables interpretable control over learning dynamics.

\begin{definition}[Depth-Based Loss Stratification]
\label{def:stratified_loss}
Given a partition of the depth range into $K$ disjoint intervals $\{S_1, \ldots, S_K\}$ where $S_k = [d_{k-1}, d_k)$, the stratified loss decomposes the total training loss into interval-specific components:
\begin{equation}
\mathcal{L}_{\text{strat}} = \sum_{k=1}^K \lambda_k \hat{\mu}_k, \quad \hat{\mu}_k = \frac{1}{|S_k|} \sum_{i : d_i \in S_k} \mathcal{L}_i,
\end{equation}
where $\lambda_k > 0$ are the interval weights, and $|S_k|$ denotes the number of samples in interval $S_k$.
\end{definition}

\begin{proposition}[Gradient of $\mathcal{L}_{\text{strat}}$]
\label{prop:stratification_gradient}
Let $\theta$ denote the model parameters and $\eta$ the learning rate. The gradient of the stratified loss decomposes as:
\begin{equation}
\nabla_\theta \mathcal{L}_{\text{strat}} = \sum_{k=1}^K \frac{\lambda_k}{|S_k|} \sum_{i : d_i \in S_k} \nabla_\theta \mathcal{L}_i.
\end{equation}
Consequently, samples in interval $S_k$ effectively experience learning rate $\eta \lambda_k / |S_k|$ in the SGD update $\theta_{t+1} = \theta_t - \eta \nabla_\theta \mathcal{L}_{\text{strat}}$.
\end{proposition}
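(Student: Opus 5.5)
The argument is a direct computation from Definition~\ref{def:stratified_loss}. I treat the partition $\{S_1,\ldots,S_K\}$, the interval weights $\lambda_k$, and the counts $|S_k|$ as constants with respect to $\theta$ for the current batch. This holds because the depths $d_i$ come from the depth estimator and not from the detector, so interval membership does not depend on $\theta$. I also assume every interval appearing in the sum is nonempty, $|S_k|\ge 1$; empty intervals are simply dropped from the sum, which matches the convention $\hat{\mu}_k$ undefined/zero. Under these conditions $\mathcal{L}_{\text{strat}}$ is a finite linear combination of the per-sample losses $\mathcal{L}_i$, with coefficients independent of $\theta$.

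First, I substitute the definition of $\hat{\mu}_k$ to write
\[
\mathcal{L}_{\text{strat}}=\sum_{k=1}^K \frac{\lambda_k}{|S_k|}\sum_{i:d_i\in S_k}\mathcal{L}_i .
\]
Second, I apply linearity of the gradient operator, assuming each $\mathcal{L}_i$ is differentiable in $\theta$ (or using subgradients otherwise). Since the coefficients are constants, the gradient passes through both finite sums, which gives the stated decomposition. Third, I note that the intervals are disjoint and each sample $i$ lies in exactly one $S_{k(i)}$. The gradient can therefore be rewritten as $\sum_i c_i \nabla_\theta\mathcal{L}_i$ with $c_i=\lambda_{k(i)}/|S_{k(i)}|$.

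Finally, I substitute this into the SGD update:
\[
\theta_{t+1}=\theta_t-\sum_i \eta\, c_i\,\nabla_\theta\mathcal{L}_i .
\]
Each sample's gradient contribution is thus scaled by $\eta\lambda_k/|S_k|$. This equals the update that sample would produce under plain per-sample SGD with step size $\eta\lambda_k/|S_k|$, which is what "effective learning rate" means here.

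The only delicate point is the interpretive claim about the effective learning rate, not the calculus. I will state explicitly what "effectively experience" means: each sample's contribution to the update is the same as under per-sample SGD at that step size, with the contributions summed over samples. I will also stress that this identification depends on the $\theta$-independence of $|S_k|$ and of the partition. If depths were predicted by the detector itself, extra terms would appear, and I would remark that the framework excludes this case.
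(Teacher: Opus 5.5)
Your argument is correct and matches the paper's own (implicit) justification: the supplement gives no separate proof of this proposition, and the Unified Perspective subsection relies on exactly your linearity computation, with $\theta$-independent weights ($\partial\lambda_k/\partial\theta = 0$) and fixed interval membership. One small correction to your closing aside: under hard binning, detector-predicted depths would not add extra gradient terms, because interval membership and $|S_k|$ are locally constant in $\theta$ away from interval boundaries; the real issue would be discontinuities of $\mathcal{L}_{\text{strat}}$ when some $d_i$ crosses a boundary.
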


\begin{remark}
Stratification allows practitioners to specify high-level objectives such as ``prioritize detection beyond 50 meters'' by simply adjusting $\lambda_k$. This is particularly valuable in application-specific contexts such as long-range obstacle detection in autonomous driving.

Since Proposition~\ref{prop:stratification_gradient} requires only coarse depth binning rather than precise per-pixel depth values, it is robust to the inevitable errors in monocular depth estimation.
\end{remark}

\begin{remark}
\label{remark:mtl_connection}
Depth-stratified loss can be viewed as a structured MTL problem where each interval $S_k$ defines a sub-task with shared parameters but independent loss weighting.
\end{remark}

\subsection{From Training to Inference}
\label{sec:dct_theory}
Detectors predict confidence scores $s\in\mathbb{R}^+$ and filter predictions below threshold $\tau_0$. Uniform thresholds fail to account for systematic score variations across the depth range.

\begin{assumption}[Thresholding Costs]
\label{assumption:costs}
Let $C_{\text{FN}} > 0$ and $C_{\text{FP}} > 0$ denote the costs for rejecting true detections and accepting false ones, respectively. These costs reflect application-specific priorities and are assumed fixed~\cite{sbeyticost}.
\end{assumption}

\begin{theorem}[Depth-Dependent Optimal Threshold]
\label{thm:optimal_threshold}
Let $P_{\text{TP}}(s, d)$ and $P_{\text{FP}}(s, d) = 1 - P_{\text{TP}}(s, d)$ denote the probabilities that a detection with score $s$ at depth $d$ is a true or false positive, respectively. Under Assumption~\ref{assumption:costs}, the cost-minimizing threshold is:
$\tau^*(d) = \inf \left\{ s : \frac{P_{\text{TP}}(s, d)}{P_{\text{FP}}(s, d)} \geq \frac{C_{\text{FP}}}{C_{\text{FN}}} \right\}$.
\end{theorem}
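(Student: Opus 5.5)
We reduce the statement to a pointwise Bayes decision problem. For a fixed detection with score $s$ at depth $d$, accepting it incurs expected cost $C_{\text{FP}} P_{\text{FP}}(s,d)$, because it is costly only when the detection is false. Rejecting it incurs expected cost $C_{\text{FN}} P_{\text{TP}}(s,d)$, because it is costly only when a true detection is discarded. By Assumption~\ref{assumption:costs} the costs are fixed. Hence the total expected cost of any acceptance rule $A \subseteq \mathbb{R}^+ \times \mathbb{R}^+$ is the integral of these pointwise costs against the joint distribution of $(s,d)$ over detections. This integral is minimized by choosing, for each $(s,d)$, the cheaper action. We therefore accept exactly when
\begin{equation*}
C_{\text{FN}} P_{\text{TP}}(s,d) \geq C_{\text{FP}} P_{\text{FP}}(s,d),
\end{equation*}
which is equivalent to
\begin{equation*}
\frac{P_{\text{TP}}(s,d)}{P_{\text{FP}}(s,d)} \geq \frac{C_{\text{FP}}}{C_{\text{FN}}}.
\end{equation*}
Ties may be broken toward acceptance without changing the cost. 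Dividing by $C_{\text{FN}} P_{\text{FP}}(s,d)$ requires $P_{\text{FP}}(s,d) > 0$. Where $P_{\text{FP}}(s,d) = 0$, we read the ratio as $+\infty$, and accepting is then trivially optimal.

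The remaining step converts this Bayes acceptance region into a threshold of the form stated in the theorem. This requires the accept set $\{s : \text{ratio} \geq C_{\text{FP}}/C_{\text{FN}}\}$ to be an upper interval in $s$ for each fixed $d$. We obtain that by adopting the standard monotonicity condition that $P_{\text{TP}}(s,d)$ is nondecreasing in $s$; this is the implicit premise of thresholding confidence scores at all. Under it, the ratio $P_{\text{TP}}/(1-P_{\text{TP}})$ is nondecreasing in $s$. So if some $s$ satisfies the inequality, every $s' \geq s$ does too. The accept region at depth $d$ is then $[\tau^*(d), \infty)$ or $(\tau^*(d), \infty)$, with $\tau^*(d)$ the infimum in the statement. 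The open and closed versions differ only at the single boundary point, which is irrelevant if the score distribution is atomless or if $P_{\text{TP}}$ is right-continuous in $s$. This yields the claimed $\tau^*(d)$. It also shows that any other threshold function $\tau(d)$ does no better, since its accept region at each $d$ is a subset or superset of the Bayes region and the difference can only add cost.

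The main obstacle is exactly this monotonicity and boundary step, because the theorem does not state it explicitly. I would add it as a mild regularity premise: monotone calibration in $s$, with the infimum convention $\inf\emptyset = +\infty$ meaning reject everything. Without it, the pointwise Bayes rule still holds, but the optimal rule need not be a threshold.

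Finally, we note why the result matters here. Because $P_{\text{TP}}$ depends on $d$, and by Proposition~\ref{prop:depth_heteroscedasticity} and Corollary~\ref{cor:training_bias} scores of distant objects are degraded, $\tau^*$ generally varies with $d$. A uniform $\tau_0$ is therefore suboptimal.
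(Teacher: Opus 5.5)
Your proposal is correct and follows essentially the same route as the paper. Like the paper, you compare the pointwise expected costs $C_{\text{FP}}P_{\text{FP}}(s,d)$ and $C_{\text{FN}}P_{\text{TP}}(s,d)$, rearrange to the likelihood-ratio criterion, and then use monotonicity of $P_{\text{TP}}$ in $s$ to get the infimum threshold; the paper also adds that monotonicity premise only inside its proof, and your extra handling of $P_{\text{FP}}=0$, the boundary point, and $\inf\emptyset$ simply makes the argument tighter.
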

See Supplementary Material Section~A for proof. 

\begin{remark}
The functions $P_{\text{TP}}(s, d)$ and $P_{\text{FP}}(s, d)$ can be estimated empirically from validation data by computing the fraction of detections at each $(s, d)$ that match ground-truth (GT) objects. If these functions vary with depth, as occurs when detectors exhibit systematic confidence miscalibration for distant objects, then the optimal threshold $\tau^*(d)$ will be depth-dependent. However, the precise nature of this variation depends on detector architecture, training methodology, and dataset characteristics. 
\end{remark}

\section{Methods}
\label{sec:methods}
Figure~\ref{fig:depthprior_framework} illustrates the components of DepthPrior: DLW and DLS based on Section~\ref{sec:methodswL}, and DCT based on Section~\ref{sec:dct_theory}.
\begin{figure*}
    \centering
    \includegraphics[width=1\linewidth]{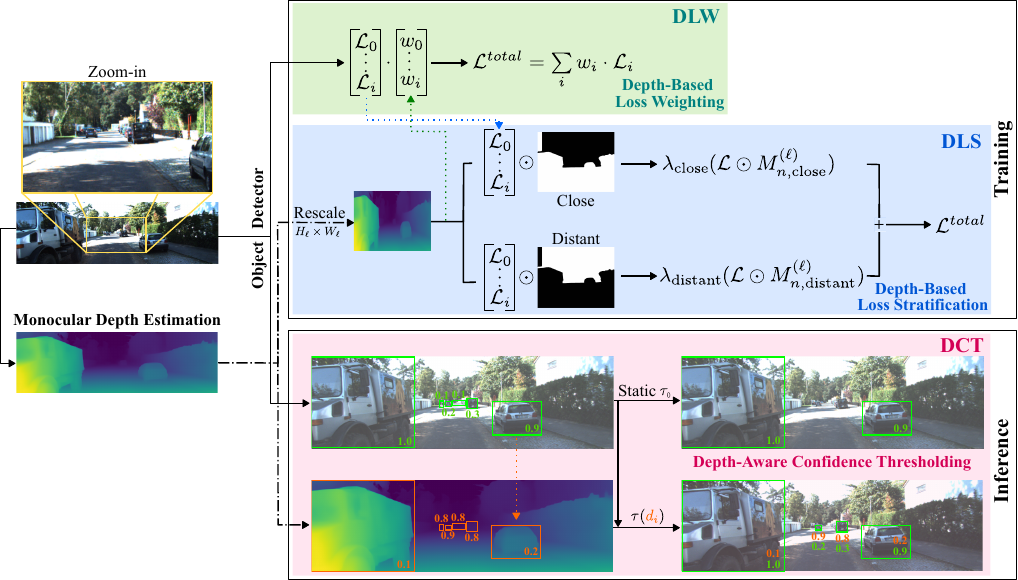}
        \caption{DepthPrior framework (notation simplified for clarity). Top (DLW): non-linear weighting $w_i = 1 + \alpha \cdot \exp(d_{i,\text{norm}})$ prioritizes distant objects. Middle (DLS): binary masks decompose loss into close/distant components with weights $\lambda_{\text{close}}, \lambda_{\text{distant}}$. Bottom (DCT): learned splines $\tau(d_{i,\text{norm}})$ adjust thresholds during inference. The framework requires only monocular depth estimation and operates during both training and inference without architectural modifications.}
    
    \label{fig:depthprior_framework}
\end{figure*}

\subsection{Depth-Based Loss Weighting (DLW)}
\label{subsec:dwl}
DLW implements variance-compensating reweighting (Definition~\ref{def:variance_weights}) in three steps.

\subsubsection{Normalization and Inversion}
Given depth maps $\{D_n\}_{n=1}^{N}$ for a batch of $N$ images, each $D_n \in \mathbb{R}^{H \times W}$ represents the predicted depth values for image $n$, with $H$ and $W$ denoting the image height and width, respectively. We normalize across all images:
\begin{equation}
D_{n,\text{norm}} = \frac{D_n - D_{\min}^{\text{batch}}}{D_{\max}^{\text{batch}} - D_{\min}^{\text{batch}}},
\end{equation}
where $D_{\min}^{\text{batch}} = \min_{n,h,w} D_n(h,w)$ and $D_{\max}^{\text{batch}} = \max_{n,h,w} D_n(h,w)$ denote the global minimum and maximum depth values across all spatial locations $(h,w) \in \{1,\ldots,H\} \times \{1,\ldots,W\}$ and all images $n \in \{1, \ldots, N\}$ in the batch. This yields normalized depth maps $D_{n,\text{norm}} \in [0, 1]^{H \times W}$.

Since monocular depth estimators output inverse depth (higher values indicate brighter pixels, i.e., closer objects), we invert to get distance proportional depth: $d_{i,\text{norm}} = 1 - D_{n,\text{norm}}(h_i, w_i)$, where $d_{i,\text{norm}} \in [0, 1]$ is the normalized depth for object $i$ with 1 being most distant.

\subsubsection{Exponential Weighting}
Motivated by Definition~\ref{def:variance_weights}, the depth-dependent weight for sample $i$ is computed using an exponential function:
\begin{equation}
w_i = 1 + \alpha \cdot \exp(d_{i,\text{norm}}),
\end{equation}
where $\alpha > 0$ controls far-object emphasis. The additive 1 ensures numerical stability (otherwise vanishing weights for small $\alpha$) and makes $\alpha$ a meaningful parameter (otherwise $\frac{\alpha e}{\alpha} = e \approx 2.72$). The weight ratio between farthest and closest objects is $(1 + \alpha e)/(1 + \alpha)$, varying from 1.15 ($\alpha=0.1$) to 2.56 ($\alpha=10$).

\subsubsection{Weighted Loss Computation}
For object $i$, let $\mathcal{L}_i^{\text{cls}}$ and $\mathcal{L}_i^{\text{box}}$ denote the classification and box regression losses, respectively. The weighted loss for a single training batch is:
\begin{equation}
\mathcal{L}^{\text{total}}_{\text{DLW}} = \frac{1}{N_b} \sum_{i=1}^{N_b} w_i \cdot \left(\mathcal{L}_i^{\text{cls}} + \mathcal{L}_i^{\text{box}}\right),
\end{equation}
where $N_b$ denotes the total number of objects across all $N$ images in the batch.

\subsection{Depth-Based Loss Stratification (DLS)}
\label{subsec:dsl}
DLS implements stratification (Proposition~\ref{prop:stratification_gradient}), enabling explicit control over the close/distant learning trade-off.

\subsubsection{Per-Image Depth Normalization}
Unlike DLW, DLS normalizes depth independently for each image. For image $n$ in the batch:
\begin{equation}
D_{n,\text{norm}} = 1 - \frac{D_n - \min(D_n)}{\max(D_n) - \min(D_n)},
\end{equation}
where $\min(D_n)$, $\max(D_n)$ are the minimum and maximum depth values within that image. The resulting $D_{n,\text{norm}} \in [0, 1]^{H \times W}$ assigns higher values to distant objects and adapts to each image's depth range.

\subsubsection{Binary Depth Stratification}
Using a threshold parameter $\beta \in (0, 1)$, the binary masks for image $n$ are:
\begin{align}
M_{n,\text{distant}}(h,w) &= \mathbbm{1}\left[D_{n,\text{norm}}(h,w) \geq \beta\right], \\
M_{n,\text{close}}(h,w) &= \mathbbm{1}\left[D_{n,\text{norm}}(h,w) < \beta\right],
\end{align}
where $\mathbbm{1}[\cdot]$ denotes the indicator function applied elementwise, and $(h,w) \in \{1,\ldots,H\} \times \{1,\ldots,W\}$ indexes spatial locations. The resulting masks $M_{n,\text{close}}, M_{n,\text{distant}} \in \{0,1\}^{H \times W}$ partition the image into close and distant regions, respectively. The threshold $\beta$ is a hyperparameter controlling the depth cutoff. While Definition~\ref{def:stratified_loss} permits arbitrary $K$-way partitions, we implement binary stratification $(K=2)$ with intervals $S_1 = [0, \beta)$ (close) and $S_2 = [\beta, 1]$ (distant) for interpretability. Extension to finer-grained depth intervals is straightforward by introducing additional thresholds $\beta_1 < \beta_2 < \cdots < \beta_{K-1}$.

\subsubsection{Stratified Loss Computation}
For detection architectures with feature pyramid networks, the loss is computed at multiple feature levels. Let $L$ denote the number of feature levels, with each level $\ell \in \{1, \ldots, L\}$ having spatial dimensions $H_\ell \times W_\ell$. At each level, the network produces classification logits $\hat{y}_\ell \in \mathbb{R}^{H_\ell \times W_\ell \times C}$ and bounding box predictions $\hat{b}_\ell \in \mathbb{R}^{H_\ell \times W_\ell \times 4}$, where $C$ is the number of classes. The corresponding GT labels are $y_\ell \in \{0, 1\}^{H_\ell \times W_\ell \times C}$ and boxes $b_\ell \in \mathbb{R}^{H_\ell \times W_\ell \times 4}$.

To compute the depth-stratified loss, the depth maps are first rescaled to match the spatial dimensions of each feature level. Then, the binary masks $M_{n,\text{close}}$ and $M_{n,\text{distant}}$ are extracted, yielding $M_{n,\text{close}}^{(\ell)}, M_{n,\text{distant}}^{(\ell)} \in \{0,1\}^{H_\ell \times W_\ell}$. The stratified losses for a batch of $N$ images are computed as:
\begin{equation}
\begin{aligned}
\mathcal{L}_{\text{close}}^{\text{cls}} &= \frac{1}{L} \sum_{\ell=1}^{L} \frac{1}{N} \sum_{n=1}^{N}  \sum_{h,w} \mathcal{L}^{\text{cls}}(\hat{y}_{n,\ell}(h,w), y_{n,\ell}(h,w))  \\
&\quad \cdot M_{n,\text{close}}^{(\ell)}(h,w), \\
\mathcal{L}_{\text{distant}}^{\text{cls}} &= \frac{1}{L} \sum_{\ell=1}^{L} \frac{1}{N} \sum_{n=1}^{N}  \sum_{h,w} \mathcal{L}^{\text{cls}}(\hat{y}_{n,\ell}(h,w), y_{n,\ell}(h,w))  \\
&\quad \cdot M_{n,\text{distant}}^{(\ell)}(h,w), \\
\mathcal{L}_{\text{close}}^{\text{box}} &= \frac{1}{L} \sum_{\ell=1}^{L} \frac{1}{N} \sum_{n=1}^{N}  \sum_{h,w} \mathcal{L}^{\text{box}}(\hat{b}_{n,\ell}(h,w), b_{n,\ell}(h,w)) \\
&\quad \cdot M_{n,\text{close}}^{(\ell)}(h,w), \\
\mathcal{L}_{\text{distant}}^{\text{box}} &= \frac{1}{L} \sum_{\ell=1}^{L} \frac{1}{N} \sum_{n=1}^{N}  \sum_{h,w} \mathcal{L}^{\text{box}}(\hat{b}_{n,\ell}(h,w), b_{n,\ell}(h,w))  \\
&\quad \cdot M_{n,\text{distant}}^{(\ell)}(h,w),
\end{aligned}
\end{equation}

\subsubsection{Weighted Stratified Loss}
The total loss with DLS combines the interval-specific losses:
\begin{equation}
\mathcal{L}^{\text{total}}_{\text{DLS}} =
\lambda_{\text{close}}\bigl(\mathcal{L}_{\text{close}}^{\text{cls}} + \mathcal{L}_{\text{close}}^{\text{box}}\bigr)
+ \lambda_{\text{distant}}\bigl(\mathcal{L}_{\text{distant}}^{\text{cls}} + \mathcal{L}_{\text{distant}}^{\text{box}}\bigr),
\end{equation}
where $\lambda_{\text{close}}, \lambda_{\text{distant}} > 0$ are interval-specific weights.

\subsection{Depth-Aware Confidence Thresholding (DCT)}
\label{subsec:dct}
DCT learns depth-dependent thresholds (Theorem~\ref{thm:optimal_threshold}) without retraining.

\subsubsection{Spline-Based Threshold Parameterization}
\label{subsec:spline_threshold}
Given a reference threshold $\tau_0 \in [0, 1]$, the depth-dependent threshold is:
\begin{equation}
\tau(d; \psi) = \tau_0 - g(d; \psi),
\end{equation}
where $g(d; \psi)$ is a smooth adjustment function constructed via cubic B-spline interpolation:
\begin{equation}
g(d; \psi) = \sum_{m=1}^{J} \psi_m B_m(d),
\end{equation}
with $B_m(d)$ denoting the cubic B-spline basis functions, $\psi = (\psi_1, \ldots, \psi_J)$ the learnable control parameters, and $J$ the number of spline knots.

\subsubsection{Validation-Based Threshold Optimization}
\label{subsec:threshold_optimization}
Given a validation set with GT annotations, the optimal spline parameters $\psi^*$ are obtained by maximizing a detection performance metric $\mathcal{M}$:
\begin{equation}
\psi^* = \arg\max_{\psi} \mathcal{M}\left(\{i : s_i \geq \tau(d_i; \psi)\}\right),
\end{equation}
where $s_i$ denotes the confidence score for detection $i$ at depth $d_i$, and $\{i : s_i \geq \tau(d_i; \psi)\}$ is the set of detections retained after applying depth-dependent thresholding.

The spline-based method makes no a priori assumptions about the form of the depth-threshold relationship. Instead, it adapts to the specific characteristics of the detector and dataset. If no depth-dependent pattern exists in the data, the optimization will converge to an approximately constant curve. This makes DCT robust to cases where depth-aware adjustment provides no benefit. Optimizing the score thresholds does not require retraining and can thus be integrated into any pre-trained detector.

\subsubsection{Optimization Framework}
\label{subsec:optimization_framework}
For a given reference threshold $\tau_0$, the objective $\mathcal{M}$ is to maximize the number of true positives with a penalty for excessive false positives. The penalty term activates only when false positives exceed $(1+\epsilon)$ times the baseline count, allowing the optimization to focus on increasing recall while maintaining precision:
\begin{equation}
\begin{aligned}
\mathcal{M}(\psi) = &\text{TP}(\psi) \\&- \gamma \, \max\!\Bigl(0,\, \text{FP}(\psi) - \text{FP}_{\text{static}}(1 + \epsilon)\Bigr),
\end{aligned}
\end{equation}
subject to:
\begin{equation}
\begin{aligned}
\text{FP}(\psi) &\leq \text{FP}_{\text{static}}(1 + \epsilon), \\
\psi_m \in [\rho, 1], \quad \forall m \in \{1, \ldots, J\}.
\end{aligned}
\end{equation}
Here, $\text{TP}$ and $\text{FP}$ denote true and false positive counts under the adaptive threshold, $\text{FP}_{\text{static}}$ is the baseline false positive count with uniform threshold $\tau_0$, $\epsilon$ controls the allowable false positive tolerance, $\gamma$ is a regularization parameter penalizing excessive false positives, and $\rho$ is the minimum admissible threshold (set to the detector's minimum meaningful confidence score).

\subsubsection{Depth Estimation and Integration}
For each predicted bounding box $\hat{b}_i = (x_1, y_1, x_2, y_2)$, we compute the normalized depth as:
\begin{equation} \label{eq:dnormdct}
    d_{i,\text{norm}} = 1 - \frac{\frac{1}{|\mathcal{R}_i|} \sum_{(h,w) \in \mathcal{R}_i} D_n(h,w) - \min(D_n)}{\max(D_n) - \min(D_n)},
\end{equation}
where $\mathcal{R}_i = \{(h,w) : x_1 \leq w \leq x_2, y_1 \leq h \leq y_2\}$ denotes the pixel region within the bounding box, $D_n(h,w)$ is the depth value at pixel $(h,w)$ in image $n$.

\subsubsection{Lookup Table Construction and Deployment}
Since the objective is non-differentiable, we use Bayesian optimization as a black-box optimizer. The optimization process (Algorithm~\ref{alg:lookup_generation}) generates a lookup table $\mathcal{T}$ mapping reference thresholds to optimized spline parameters:
\begin{equation}
\mathcal{T} = \{(\tau_{0,r}, \psi^*_r)\}_{r=1}^{R},
\end{equation}
where $R$ is the number of evaluated threshold values.

During inference (Algorithm~\ref{alg:DCT}), consider a set of candidate detections $\{(\hat{b}_i, s_i, d_{i,\text{norm}})\}_{i=1}^{N_{\text{det}}}$, where each detection $i$ consists of a predicted bounding box $\hat{b}_i$, its associated confidence score $s_i$, and its normalized depth $d_{i,\text{norm}}$ computed via Equation~\ref{eq:dnormdct}. Given a reference threshold $\tau_0$, the filtering proceeds as detailed in Algorithm~\ref{alg:DCT}.

\begin{algorithm}[htb]
\caption{Lookup Table Construction}
\label{alg:lookup_generation}
\begin{algorithmic}[1]
\REQUIRE Validation set, reference thresholds $\{\tau_{0,r}\}_{r=1}^{R}$, number of spline knots $J$, penalty weight $\gamma$, FP tolerance $\epsilon$, minimum threshold $\rho$
\ENSURE Lookup table $\mathcal{T}$
\STATE $\mathcal{T} \leftarrow \emptyset$
\FOR{$r = 1$ to $R$}
    \STATE $\text{FP}_{\text{static}} \leftarrow$ false positives using uniform threshold $\tau_{0,r}$
    \STATE $\psi^*_r \leftarrow \arg\max_{\psi} \; \mathcal{M}(\psi)$ where
    \STATE \quad $\mathcal{M}(\psi) = \text{TP}(\psi) - \gamma \cdot \max(0, \text{FP}(\psi) - \text{FP}_{\text{static}}(1 + \epsilon))$
    \STATE \quad s.t. $\psi_m \in [\rho, 1], \; \forall m \in \{1, \ldots, J\}$
    \STATE $\mathcal{T} \leftarrow \mathcal{T} \cup \{(\tau_{0,r}, \psi^*_r)\}$
\ENDFOR
\RETURN $\mathcal{T}$
\end{algorithmic}
\end{algorithm}

\begin{algorithm}[htb]
\caption{Lookup Table Deployment}
\label{alg:DCT}
\begin{algorithmic}[1]
\REQUIRE Detections $\{(\hat{b}_i, s_i)\}_{i=1}^{N_{\text{det}}}$, depth map $D_n \in \mathbb{R}^{H \times W}$, reference threshold $\tau_0$, lookup table $\mathcal{T}$, number of spline knots $J$
\ENSURE Filtered detections
\STATE $\psi^* \leftarrow \text{Lookup}(\tau_0, \mathcal{T})$
\STATE Filtered $\leftarrow \emptyset$
\FOR{$i = 1$ to $N_{\text{det}}$}
    \STATE $\mathcal{R}_i \leftarrow$ pixels within bounding box $\hat{b}_i$
    \STATE $d_{i,\text{norm}} \leftarrow 1 - \dfrac{\frac{1}{|\mathcal{R}_i|} \sum_{(h,w) \in \mathcal{R}_i} D_n(h,w) - \min(D_n)}{\max(D_n) - \min(D_n)}$
    \STATE $\tau_i \leftarrow \text{clip}\left(\tau_0 - \sum_{m=1}^{J} \psi^*_m B_m(d_{i,\text{norm}}), 0, 1\right)$
    \IF{$s_i \geq \tau_i$}
        \STATE Filtered $\leftarrow$ Filtered $\cup \; \{\hat{b}_i\}$
    \ENDIF
\ENDFOR
\RETURN Filtered
\end{algorithmic}
\end{algorithm}

\subsection{Hyperparameters}
\label{subsec:hyperparameters}
DepthPrior introduces several hyperparameters that control the strength of depth-aware adjustments. Ablations are in Supplementary Material Section~H. For DLW, $\alpha$ controls reweighting strength; we recommend $\alpha = 1.0$. For DLS, $\beta$ sets the close/distant boundary; $\lambda_{\text{distant}} > \lambda_{\text{close}}$ compensates for training bias. Defaults: $\beta = 0.5$, $\lambda_{\text{close}} = 1.0$, $\lambda_{\text{distant}} = 2.0$. For DCT,  $J$ controls the flexibility of the threshold curve. Defaults: $J = 10$ knots over $[0, 0.9]$, $\epsilon = 0.1$, $\gamma=1000$, minimum threshold $\rho = 0.1$.

\subsection{Unified Perspective}
\label{subsec:unified_perspective}
Under the assumption that weighting functions are parameter-independent ($\frac{\partial w(d)}{\partial \theta} = 0$ and $\frac{\partial \lambda_k}{\partial \theta} = 0$), both DLW and DLS reweight gradients: $\nabla_\theta \mathcal{L}_{\text{DLW}} = w(d) \cdot \nabla_\theta \mathcal{L}_{\text{base}}$ and $\nabla_\theta \mathcal{L}_{\text{DLS}} = \sum_k \mathbbm{1}_{d \in S_k} \lambda_k \cdot \nabla_\theta \mathcal{L}_{\text{base}}$. DCT complements by correcting residual depth-dependent confidence biases at inference. Together, DepthPrior addresses depth-induced heteroscedasticity throughout the pipeline without architectural modifications.
% %%%%%%%%% EXPERIMENTS
\section{Experiments} \label{sec:experiments}
We empirically investigate the depth-detection relationship (Section~\ref{sec:depth_detection_analysis}), then present experiments on depth-aware training (Section~\ref{sec:depth_aware_training}), inference (Section~\ref{sec:dct_results}), and their combination (Section~\ref{sec:dwl_dct}). Experiments are conducted with the following settings.

\subsection{Implementation Details} \label{sec:implementation_details}
\textbf{Detectors.} We evaluate EfficientDet-D0~\cite{tan2020efficientdet} (anchor-based, filtering after Non-Maximum Suppression (NMS)) and YOLOv11s~\cite{Jocher_Ultralytics_YOLO_2023} (anchor-free, filtering before NMS). Architecture details in Supplementary Material Section~D.

\textbf{Datasets.} KITTI~\cite{geiger2012we} (automotive), VisDrone~\cite{du2019visdrone} (aerial), SUN RGB-D~\cite{song2015sun} (indoor), and MS COCO~\cite{lin2014microsoft} (general photography). These exhibit distinct depth distributions: KITTI far-field concentration, VisDrone mid-range, SUN RGB-D linear growth, MS COCO bimodal. Details in Supplementary Material Section~B.

\textbf{Training.} EfficientDet: 200 epochs, batch 8, input resolution $512\times256$. YOLOv11: 100 epochs, batch 64, input resolution $640\times640$. Data augmentation is disabled to avoid complications with depth maps. Both models use pre-trained weights from MS COCO~\cite{lin2014microsoft}, except for experiments on MS COCO, where we train from scratch. For DCT, training uses $10\%$ of the datasets in order to evaluate generalization capabilities on the remaining held-out $90\%$ (max $10,000$ images). Validation sets comprise $20\%$ of KITTI and SUN RGB-D (randomly selected) and default validation splits for VisDrone and MS-COCO. We select an IoU threshold of $0.5$ for matching predictions to GT annotations. Results represent averages across three independent training runs. 

\textbf{Depth Estimation.} For depth estimation, we use Depth Anything~\cite{yang2024depth} with default hyperparameters.

\textbf{Evaluation Metrics.} We follow standard MS COCO evaluation~\cite{lin2014microsoft}, reporting mean Average Precision (mAP) averaged across IoU thresholds $[0.5\!:\!0.05\!:\!0.95]$ and mAP$_{50}$ at $\text{IoU}=0.5$. Objects are sorted by pixel area into small (S, area$<32^2$), medium (M, $32^2\leq$area$<96^2$), and large (L, area$\geq96^2$), with corresponding precision (mAP$_{S/M/L}$) and recall (mAR$_{S/M/L}$) metrics. DCT experiments additionally track true detections (TD) and extra detections (ED) to measure precision-recall trade-offs from threshold adjustments. Extra detections are either false detections or detections of unlabeled objects (missing GT annotations).

\subsection{Depth-Detection Relationship Analysis}
\label{sec:depth_detection_analysis}
We first investigate the relationship between object depth and detector behavior.

\subsubsection{Threshold-Independent Error Analysis}
Figure~\ref{fig:error_distribution_main} compares GT depth distributions against missing detections (MD) for EfficientDet across all datasets. MD rates are consistently higher at depth $d > 0.7$ across all datasets. Since MD are objects the detector misses entirely, i.e., the detector produces no detection at all or produces a detection with a confidence score below $0.1$. Thus, MD cannot be recovered by adjusting the confidence threshold in post-processing during inference. This motivates our training-time interventions DLW and DLS.

\begin{figure}[htbp]
    \centering
        \includegraphics[width=\linewidth]{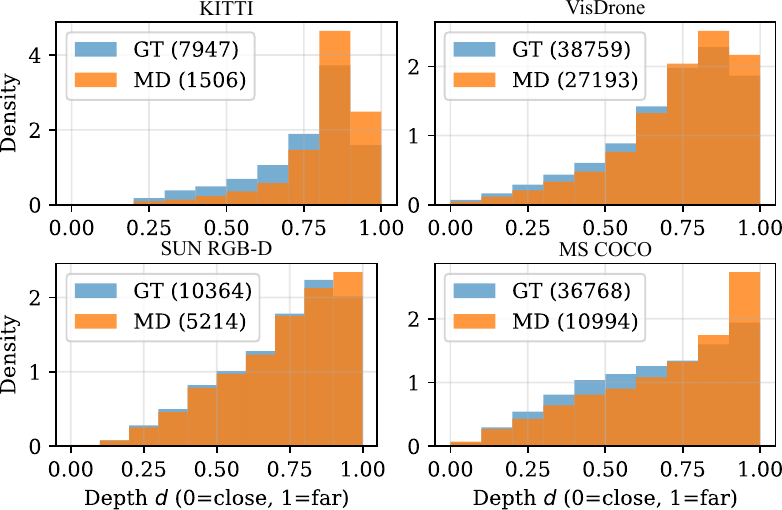}
    \caption{Depth distribution of all GT objects (blue) vs.~MD (orange) for EfficientDet on validation data. Object counts shown in parentheses.}
    \label{fig:error_distribution_main}
\end{figure}

\subsubsection{Threshold-Dependent Error Analysis}
With training-time interventions motivated, we now examine inference-time interventions, i.e., whether post-processing threshold adjustments can reduce detection errors. Figure~\ref{fig:threshold_analysis_main} compares MD and ED across depths at $\tau_0=0.4$ and $\tau_0=0.9$ for EfficientDet on the validation set. The varying MD-ED balance demonstrates that uniform thresholds cannot optimize both simultaneously. VisDrone and MS COCO show favorable MD-ED ratios at mid-range depths ($0.1-0.6$). Hence, lowering thresholds recovers MD without proportional increases in ED. However, at far depths ($> 0.8$), threshold reduction may substantially inflate ED. This pattern holds across threshold settings, though with different magnitudes. At $\tau_0 = 0.9$, more MD reduction potential exists in far ranges ($0.8-1.0$) of KITTI because lowering the threshold still maintains sufficient confidence to suppress ED. Meanwhile, at $\tau_0 = 0.4$, the same threshold reduction risks accepting low-confidence ED, as the model uncertainty increases. This threshold-dependent behavior varies systematically with depth, motivating DCT's depth-specific thresholding curves. Cross-architecture validation in Supplementary Material Section~E confirms these patterns.
\begin{figure}[htbp]
    \centering
    \includegraphics[width=\linewidth]{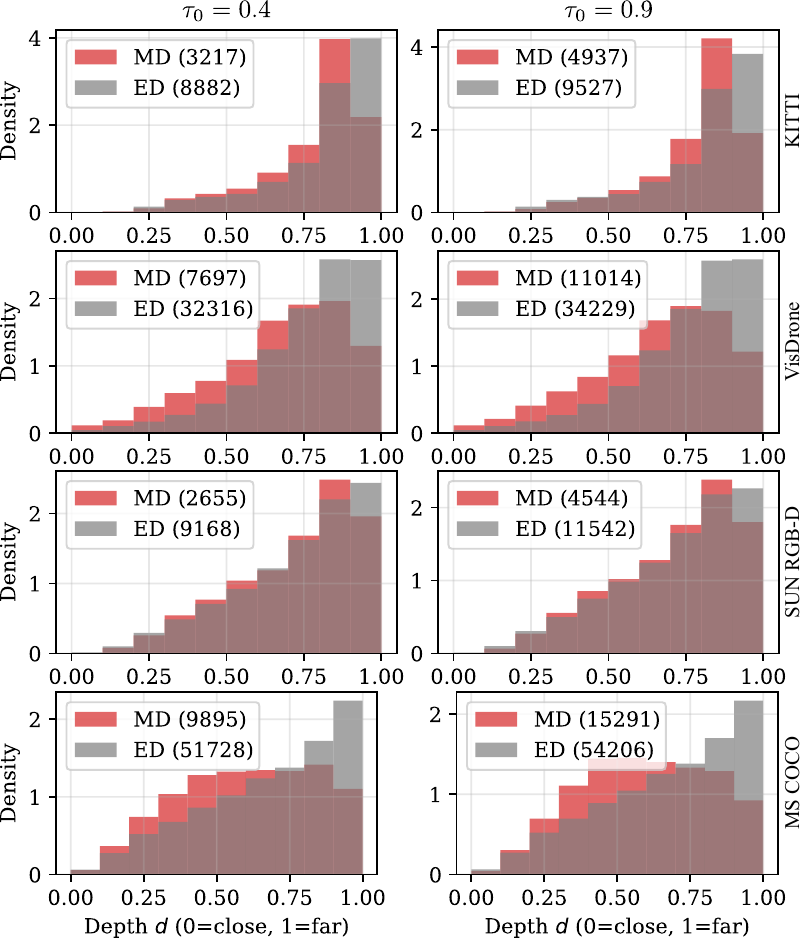}
    \caption{Depth-dependent error distributions (MD, red and ED, gray) at confidence thresholds $\tau_0 = 0.4$ (left) and $\tau_0 = 0.9$ (right) for EfficientDet on the validation set.}
    \label{fig:threshold_analysis_main}
\end{figure}

\subsubsection{Confidence-Depth Relationship}
While the threshold-dependent analysis reveals depth-specific MD-ED trade-offs, it does not account for the score distributions underlying these errors. Detections at the same depth may exhibit different confidence scores, potentially enabling selective recovery based on confidence-depth patterns. 

\begin{figure}[htbp]
    \centering
    \includegraphics[width=\linewidth]{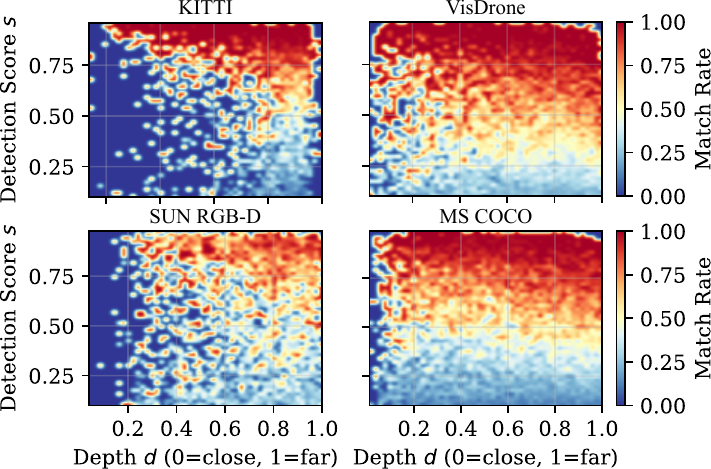}
    \caption{Match rate heatmaps for YOLOv11 on the validation set. Color indicates fraction of detections matching GT.}
    \label{fig:confidence_depth_main}
\end{figure}
Figure~\ref{fig:confidence_depth_main} shows match rates, the percentage of all detections that are actually matched to a GT, across confidence-depth space for YOLOv11 on the validation set. The heatmaps reveal systematic confidence degradation for distant objects across all datasets. High match rates (red/yellow regions) extend into low-confidence ranges (scores $0.3-0.6$) at far depths ($> 0.7$), indicating that many correctly detected distant objects receive inappropriately low confidence scores. YOLOv11 exhibits concentration of matches at high scores ($> 0.8$), but notable correctly detected objects exist at lower scores, especially at far depths. This confidence-depth structure enables selective MD recovery. For a user selecting $\tau_0 = 0.9$ on VisDrone, DCT can safely reduce the threshold for distant objects. The match rate remains high ($> 0.8$) down to a score of $0.5$ at far depths, indicating that threshold relaxation recovers MD without proportional ED inflation. In contrast, very low scores ($< 0.2$) show minimal match rates across all depths for both detectors, establishing a lower bound below which threshold reduction adds primarily ED. Cross-architecture validation in Supplementary Material Section~E confirms these patterns generalize to EfficientDet.

\subsection{Depth-Aware Training}\label{sec:depth_aware_training}
We establish with our theoretical considerations in Section~\ref{sec:dd_rel_methods} and empirical investigation in \ref{sec:depth_detection_analysis} the base for our DepthPrior framework. We now provide empirical evidence that depth-informed supervision helps mitigate the depth-induced bias in object detectors.

These baselines test whether (1) architectural depth integration helps 2D detection, (2) joint depth prediction improves performance, and (3) our specific weighting formulations are necessary beyond simple depth-proportional weighting. Tables~\ref{tab:depth_training_kitti}--\ref{tab:depth_training_mscoco} present the training results. 

\begin{table*}[htp]
    \centering
    \caption{Depth-aware training on KITTI with EfficientDet. DLW: $\alpha=1.0$. DLS: $\lambda_\text{close}=1$, $\lambda_\text{distant}=5$, $\beta = 75\%$.}
    \label{tab:depth_training_kitti}
    \resizebox{\textwidth}{!}{%
    \begin{tabular}{l|ccccc|cccc}
         \toprule
         \textbf{Method} & \textbf{mAP} & \textbf{mAP}$_{50}$ & \textbf{mAP}$_S$ & \textbf{mAP}$_M$ & \textbf{mAP}$_L$ & \textbf{mAR} & \textbf{mAR}$_S$ & \textbf{mAR}$_M$ & \textbf{mAR}$_L$ \\
        \midrule
        Baseline & $53.5_{\pm0.3}$ & $81.2_{\pm0.3}$ & $19.7_{\pm0.6}$ & $53.7_{\pm0.1}$ & $72.6_{\pm0.9}$ & $66.8_{\pm0.5}$ & $38.5_{\pm0.5}$ & $66.5_{\pm0.4}$ & $80.7_{\pm0.8}$ \\
        \midrule
        BW & $52.4_{\pm0.1}$ & $80.8_{\pm0.2}$ & $19.2_{\pm0.6}$ & $52.0_{\pm0.1}$ & $72.0_{\pm1.0}$ & $66.1_{\pm0.1}$ & $38.1_{\pm1.1}$ & $66.0_{\pm0.1}$ & $79.9_{\pm0.7}$ \\
        IW & $53.2_{\pm0.7}$ & $81.0_{\pm0.8}$ & $20.9_{\pm1.3}$ & $53.4_{\pm0.6}$ & $72.5_{\pm1.2}$ & $66.6_{\pm0.3}$ & $39.1_{\pm0.9}$ & $66.1_{\pm0.4}$ & $80.8_{\pm0.5}$ \\
        PD & $3.1_{\pm2.1}$ & $9.7_{\pm6.1}$ & $0.0_{\pm0.0}$ & $1.6_{\pm1.3}$ & $7.6_{\pm4.9}$ & $8.3_{\pm4.6}$ & $0.0_{\pm0.0}$ & $5.8_{\pm3.6}$ & $18.2_{\pm9.4}$ \\
        L & $16.3_{\pm1.9}$ & $31.8_{\pm3.5}$ & $2.4_{\pm1.0}$ & $13.1_{\pm2.7}$ & $31.3_{\pm2.7}$ & $28.2_{\pm3.0}$ & $9.7_{\pm3.1}$ & $26.5_{\pm3.5}$ & $42.9_{\pm3.2}$ \\
        CL & $15.0_{\pm1.7}$ & $28.6_{\pm2.7}$ & $1.1_{\pm0.2}$ & $12.3_{\pm1.7}$ & $29.2_{\pm2.7}$ & $25.7_{\pm2.2}$ & $6.7_{\pm2.6}$ & $24.8_{\pm2.2}$ & $39.4_{\pm2.0}$ \\
        \midrule
        \textbf{DLW} & $54.7_{\pm0.2}$ & $82.6_{\pm0.3}$ & $23.7_{\pm1.2}$ & $54.1_{\pm0.6}$ & $\mathbf{73.3_{\pm0.3}}$ & $67.6_{\pm0.1}$ & $41.3_{\pm1.8}$ & $67.3_{\pm0.4}$ & $\mathbf{81.1_{\pm0.3}}$ \\
        \textbf{DLS} & $\mathbf{55.1_{\pm0.5}}$ & $\mathbf{82.9_{\pm0.5}}$ & $\mathbf{28.4_{\pm0.5}}$ & $\mathbf{55.7_{\pm0.6}}$ & $71.3_{\pm0.2}$ & $\mathbf{68.3_{\pm0.2}}$ & $\mathbf{45.0_{\pm0.7}}$ & $\mathbf{68.6_{\pm0.1}}$ & $79.3_{\pm0.4}$ \\
         \bottomrule
    \end{tabular}%
    }
\end{table*}
\begin{table*}[htp]
    \centering
    \caption{Depth-aware training on SUN RGB-D with EfficientDet. DLW: $\alpha=1.0$. DLS: $\lambda_\text{close}=1$, $\lambda_\text{distant}=5$, $\beta = 25\%$.}
    \label{tab:depth_training_sunrgbd}
    \resizebox{\textwidth}{!}{%
    \begin{tabular}{l|ccccc|cccc}
         \toprule
         \textbf{Method} & \textbf{mAP} & \textbf{mAP}$_{50}$ & \textbf{mAP}$_S$ & \textbf{mAP}$_M$ & \textbf{mAP}$_L$ & \textbf{mAR} & \textbf{mAR}$_S$ & \textbf{mAR}$_M$ & \textbf{mAR}$_L$ \\
        \midrule
        Baseline & $17.6_{\pm0.3}$ & $31.3_{\pm0.3}$ & $\mathbf{2.0_{\pm0.2}}$ & $8.9_{\pm0.7}$ & $24.7_{\pm0.3}$ & $32.0_{\pm0.3}$ & $\mathbf{3.8_{\pm0.3}}$ & $20.8_{\pm0.6}$ & $43.7_{\pm1.8}$ \\
        \midrule
        BW & $18.5_{\pm0.2}$ & $33.2_{\pm0.2}$ & $1.3_{\pm0.3}$ & $9.3_{\pm0.3}$ & $26.6_{\pm0.4}$ & $\mathbf{36.4_{\pm0.6}}$ & $3.2_{\pm0.3}$ & $\mathbf{25.0_{\pm0.9}}$ & $\mathbf{47.3_{\pm0.9}}$ \\
        IW & $18.3_{\pm0.4}$ & $32.3_{\pm0.1}$ & $1.3_{\pm0.3}$ & $9.5_{\pm0.3}$ & $25.8_{\pm0.6}$ & $35.3_{\pm0.4}$ & $3.0_{\pm0.3}$ & $24.2_{\pm0.7}$ & $45.6_{\pm0.8}$ \\
        PD & $0.5_{\pm0.3}$ & $1.9_{\pm0.9}$ & $0.0_{\pm0.1}$ & $0.1_{\pm0.1}$ & $0.7_{\pm0.4}$ & $2.0_{\pm1.3}$ & $0.2_{\pm0.3}$ & $0.8_{\pm0.6}$ & $2.9_{\pm1.9}$ \\
        L & $12.7_{\pm0.5}$ & $23.2_{\pm1.0}$ & $0.5_{\pm0.2}$ & $5.5_{\pm0.3}$ & $18.6_{\pm0.7}$ & $27.7_{\pm1.2}$ & $1.4_{\pm0.4}$ & $16.5_{\pm1.1}$ & $37.2_{\pm0.4}$ \\
        CL & $10.2_{\pm0.2}$ & $19.7_{\pm0.5}$ & $0.5_{\pm0.1}$ & $4.3_{\pm0.5}$ & $14.9_{\pm0.3}$ & $24.4_{\pm1.0}$ & $0.9_{\pm0.2}$ & $13.3_{\pm0.1}$ & $33.6_{\pm1.9}$ \\
        \midrule
        \textbf{DLW} & $\mathbf{19.1_{\pm0.1}}$ & $\mathbf{33.6_{\pm0.2}}$ & $1.5_{\pm0.2}$ & $\mathbf{9.8_{\pm0.3}}$ & $\mathbf{27.0_{\pm0.1}}$ & $35.3_{\pm0.2}$ & $3.5_{\pm0.2}$ & $24.3_{\pm0.4}$ & $44.6_{\pm0.3}$ \\
        \textbf{DLS} & $18.7_{\pm0.4}$ & $32.8_{\pm0.7}$ & $1.6_{\pm0.3}$ & $9.4_{\pm0.5}$ & $26.6_{\pm0.7}$ & $35.2_{\pm0.7}$ & $3.5_{\pm0.3}$ & $24.2_{\pm0.8}$ & $45.1_{\pm1.1}$ \\
         \bottomrule
    \end{tabular}%
    }
\end{table*}
\begin{table*}[htp]
    \centering
    \caption{Depth-aware training on VisDrone with EfficientDet. PD omitted due to consistent poor performance on prior datasets. DLW: $\alpha=1.0$. DLS: $\lambda_\text{close}=1$, $\lambda_\text{distant}=5$, $\beta = 15\%$.}
    \label{tab:depth_training_visdrone}
    \resizebox{\textwidth}{!}{%
    \begin{tabular}{l|ccccc|cccc}
         \toprule
         \textbf{Method} & \textbf{mAP} & \textbf{mAP}$_{50}$ & \textbf{mAP}$_S$ & \textbf{mAP}$_M$ & \textbf{mAP}$_L$ & \textbf{mAR} & \textbf{mAR}$_S$ & \textbf{mAR}$_M$ & \textbf{mAR}$_L$ \\
        \midrule
        Baseline  & $6.2_{\pm0.1}$ & $11.2_{\pm0.2}$ & $0.9_{\pm0.0}$ & $9.5_{\pm0.2}$ & $33.4_{\pm0.9}$ & $9.3_{\pm0.2}$ & $2.7_{\pm0.1}$ & $15.2_{\pm0.4}$ & $40.8_{\pm0.5}$ \\
        \midrule
        BW & $6.1_{\pm0.1}$ & $11.0_{\pm0.2}$ & $0.9_{\pm0.0}$ & $9.1_{\pm0.3}$ & $\mathbf{35.7_{\pm1.7}}$ & $9.1_{\pm0.0}$ & $2.5_{\pm0.1}$ & $14.7_{\pm0.2}$ & $\mathbf{45.4_{\pm1.2}}$ \\
        IW & $6.1_{\pm0.1}$ & $11.0_{\pm0.1}$ & $0.9_{\pm0.1}$ & $9.3_{\pm0.4}$ & $34.2_{\pm0.8}$ & $9.2_{\pm0.1}$ & $2.6_{\pm0.1}$ & $14.8_{\pm0.2}$ & $43.3_{\pm1.0}$ \\
        L & $3.1_{\pm0.3}$ & $5.4_{\pm0.6}$ & $0.4_{\pm0.1}$ & $3.9_{\pm0.4}$ & $19.4_{\pm0.2}$ & $4.7_{\pm0.5}$ & $1.4_{\pm0.2}$ & $6.3_{\pm0.6}$ & $24.3_{\pm0.8}$ \\
        CL & $3.7_{\pm0.1}$ & $6.8_{\pm0.3}$ & $0.6_{\pm0.1}$ & $5.1_{\pm0.2}$ & $21.8_{\pm1.3}$ & $6.0_{\pm0.3}$ & $1.7_{\pm0.2}$ & $9.2_{\pm0.6}$ & $30.8_{\pm2.7}$ \\
        \midrule
        \textbf{DLW} & $6.5_{\pm0.1}$ & $\mathbf{12.1_{\pm0.3}}$ & $\mathbf{1.1_{\pm0.1}}$ & $\mathbf{10.0_{\pm0.1}}$ & $33.2_{\pm1.0}$ & $\mathbf{9.7_{\pm0.1}}$ & $\mathbf{2.9_{\pm0.0}}$ & $15.7_{\pm0.1}$ & $42.1_{\pm1.3}$ \\
        \textbf{DLS} & $\mathbf{6.6_{\pm0.2}}$ & $11.9_{\pm0.2}$ & $1.0_{\pm0.1}$ & $\mathbf{10.0_{\pm0.3}}$ & $33.3_{\pm1.2}$ & $\mathbf{9.7_{\pm0.2}}$ & $2.8_{\pm0.1}$ & $\mathbf{15.8_{\pm0.5}}$ & $40.8_{\pm0.2}$ \\
         \bottomrule
    \end{tabular}%
    }
\end{table*}
\begin{table*}[htp]
    \centering
     \caption{Depth-aware training on MS COCO with EfficientDet. Only DepthPrior methods evaluated against the vanilla baseline due to computational cost and established poor performance of baselines. DLW: $\alpha=1.0$. DLS: $\lambda_\text{close}=1$, $\lambda_\text{distant}=5$, $\beta = 25\%$.}
    \label{tab:depth_training_mscoco}
    \resizebox{\textwidth}{!}{%
    \begin{tabular}{l|ccccc|cccc}
         \toprule
         \textbf{Method} & \textbf{mAP} & \textbf{mAP}$_{50}$ & \textbf{mAP}$_S$ & \textbf{mAP}$_M$ & \textbf{mAP}$_L$ & \textbf{mAR} & \textbf{mAR}$_S$ & \textbf{mAR}$_M$ & \textbf{mAR}$_L$ \\
        \midrule
        Baseline  & $25.3_{\pm0.1}$ & $39.7_{\pm0.1}$ & $1.3_{\pm0.1}$ & $18.0_{\pm0.1}$ & $43.8_{\pm0.2}$ & $35.7_{\pm0.2}$ & $3.9_{\pm0.2}$ & $32.1_{\pm0.4}$ & $57.4_{\pm0.2}$ \\
        \midrule
        \textbf{DLW} & $\mathbf{25.7_{\pm0.4}}$ & $\mathbf{40.2_{\pm0.5}}$ & $1.5_{\pm0.2}$ & $\mathbf{18.8_{\pm0.3}}$ & $\mathbf{44.4_{\pm0.4}}$ & $\mathbf{36.4_{\pm0.3}}$ & $\mathbf{4.4_{\pm0.3}}$ & $\mathbf{33.4_{\pm0.7}}$ & $\mathbf{58.0_{\pm0.5}}$\\
        \textbf{DLS}  & $25.4_{\pm0.3}$ & $40.1_{\pm0.3}$ & $\mathbf{1.6_{\pm0.2}}$ & $18.7_{\pm0.4}$ & $43.5_{\pm0.4}$ & $36.3_{\pm0.4}$ & $\mathbf{4.4_{\pm0.3}}$ & $\mathbf{33.4_{\pm0.5}}$ & $57.4_{\pm0.3}$\\ 
         \bottomrule
    \end{tabular}%
     }
\end{table*}

\textbf{Architectural Integration.} Architecturally-integrated depth methods (L, CL, PD) fail across all datasets. On KITTI, they achieve $16.3\%$, $15.0\%$, and $3.1\%$ mAP vs.\ $53.5\%$ baseline. Similar patterns hold on SUN RGB-D ($12.7\%$, $10.2\%$, $0.5\%$ vs.\ $17.6\%$) and VisDrone ($3.1\%$, $3.7\%$ vs.\ $6.2\%$). Architectural depth integration creates optimization challenges and negative transfer that outweigh geometric benefits. In contrast, DLW and DLS operate at the loss level, requiring only depth maps alongside GT labels.

\textbf{Simple Weighting.} Naive depth-proportional weighting (BW, IW) shows inconsistent results. BW reduces performance on KITTI ($-1.1\%$ mAP) and VisDrone ($-0.1\%$), but slightly increases it on SUN RGB-D ($+0.9\%$). IW performs slightly better but remains unreliable: $-0.3\%$ (KITTI), $+0.7\%$ (SUN RGB-D), $-0.1\%$ (VisDrone). 

Adding our exponential transformation improves performance. BW achieves $54.0\%$ mAP on KITTI ($+0.5\%$) and IW reaches $54.2\%$ mAP ($+0.7\%$). These results approach DLW (KITTI: $54.7\%$). However, the gap between averaging-based approaches and per-object weighting remains consistent, demonstrating that per-object granularity matters alongside non-linear weighting. 

\textbf{Our Methods.} Both DLW and DLS improve performance across all datasets. KITTI shows largest improvements (Table~\ref{tab:depth_training_kitti}), consistent with its far-field concentration and strong size-depth correlation ($r=-0.741^{***}$, Supplementary Material Section~C). DLW: $+1.2\%$ mAP ($54.7\%$), $+4.0\%$ mAP$_S$ ($23.7\%$). DLS: $+1.6\%$ mAP ($55.1\%$), $+8.7\%$ mAP$_S$ ($28.4\%$), a 44\% relative improvement in small object precision. SUN RGB-D (Table~\ref{tab:depth_training_sunrgbd}): DLW $+1.5\%$ mAP ($19.1\%$), DLS $+1.1\%$ ($18.7\%$). The bounded indoor depth range leaves less room for precision improvement, yet small object recall improves ($+3.5\%$ mAR$_S$). VisDrone (Table~\ref{tab:depth_training_visdrone}): Despite weaker size-depth correlation ($r=-0.275^{***}$) from aerial perspective, DLW achieves $+0.3\%$ mAP ($6.5\%$) and DLS $+0.4\%$ ($6.6\%$), with improvements on small/medium objects (mAP$_S$: $+0.2\%$, mAP$_M$: $+0.5\%$, mAR$_M$: $+0.5-0.6\%$). This indicates that depth-informed supervision is beneficial even when geometric relationships are less pronounced and detector performance is low. MS COCO (Table~\ref{tab:depth_training_mscoco}): DLW $+0.4\%$ mAP ($25.7\%$), DLS $+0.1\%$ ($25.4\%$), with $+0.2$--$0.3\%$ mAP$_S$ improvements despite weak correlation ($r=-0.286^{***}$).

Results across all benchmarks suggest depth-induced heteroscedasticity is a common challenge, and depth-informed supervision helps mitigate it. The selected configurations prioritize balanced performance. Ablations in Supplementary Material Section~H show that alternative hyperparameters enable stronger small object focus when needed. Note that COCO-style evaluation uses size bins (area thresholds) rather than depth. 

\textbf{DLW vs.\ DLS.} DLW and DLS exhibit complementary strengths. DLW's continuous weighting suits complex depth distributions, achieving higher overall mAP on 3/4 datasets (SUN RGB-D: $19.1\%$ vs.~$18.7\%$, VisDrone: $6.5\%$ vs.~$6.6\%$, MS COCO: $25.7\%$ vs.~$25.4\%$). DLS excels when clear depth boundaries exist (KITTI: $55.1\%$ vs.\ $54.7\%$), improving performance on small objects over DLW on 3/4 datasets (KITTI mAP$_S$ $28.4\%$ vs.~$23.7\%$, SUN RGB-D mAP$_S$ $1.6\%$ vs.~$1.5\%$, MS COCO mAP$_S$ $1.6\%$ vs.~$1.5\%$) at the cost of some large object performance (KITTI mAP$_L$: $71.3\%$ vs.~DLW:~$73.3\%$, mAR$_L$: $79.3\%$ vs. DLW:~$81.1\%$). This reflects the challenge in depth-informed training. Balancing limited model capacity toward difficult examples necessarily reduces emphasis on easier examples. Use DLW for balanced performance with minimal tuning and DLS when small/distant detection is mission-critical, clear depth intervals exists, and hyperparameter tuning is possible. 

\subsubsection{Cross-Architecture Validation: YOLOv11}
Table~\ref{tab:yolov11_kitti_main} validates DepthPrior on YOLOv11. DLW achieves $+1.0\%$ mAP ($49.4\%$ vs.~$48.4\%$ baseline), $+1.5\%$ over BW ($47.9\%$) and $+0.8\%$ over IW ($48.6\%$). DLS maximizes small objects with $34.4\%$ mAP$_S$ ($+3.4\%$ over baseline) and $41.2\%$ mAR$_S$ ($+3.1\%$ over baseline). Improvements are smaller than EfficientDet, suggesting anchor-free detectors already partially address small object challenges. This is also highlighted by YOLOv11 achieving lower baseline mAP than EfficientDet on KITTI ($48.4\%$ vs.~$53.5\%$), but stronger small object baseline ($31.0\%$ vs.~$19.7\%$ mAP$_S$). Despite these differences, both architectures exhibit the same depth-dependent degradation patterns. 

\begin{table*}[t]
    \centering
    \caption{Cross-architecture validation on KITTI using YOLOv11. DLW: $\alpha=1.0$. DLS: $\lambda_\text{close}=1$, $\lambda_\text{distant}=5$, $\beta = 75\%$.}
    \label{tab:yolov11_kitti_main}
    \resizebox{\textwidth}{!}{%
    \begin{tabular}{l|ccccc|cccc}
         \toprule
         \textbf{Method} & \textbf{mAP} & \textbf{mAP}$_{50}$ & \textbf{mAP}$_S$ & \textbf{mAP}$_M$ & \textbf{mAP}$_L$ & \textbf{mAR} & \textbf{mAR}$_S$ & \textbf{mAR}$_M$ & \textbf{mAR}$_L$ \\
        \midrule
        Baseline & $48.4_{\pm0.3}$ & $77.8_{\pm0.4}$ & $31.0_{\pm1.8}$ & $52.4_{\pm0.3}$ & $52.9_{\pm0.8}$ & $58.0_{\pm0.3}$ & $38.1_{\pm1.2}$ & $60.8_{\pm0.6}$ & $63.3_{\pm0.3}$ \\
        \midrule
        BW & $47.9_{\pm1.0}$ & $78.1_{\pm0.8}$ & $31.3_{\pm2.5}$ & $51.5_{\pm1.2}$ & $52.8_{\pm0.4}$ & $58.1_{\pm0.9}$ & $39.5_{\pm1.5}$ & $61.0_{\pm0.7}$ & $63.2_{\pm1.6}$ \\
        IW & $48.6_{\pm0.2}$ & $77.9_{\pm0.5}$ & $31.5_{\pm1.9}$ & $52.0_{\pm0.5}$ & $52.9_{\pm0.9}$ & $58.2_{\pm0.5}$ & $38.6_{\pm1.5}$ & $60.9_{\pm0.1}$ & $63.6_{\pm1.7}$ \\
        \midrule
        \textbf{DLW} & $\mathbf{49.4_{\pm0.4}}$ & $\mathbf{78.8_{\pm0.9}}$ & $33.3_{\pm0.2}$ & $\mathbf{53.3_{\pm0.6}}$ & $\mathbf{53.8_{\pm1.0}}$ & $\mathbf{59.2_{\pm0.1}}$ & $40.0_{\pm0.3}$ & $\mathbf{62.3_{\pm0.3}}$ & $\mathbf{63.8_{\pm0.7}}$ \\
        \textbf{DLS} & $47.4_{\pm0.8}$ & $77.8_{\pm1.5}$ & $\mathbf{34.4_{\pm0.5}}$ & $51.3_{\pm0.7}$ & $49.9_{\pm1.0}$ & $58.7_{\pm0.4}$ & $\mathbf{41.2_{\pm0.8}}$ & $61.4_{\pm0.9}$ & $63.2_{\pm0.6}$ \\
         \bottomrule
    \end{tabular}%
    }
\end{table*}
\subsection{Depth-Aware Inference}
\label{sec:dct_results}
We now demonstrate that depth-related inference patterns can be exploited through learned depth-aware thresholds. Baseline mAP performance across detectors on 10\% of the datasets varies from $3.7\%$ (EfficientDet on VisDrone) to $47.5\%$ (YOLOv11 on KITTI), testing DCT under diverse conditions (see Supplementary Material Section~G for details).

The learned threshold functions exhibit dataset-specific and architecture-specific characteristics that reflect the empirical patterns established in Section~\ref{sec:depth_detection_analysis}.  EfficientDet's dispersed confidence distribution allows a stronger threshold relaxation at far depths, while YOLOv11's peaked score distribution (concentrated at scores $>0.75$) results in more conservative adjustments. The magnitude and shape of adjustments vary across reference thresholds $\tau_0$, with higher thresholds allowing a stronger relaxation due to larger margins before ED inflation. Detailed curves, Pareto fronts, and per-threshold breakdowns are provided in the Supplementary Material Section~G.

Table~\ref{tab:dct_validation} quantifies DCT's performance on each of the validation data used for spline optimization. Aggregated across datasets and architectures: $+6,041$ TD with only $+184$ ED (33:1 ratio). KITTI shows strongest improvements for EfficientDet ($+1,427$ TD, $-21$ ED); VisDrone reduces ED ($-896$) while increasing TD. DCT on SUN RGB-D and MS COCO results in moderate improvements.

Architecture-specific differences reflect the adaptive nature of DCT's optimization. For YOLOv11, DCT shows larger total TD increase on VisDrone ($+1,778$) compared to EfficientDet ($+289$), though with correspondingly larger ED increases ($+302$ vs.~$-896$). Anchor-based EfficientDet's reliance on predefined anchor boxes makes it vulnerable to far-field detection failures where scale variation exceeds anchor coverage. For YOLOv11 on SUN RGB-D and MS COCO, DCT achieves stronger TD increase ($+579$ and $+926$ for each dataset, respectively) than EfficientDet ($+424$, $+474$) despite its concentration of matches at high confidence. This demonstrates that depth-aware thresholding remains effective even when the detector already performs well on most objects.

\begin{table}[htbp]
    \centering
\caption{DCT performance on validation set comparing static thresholding (TD, ED) against depth-aware thresholds (TD$^*$, ED$^*$) for EfficientDet (top) and YOLOv11 (bottom).}
    \label{tab:dct_validation}
\resizebox{\columnwidth}{!}{%
\begin{tabular}{l|rrrr|rr}
\toprule
Dataset & TD & ED & $\text{TD}^*$ & $\text{ED}^*$ & $\Delta_{TD}\uparrow$ & $\Delta_{ED}\downarrow$ \\
\midrule
KITTI & $36,706$ & $14,230$ & $38,133$ & $14,209$ & $1,427$ & $-21$ \\
VisDrone & $40,516$ & $59,888$ & $40,805$ & $58,992$ & $289$ & $-896$ \\
SUN RGB-D & $26,519$ & $26,982$ & $26,943$ & $27,178$ & $424$ & $196$ \\
MS COCO & $52,729$ & $70,508$ & $53,203$ & $70,648$ & $474$ & $140$ \\
\midrule
KITTI & $52,721$ & $8,991$ & $52,865$ & $9,026$ & $144$ & $35$ \\
VisDrone & $111,121$ & $35,227$ & $112,899$ & $35,529$ & $1,778$ & $302$ \\
SUN RGB-D & $26,227$ & $16,214$ & $26,806$ & $16,288$ & $579$ & $74$ \\
MS COCO & $139,062$ & $66,034$ & $139,988$ & $66,388$ & $926$ & $354$ \\
\midrule
$\sum$ & $485,601$ & $298,074$ & $491,642$ & $298,258$ & $6,041$ & $184$ \\
\bottomrule
\end{tabular}%
}
\end{table}

Following Section~\ref{sec:implementation_details}, we evaluate DCT on the held-out $90\%$ inference split that was excluded from both model training and threshold optimization. Table~\ref{tab:dct_generalization} shows effective generalization. The learned depth-threshold relationships transfer to unseen data, achieving $22,042$ additional TD with $2,412$ additional ED across all conditions (an approximately $9:1$ favorable ratio). DCT on VisDrone shows strong generalization. For YOLOv11, it achieves $+12,042$ TD on inference (compared to $+1,778$ on validation), while for EfficientDet, it maintains its unique pattern of ED reduction ($-4,664$ on inference vs.~$-896$ on validation). On KITTI, it demonstrates stable far-field recovery for EfficientDet maintaining an increase of $+1,681$ TD on inference vs.~$+1,427$ on validation. For YOLOv11, it shows more conservative improvements ($+359$ vs.~$+144$) consistent with its peaked confidence distribution. DCT on indoor (SUN RGB-D) and photographic (MS COCO) datasets exhibits moderate but consistent generalization. For YOLOv11, it maintains its relative improvements ($+1,861$, $+1,995$ TD) compared to EfficientDet ($+1,118$, $+1,042$ TD). This consistency confirms that the depth-dependent confidence degradation patterns represent properties of detector-dataset combinations rather than split-specific artifacts.
 
\begin{table}[htbp]
    \centering
\caption{DCT generalization on held-out inference data ($90\%$) unseen during training and threshold optimization. EfficientDet (top), YOLOv11 (bottom).}
    \label{tab:dct_generalization}
\resizebox{\columnwidth}{!}{%
\begin{tabular}{l|rrrr|rr}
\toprule
Dataset & TD & ED & $\text{TD}^*$ & $\text{ED}^*$ & $\Delta_{TD}\uparrow$ & $\Delta_{ED}\downarrow$ \\
\midrule
KITTI & $186,914$ & $42,489$ & $188,595$ & $43,309$ & $1,681$ & $820$ \\
VisDrone & $286,812$ & $625,026$ & $288,756$ & $620,362$ & $1,944$ & $-4,664$ \\
SUN RGB-D & $90,889$ & $99,878$ & $92,007$ & $100,935$ & $1,118$ & $1,057$ \\
MS COCO & $105,530$ & $136,724$ & $106,572$ & $137,185$ & $1,042$ & $461$ \\
\midrule
KITTI & $187,863$ & $32,723$ & $188,222$ & $32,918$ & $359$ & $195$ \\
VisDrone & $908,910$ & $269,394$ & $920,952$ & $272,484$ & $12,042$ & $3,090$ \\
SUN RGB-D & $91,035$ & $57,737$ & $92,896$ & $58,414$ & $1,861$ & $677$ \\
MS COCO & $299,803$ & $121,907$ & $301,798$ & $122,683$ & $1,995$ & $776$ \\
\midrule
$\sum$ & $2,157,756$ & $1,385,878$ & $2,179,798$ & $1,388,290$ & $22,042$ & $2,412$ \\
\bottomrule
\end{tabular}%
}
\end{table}

Figure~\ref{fig:dct_characteristics_score} analyzes which objects DCT recovers by visualizing size-depth distributions for KITTI and VisDrone. DCT's recovered objects differ not only geometrically but also in their score characteristics. KITTI's pattern supports the core theoretical hypothesis: far depth $\rightarrow$ small size (via perspective) $\rightarrow$ low confidence (via perceptual difficulty) $\rightarrow$ recovery opportunity (via learned depth-aware threshold). VisDrone's broader pattern confirms that DCT adapts to dataset-specific characteristics rather than assuming universal depth-difficulty relationships. As the threshold decreases on VisDrone and given YOLOv11's peaked score distribution, the recoverable detections shift toward closer objects.
\begin{figure}
        \centering
        \includegraphics[width=0.8\linewidth]{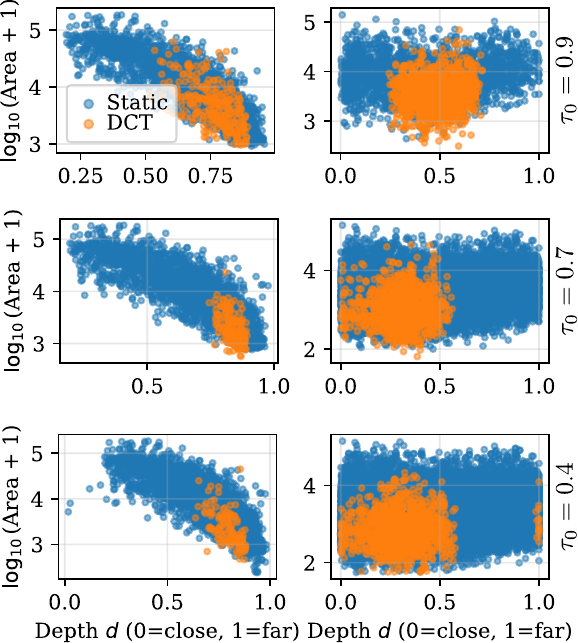}
        \caption{Static (blue) vs.\ DCT-recovered detections (orange) on inference data. Left: KITTI with EfficientDet. Right: VisDrone with YOLOv11.}
    \label{fig:dct_characteristics_score}
    \end{figure}

Figure~\ref{fig:dct_examples} provides qualitative examples of DCT for EfficientDet on KITTI and YOLOv11 on VisDrone. The visualizations show detection confidence scores and applied thresholds (original $\tau_0$ for static, depth-adjusted $\tau(d)$ for DCT), with green boxes indicating static detections and orange boxes showing additional DCT recoveries. Distant pedestrians at depth $>0.8$ in KITTI receive confidence scores $0.53-0.66$, falling below the static threshold $\tau_0=0.70$ but correctly identified as matches. DCT's learned curves reduce the threshold to $0.48-0.59$ at these depths, recovering the detections. VisDrone aerial scenes show broader-range recovery consistent with the quantitative analysis. Additional qualitative examples on MS COCO and SUN RGB-D are in Supplementary Material Section~G.

\begin{figure}[htp]
        \centering
    \includegraphics[width=1\linewidth]{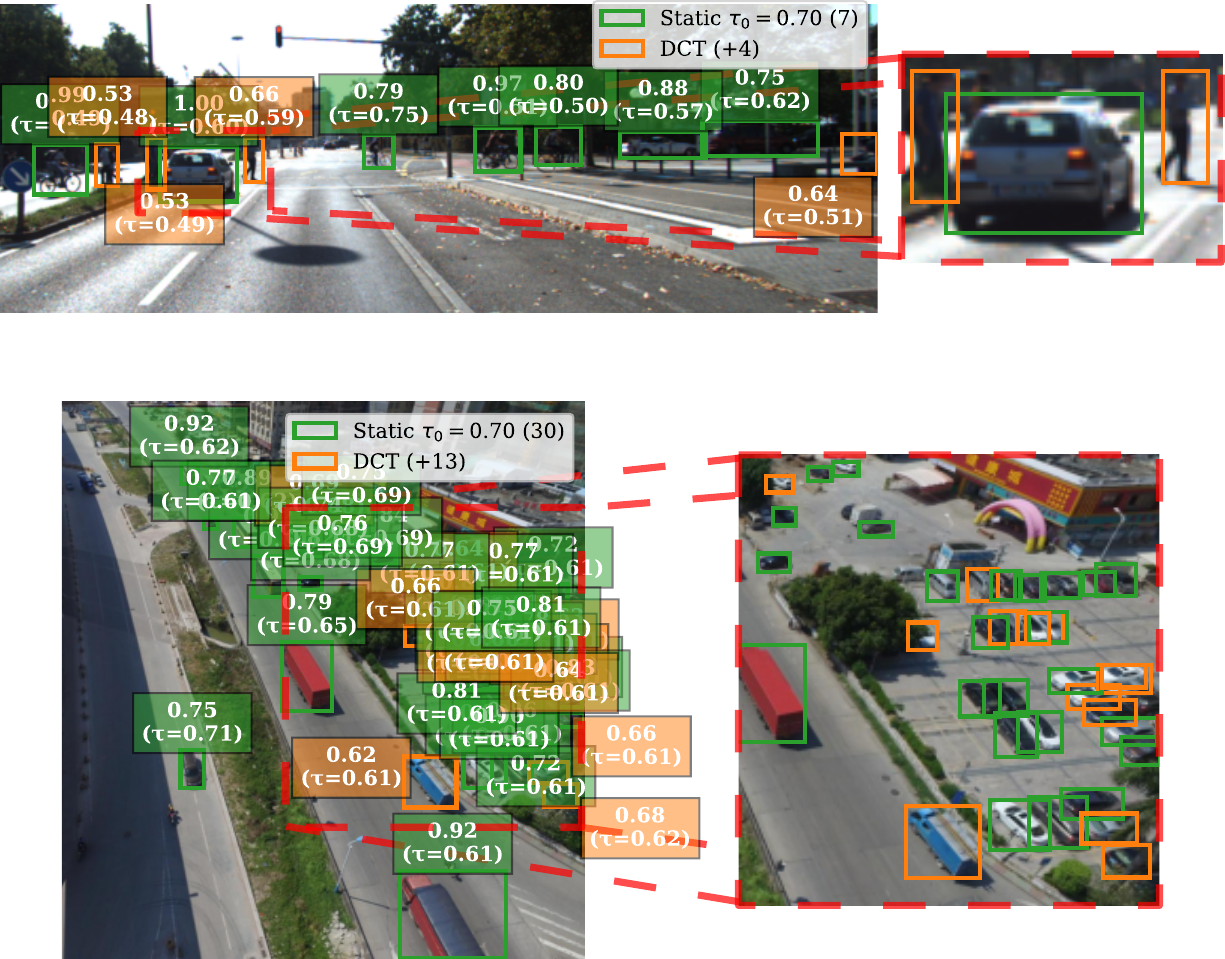}
    \caption{DCT examples on inference data. Top: EfficientDet on KITTI at $\tau_0=0.70$. Bottom: YOLOv11 on VisDrone at $\tau_0=0.70$. Green boxes: static detections. Orange boxes: DCT recoveries. Numbers show confidence scores and applied thresholds.}
    \label{fig:dct_examples}
\end{figure}

\subsection{DepthPrior}
\label{sec:dwl_dct}
Having established the effectiveness of depth-informed training (DLW, DLS) and depth-aware inference (DCT) separately, we now investigate their combination. When training on $10\%$ of KITTI (598 images), the baseline with vanilla training achieves $30.3\%$ mAP while DLW achieves $31.9\%$ mAP ($+1.6\%$). This demonstrates DepthPrior in low-data regimes. Small object improvements range from $7.1\%$ to $9.1\%$ mAP$_S$ ($+2.0\%$). 

Table~\ref{tab:dwl_dct_combined} combines DLW ($\alpha=1.0$) with DCT on KITTI. On the validation set, DLW+DCT achieves $+444$ TD at $\tau_0=0.9$ compared to DCT-only's $+230$ TD (+$93\%$) with minimal ED cost ($+1$ vs.~$0$). This pattern is however not consistent across mid- and low-range thresholds. At $\tau_0=0.9$ on the held-out set, DLW+DCT achieves $+1525$ TD with only $+16$ ED, outperforming DCT-only ($+809$ TD with $+54$ ED). This represents an $88\%$ improvement in TD recovery with $70\%$ reduction in ED inflation, showing that both components complement each other. The pattern persists across thresholds. At $\tau_0=0.5$, DLW+DCT recovers $+804$ TD vs.~DCT-only's $+226$ TD, while maintaining comparable ED control ($+66$ vs.~$+58$). DLW+DCT eliminates negative deltas present in DCT-only at $\tau_0\in\{0.4, 0.7\}$, indicating depth-aware training produces better-calibrated scores for DCT to exploit.

These results establish DepthPrior as a complete depth-aware detection system that maximizes performance through coordinated training and inference interventions, supporting the core hypothesis that depth-based supervision provides benefits without architectural modifications. 

\begin{table}[t]
\centering
\caption{Combined DLW+DCT performance on KITTI. $\Delta_{\text{TD}}$ and $\Delta_{\text{ED}}$ relative to static thresholding.}
\label{tab:dwl_dct_combined}
    \resizebox{\linewidth}{!}{%
\begin{tabular}{l|rrrr|rrrr}
\toprule
& \multicolumn{4}{c|}{\textbf{Baseline + DCT}} & \multicolumn{4}{c}{\textbf{DLW + DCT}} \\
\cmidrule(lr){2-5} \cmidrule(lr){6-9}
& \multicolumn{2}{c}{Validation} & \multicolumn{2}{c|}{Inference} & \multicolumn{2}{c}{Validation} & \multicolumn{2}{c}{Inference} \\
$\tau_0$ & $\Delta_{TD}$ & $\Delta_{ED}$ & $\Delta_{TD}$ & $\Delta_{ED}$ & $\Delta_{TD}$ & $\Delta_{ED}$ & $\Delta_{TD}$ & $\Delta_{ED}$ \\
\midrule
$0.90$ & $+230$ & $0$ & $+809$ & $+54$ & $\mathbf{+444}$ & $+1$ & $\mathbf{+1525}$ & $+16$ \\
$0.80$ & $+173$ & $+3$ & $+407$ & $+91$ & $\mathbf{+224}$ & $\mathbf{-3}$ & $\mathbf{+690}$ & $+17$ \\
$0.70$ & $\mathbf{+260}$ & $+1$ & $-257$ & $+116$ & $+153$ & $+3$ & $\mathbf{+508}$ & $+33$ \\
$0.60$ & $\mathbf{+315}$ & $+12$ & $\mathbf{+550}$ & $+97$ & $+89$ & $+6$ & $+364$ & $+22$ \\
$0.50$ & $+230$ & $+2$ & $+226$ & $+58$ & $\mathbf{+251}$ & $+15$ & $\mathbf{+804}$ & $+66$ \\
$0.40$ & $+155$ & $+24$ & $-2$ & $+189$ & $\mathbf{+160}$ & $+12$ & $\mathbf{+513}$ & $+63$ \\
$0.30$ & $+56$ & $+6$ & $+12$ & $-172$ & $+8$ & $\mathbf{-39}$ & $\mathbf{+67}$ & $\mathbf{-169}$ \\
$0.20$ & $+8$ & $\mathbf{-69}$ & $-64$ & $+387$ & $0$ & $0$ & $\mathbf{+10}$ & $+46$ \\
\bottomrule
\end{tabular}%
}
\end{table}

\textbf{Ablations.} Comprehensive ablations in Supplementary Material Section~H show: (1) DLW robust to $\alpha\in[0.5,1.5]$ with $<0.4\%$ mAP variation and the exponential transformation and inverse normalization are both necessary; (2) DLS requires $\lambda_{\text{distant}}>\lambda_{\text{close}}$; optimal $\beta$ reflects dataset geometry; (3) DCT's relative ratio objective achieves best TD-ED trade-off; (4) findings generalize across architectures and datasets.
% %%%%%%%%% Conclusion
\section{Conclusion} \label{sec:conclusion}
We present DepthPrior, a framework that treats depth as prior knowledge for improving object detection on distant and small objects. Our theoretical analysis shows that depth-induced heteroscedasticity creates systematic training bias toward nearby objects. Through three components: Depth-Based Loss Weighting (DLW), Depth-Based Loss Stratification (DLS), and Depth-Aware Confidence Thresholding (DCT), we demonstrate that depth-informed supervision exploits geometric priors without architectural modifications.

DepthPrior's effectiveness stems from three insights. First, treating depth as a prior rather than a fused feature avoids the modality imbalance affecting multi-modal architectures. Second, DLW's batch-level (inter-image) approach enables continuous weighting across diverse scenes, which is effective for datasets with complex depth distributions (SUN RGB-D, MS COCO, VisDrone. DLS's per-image (intra-image) stratification provides explicit control over close/distant trade-offs, which is most effective when datasets exhibit clear depth stratification (KITTI's far-field concentration). Third, DepthPrior addresses multiple failure modes in the detection pipeline. Training-time interventions (DLW and DLS) correct the systematic underrepresentation of distant objects in gradient flow. However, even well-trained detectors still exhibit depth-dependent confidence degradation due to the inherent perceptual difficulty of distant detection. DCT addresses this through learned post-processing that optimizes precision-recall trade-offs across the depth range without retraining. 

\section{Limitations and Future Directions}
DepthPrior relies on depth availability from sensors or monocular estimation. Although self-supervised depth estimation has advanced significantly~\cite{yang2024depth}, estimation errors remain a practical concern. While we provide comprehensive ablations, hyperparameter selection, particularly DLS's weight ratios and split factors, requires dataset-specific tuning. Improvements on small/distant objects create trade-offs with large object performance in some configurations. DCT's validation-set optimization introduces sensitivity to validation set size. Larger validation sets would improve generalization, but the high cost of manual labeling makes them difficult to obtain. Finally, while we use both anchor-based and anchor-free detectors, evaluation on transformer-based architectures remains future work.

Several directions extend this work. Learnable weighting strategies could replace fixed hyperparameters. For example, making $\alpha$ input-dependent for DLW, or learning $(\lambda_{\text{close}}, \lambda_{\text{distant}})$ and $\beta$ for DLS through uncertainty-based weighting~\cite{kendall2018multi}. Semi-supervised learning could leverage DepthPrior to improve pseudo-label quality in self-training~\cite{sbeytibuilding}. Depth-normalized evaluation metrics could provide fairer cross-dataset comparison than fixed area-based metrics. Finally, the paradigm of supervision-level integration may extend to other geometric cues (surface normals, optical flow) or semantic priors.

\section*{Acknowledgments}
This work has been funded by the Pilot Program for Core-Informatics (KiKIT) at the KIT of the Helmholtz Association. It was also supported by the Helmholtz Association Initiative and Networking Fund on the HAICORE@KIT partition. Additionally, this work acknowledges the support of Niklas Lutze in running the YOLOv11 experiments.

\bibliographystyle{IEEEtran}
\bibliography{main}

\newpage

\section{Biography Section}
\begin{IEEEbiography}[{\includegraphics[width=1in,height=1.25in,clip,keepaspectratio,trim={0 13cm 0 0}]{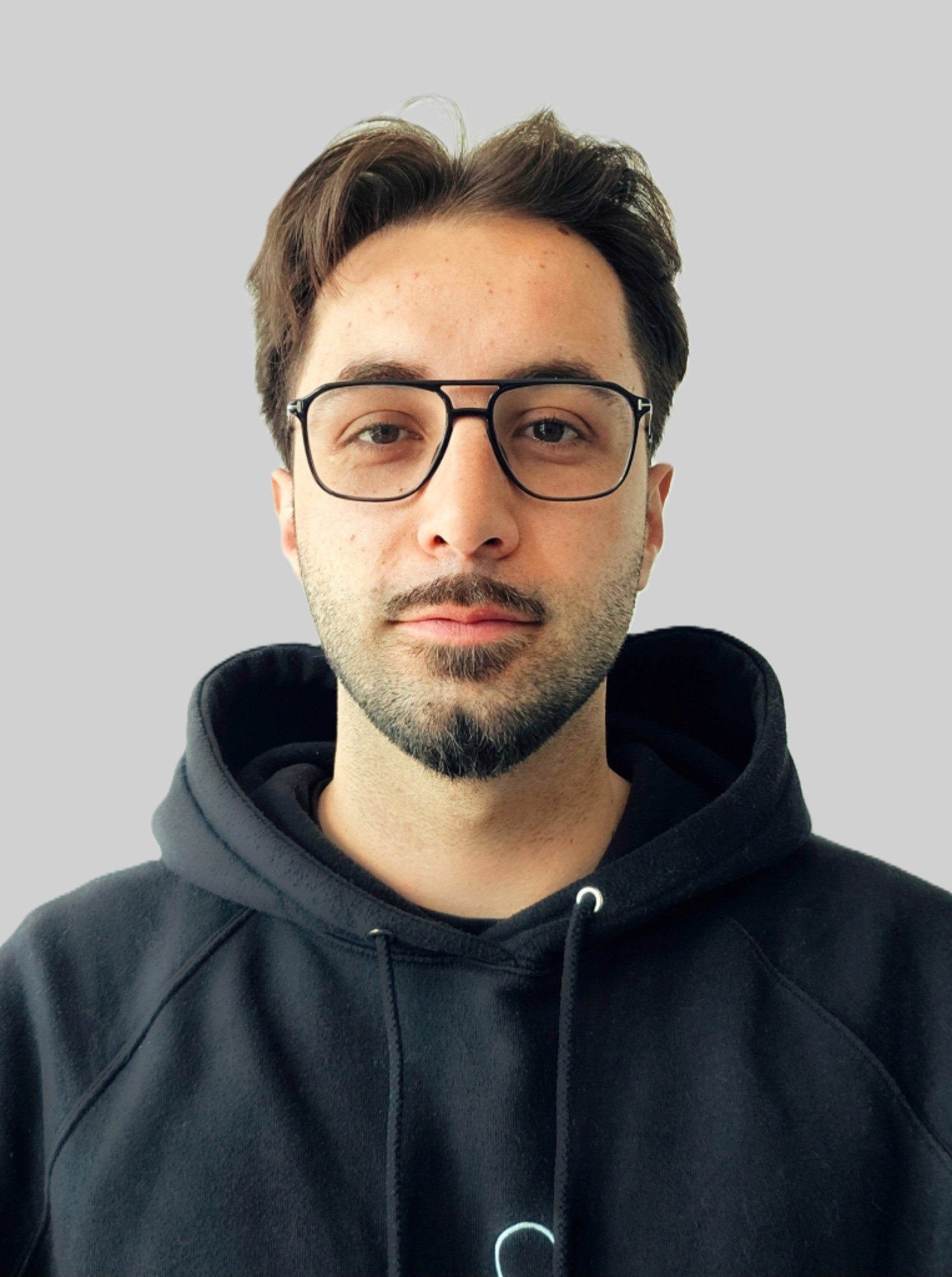}}]{Moussa Kassem Sbeyti}
received his B.Sc. and M.Sc. degrees in Mechatronics Engineering from the Technische Universität Darmstadt, Germany, in 2019 and 2021, respectively. He pursued his Ph.D. in Computer Science at the Technische Universität Berlin (TU Berlin) while working as an AI Research Scientist at the Continental AI Lab (currently part of AUMOVIO SE) in Berlin from 2022 to 2024. He is currently a Postdoctoral AI Researcher with the Methods for Big Data (MBD) research group at the Scientific Computing Center (SCC), Karlsruhe Institute of Technology (KIT), Germany. His research focuses on data-efficient learning for computer vision, particularly uncertainty-based automated labeling methods for 2D object detection in autonomous driving and safety-critical applications.
\end{IEEEbiography}

\begin{IEEEbiography}[{\includegraphics[width=1in,height=1.25in,clip,keepaspectratio,trim={0 1cm 0 0}]{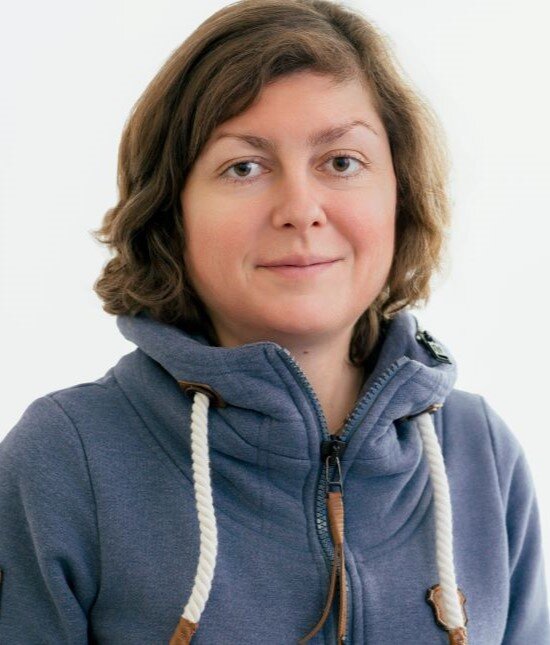}}]{Nadja Klein}
received her Ph.D. degree in Mathematics and Statistics from the Georg-August-Universität Göttingen, Germany, in 2015. She was a Postdoctoral Fellow at the University of Melbourne, Australia, as a Feodor-Lynen Fellow of the Alexander von Humboldt Foundation. She subsequently held the position of Professor of Statistics and Data Science at Humboldt-Universität zu Berlin, Germany. Since August 2024, she has been a Professor at the Scientific Computing Center (SCC), Karlsruhe Institute of Technology (KIT), Germany, where she leads the Methods for Big Data (MBD) research group. She is also Emmy Noether Research Group Leader supported by the Deutsche Forschungsgemeinschaft, and an alumnae of Die Junge Akademie, among  many other prestigious memberships. Nadja’s research and leadership received several famous awards, including the Committee of Presidents of Statistical Societies  Emerging Leader Award in 2025, the Mitchell Prize 2024 of the International Society of Bayesian Analysis, the Wolfgang-Wetzel-Prize 2015 of the German Statistical Society, and numerous awards from the Georg-August-Universität Göttingen and the Universitätsbund Göttingen for her outstanding dissertation. Her research focuses on Bayesian learning as well as statistical and machine learning methods, including uncertainty quantification.
\end{IEEEbiography}
\vfill

\clearpage
\section*{Supplementary Material} \label{sec:appendix}
\subsection{Proofs}
\label{sec:Proof}
\begin{proof}[Proof of Proposition~\ref{prop:depth_heteroscedasticity}]
From Assumption~\ref{assumption:intensity_distance}, the signal quality function is given by:
\begin{equation*}
Q(d) = \frac{\kappa}{d^2}.
\end{equation*}
From Assumption~\ref{assumption:variance_signal}, the conditional loss variance is expressed as:
\begin{equation*}
\text{Var}[\mathcal{L} \mid d] = \frac{\alpha^2}{Q(d)} + \sigma_\varepsilon^2.
\end{equation*}
Substituting the expression for $Q(d)$ gives:
\begin{equation*}
\text{Var}[\mathcal{L} \mid d] = \frac{\alpha^2}{\frac{\kappa}{d^2}} + \sigma_\varepsilon^2 = \frac{\alpha^2 d^2}{\kappa} + \sigma_\varepsilon^2.
\end{equation*}
Defining the depth-dependent variance coefficient $\sigma_0^2 = \alpha^2/\kappa$, we obtain:
\begin{equation*}
\text{Var}[\mathcal{L} \mid d] = \sigma_0^2 d^2 + \sigma_\varepsilon^2.
\end{equation*}
\end{proof}

\begin{proof}[Proof of Corollary~\ref{cor:training_bias}]
Toneva et al.~\cite{toneva2019empirical} demonstrate that training samples partition into ``unforgettable'' (learned early, retained) and ``forgettable'' (repeatedly learned and forgotten) samples, with Spearman correlation of $-0.74$ between forgetting frequency and misclassification margin. Furthermore, Arpit et al.~\cite{arpit2017closer} show that networks achieve maximum validation accuracy before fully fitting high-variance (noise) samples, indicating that consistent patterns are learned before variable ones.

By Proposition~\ref{prop:depth_heteroscedasticity}, for $d_1 < d_2$: $\text{Var}[\mathcal{L} \mid d_1] < \text{Var}[\mathcal{L} \mid d_2]$. Nearby objects thus constitute low-variance ``unforgettable'' samples learned early, while distant objects constitute high-variance ``forgettable'' samples. Under fixed training budget with uniform weighting, the optimizer achieves better convergence on the low-variance nearby-object manifold, resulting in systematic underfitting of distant objects.
\end{proof}

\begin{proof}[Proof of Theorem~\ref{thm:optimal_threshold}]
Consider a detection with confidence score $s$ at depth $d$. The expected costs are $C_{\text{FP}} \cdot P_{\text{FP}}(s, d)$ for accepting it and $C_{\text{FN}} \cdot P_{\text{TP}}(s, d)$ for rejecting it. The optimal decision rule accepts the detection if and only if the expected cost of accepting is no greater than the expected cost of rejecting:
\begin{equation*}
C_{\text{FP}} \cdot P_{\text{FP}}(s, d) \leq C_{\text{FN}} \cdot P_{\text{TP}}(s, d).
\end{equation*}
Rearranging yields the likelihood ratio criterion:
\begin{equation*}
\frac{P_{\text{TP}}(s, d)}{P_{\text{FP}}(s, d)} \geq \frac{C_{\text{FP}}}{C_{\text{FN}}}.
\end{equation*}
Under the assumption that $P_{\text{TP}}(s, d)$ is non-decreasing in $s$ (higher confidence scores correspond to higher true positive probability), the optimal threshold $\tau^*(d)$ is the infimum of confidence scores satisfying this inequality.
\end{proof}

\subsection{Dataset Details and Examples}
\label{sec:dataset_details}
\subsubsection{KITTI Dataset}
The KITTI dataset comprises $7,481$ images ($5,985$ training, $598$ at $10\%$, $1496$ validation $20\%$ random split) with an average resolution of $\approx1240\pm6\times375\pm2$ across $7$ object classes: car, van, truck, pedestrian, person\_sitting, cyclist, and tram, representing automotive driving scenarios. Figure~\ref{fig:kitti_examples} shows a representative KITTI scene with the corresponding depth map and GT annotations.

\begin{figure}[htbp]
    \centering
    \includegraphics[width=\columnwidth]{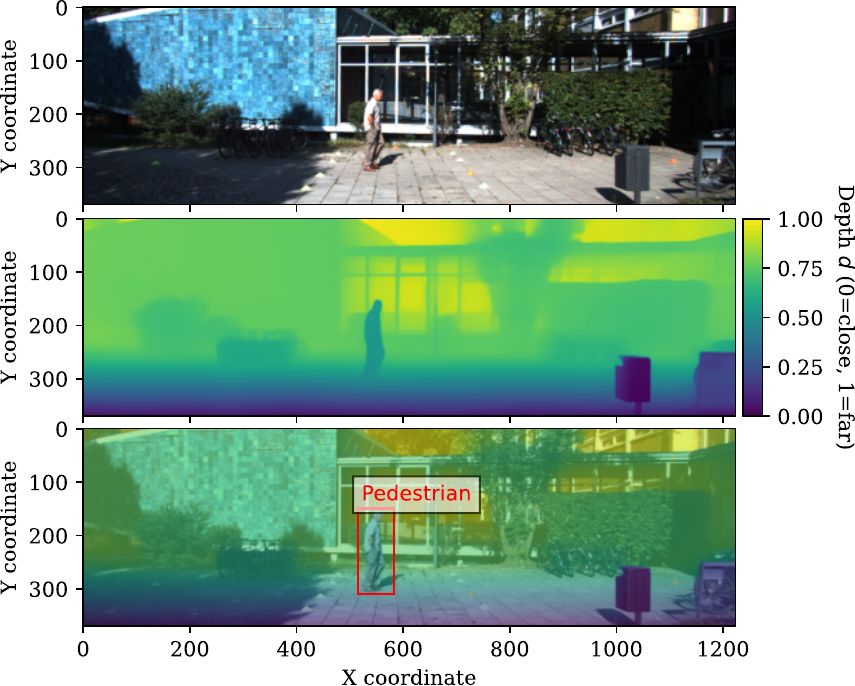}
    \caption{KITTI automotive scene~\cite{geiger2012we}. Top: RGB image. Middle: Predicted depth map using Depth Anything~\cite{yang2024depth}. Bottom: GT annotations overlaid ($40\%$ opacity) on depth visualization.}
    \label{fig:kitti_examples}
\end{figure}

\subsubsection{VisDrone Dataset}
VisDrone contains $10,209$ images ($6,471$ training, $647$ at $10\%$, $548$ validation, $1,610$ test, $1,580$ unlabeled) with an average resolution of $\approx1534\pm280\times1011\pm238$ across $10$ object classes: pedestrian, people, bicycle, car, van, truck, tricycle, awning-tricycle, bus, motor, captured from drone perspectives. Figure~\ref{fig:visdrone_examples} shows an aerial surveillance scene from VisDrone.

\begin{figure}[htbp]
    \centering
    \includegraphics[width=\columnwidth]{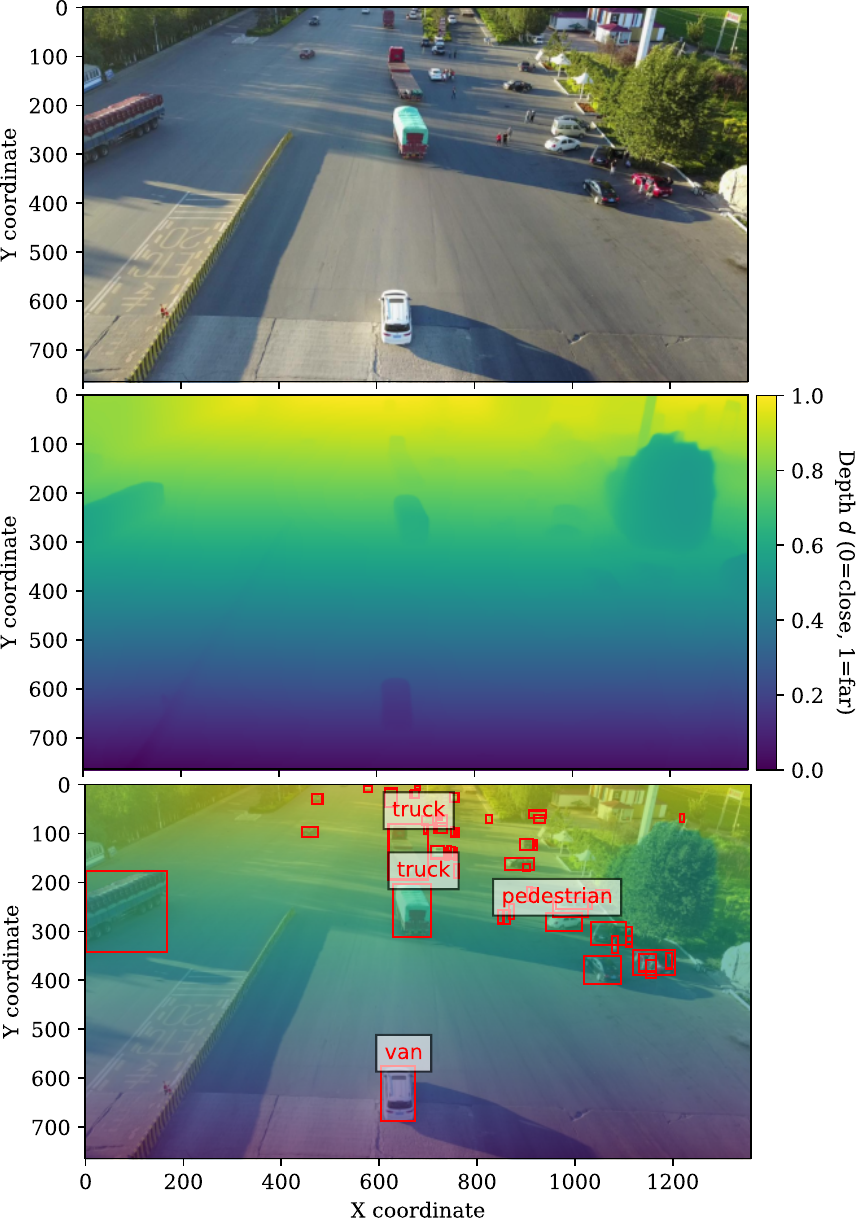}
    \caption{Aerial surveillance example from VisDrone~\cite{du2019visdrone}. Top: RGB image from drone perspective. Middle: Predicted depth map. Bottom: GT annotations overlaid ($40\%$ opacity) on depth map (some class names omitted for clarity).}
    \label{fig:visdrone_examples}
\end{figure}

\subsubsection{SUN RGB-D Dataset}
SUN RGB-D comprises $8,965$ images ($7,154$ training, $715$ at $10\%$, $1,811$ validation) with an average resolution of $\approx650\pm73\times483\pm48$ across $19$ object classes: bed, table, sofa, chair, toilet, desk, dresser, night\_stand, bookshelf, bathtub, box, books, bottle, bag, pillow, monitor, television, lamp, garbage\_bin, representing indoor environments. Figure~\ref{fig:sunrgbd_examples} shows a sample SUN RGB-D indoor scene.

\begin{figure}[htbp]
    \centering
    \includegraphics[width=\columnwidth]{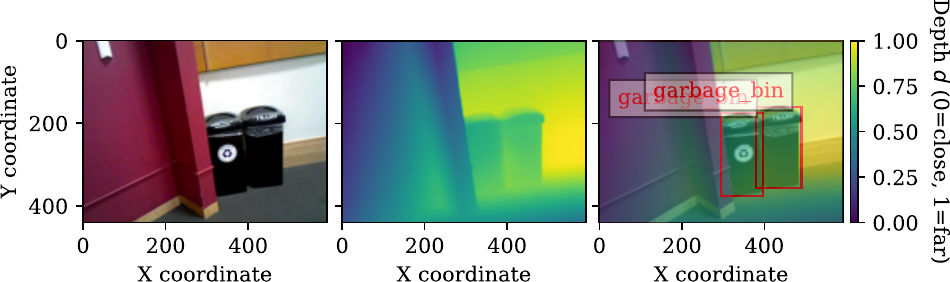}
\caption{SUN RGB-D indoor scene~\cite{song2015sun}. Left: RGB image. Middle: Predicted depth map. Right: GT annotations overlaid ($40\%$ opacity) on depth visualization.}
    \label{fig:sunrgbd_examples}
\end{figure}

\subsubsection{MS COCO Dataset}
MS COCO contains $123,287$ images ($118,287$ training, $11828$ at $10\%$, $5,000$ validation) with an average resolution of $\approx577\pm93\times488\pm97$ across $80$ object classes spanning diverse categories such as person, bicycle, car, motorcycle, airplane, bus, train, truck, boat, traffic light, fire hydrant, stop sign, parking meter, bench, bird, cat, dog, horse, sheep, cow, elephant, bear, zebra, giraffe, backpack, umbrella, handbag, tie, suitcase, frisbee, skis, snowboard, sports ball, kite, baseball bat, baseball glove, skateboard, surfboard, tennis racket, bottle, wine glass, cup, fork, knife, spoon, bowl, banana, apple, sandwich, orange, broccoli, carrot, hot dog, pizza, donut, cake, chair, couch, potted plant, bed, dining table, toilet, tv, laptop, mouse, remote, keyboard, cell phone, microwave, oven, toaster, sink, refrigerator, book, clock, vase, scissors, teddy bear, hair drier, toothbrush, representing general photographic content. Figure~\ref{fig:mscoco_examples} shows a sample MS COCO scene.

\begin{figure}[htbp]
    \centering
    \includegraphics[width=\columnwidth]{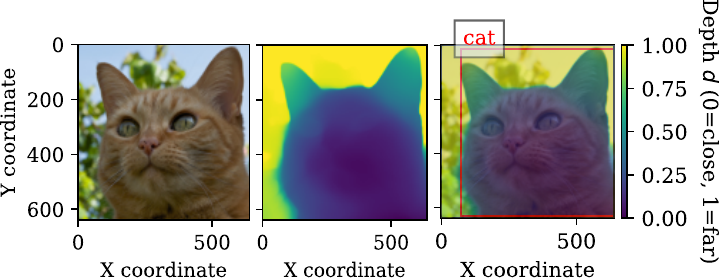}
    \caption{MS COCO photographic scene~\cite{lin2014microsoft}. Left: RGB image. Middle: Predicted depth map. Right: GT annotations overlaid ($40\%$ opacity) on depth visualization.}
    \label{fig:mscoco_examples}
\end{figure}

\subsection{Spatial and Depth Characteristics}
\label{sec:spatial_depth_characteristics}
We analyze spatial and depth patterns across datasets, showing how camera positioning and scene properties create distinct detection challenges that motivate depth-aware supervision. Figure~\ref{fig:dataset_heatmap} shows GT object spatial distributions, revealing systematic spatial bias driven by camera positioning. KITTI exhibits horizontal banding along the horizon (automotive geometry). VisDrone shows center-weighted patterns (aerial perspective). Top-down viewing geometry creates optimal detection conditions in central regions with increasing perspective distortion toward periphery (see Figure~\ref{fig:visdrone_examples}). SUN RGB-D and MS COCO display central concentration (indoor placement and photographic composition, respectively).
\begin{figure}[htbp]
    \centering
    \includegraphics[width=\linewidth]{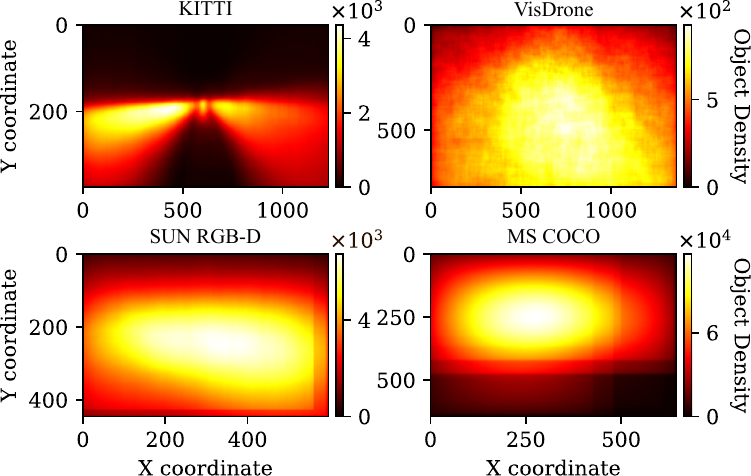}
    \caption{Spatial distribution of objects across datasets derived from GT annotations. (a) KITTI, (b) VisDrone, (c) SUN RGB-D, (d) MS COCO. Heatmaps resized for visualization; spatial resolutions differ across datasets.}
    \label{fig:dataset_heatmap}
\end{figure}

Figure~\ref{fig:dataset_depth_distributions} shows the object depth distributions across all four datasets using predicted depth. While spatial patterns alone appear similar across some contexts (e.g., central clustering in both SUN RGB-D and MS COCO), Figure~\ref{fig:dataset_depth_distributions} reveals different geometric challenges through depth analysis. KITTI displays far-field concentration (depth $0.8-0.9$), reflecting automotive scenarios. VisDrone demonstrates mid-range concentration ($0.3-0.6$), where objects on the ground plane do not vary as much in distance from the camera compared to forward-facing automotive cameras. SUN RGB-D displays steady growth across the depth range, indicating that indoor scenes contain progressively more objects at farther distances due to room geometry. MS COCO exhibits bimodal characteristics with both mid-range presence and far-field concentration, reflecting photographic diversity.
\begin{figure}[htbp]
    \centering
    \includegraphics[width=\linewidth]{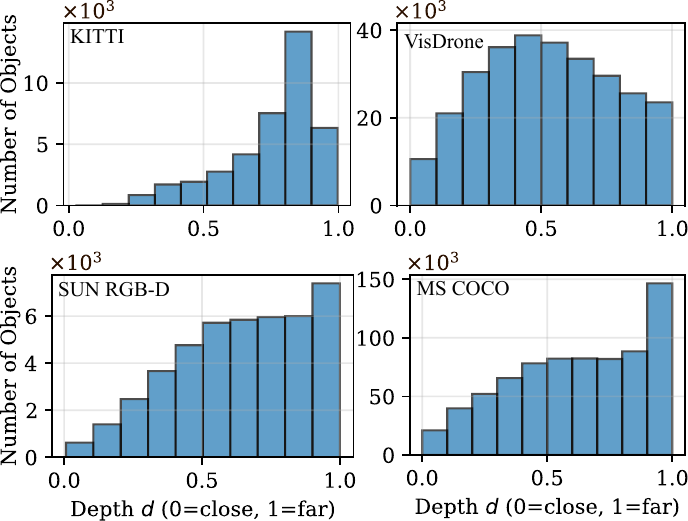}
    \caption{Object depth distribution patterns for all GT objects across datasets using predicted depth from Depth Anything~\cite{yang2024depth}.}
    \label{fig:dataset_depth_distributions}
\end{figure}

The depth distributions reveal information masked by spatial patterns alone. All three datasets, VisDrone, SUN RGB-D, and MS COCO, show central spatial clustering in their heatmaps, suggesting comparable detection scenarios. However, their depth distributions expose different challenges. Far-field objects appear small in the image space regardless of their central spatial position. For example, a $100$ pixel$^2$ object at the image center presents different detection difficulty depending on whether it lies at depth $0.4$ or depth $0.9$. 

Table~\ref{tab:dataset_statistics} quantifies size-depth relationships. KITTI exhibits strong correlation ($r = -0.741$), where depth predicts object scale. SUN RGB-D has the largest median object size ($10,494$ pixels$^2$), but a weak correlation ($r = -0.356$), similar to MS COCO ($r = -0.286$), indicating more varied object scales across the depth range. VisDrone also shows weak correlation ($r = -0.275$) due to uniformly small objects (median $660$ pixels$^2$) from aerial perspective. False detections can have arbitrary sizes. Furthermore, the size of predicted objects is affected by multiple factors beyond distance: object class (cars vs.~pedestrians), occlusion status, and image truncation. In contrast, depth provides a supervision signal independent of detection errors. 

\begin{table}[htbp]
    \centering
    \caption{Dataset statistics: median object area ($\tilde{A}$), median per-image normalized depth ($\tilde{d}$), and Pearson correlation between area and depth ($^{***}$: $p \leq 0.001$).}
    \label{tab:dataset_statistics}
    \begin{tabular}{l|cccc}
        \toprule
        \textbf{Metric} & \textbf{KITTI} & \textbf{VisDrone} & \textbf{SUN RGB-D} & \textbf{MS COCO} \\
        \midrule
        $\tilde{A}$ (pixels$^2$) & $3,573$ & $660$ & $10,494$ & $3,300$ \\
        $\tilde{d}$ & $0.810$ & $0.781$ & $0.760$ & $0.697$ \\
        $r(A,d)$ & $-0.741^{***}$ & $-0.275^{***}$ & $-0.356^{***}$ & $-0.286^{***}$ \\
        \bottomrule
    \end{tabular}
\end{table}

\subsection{Detectors Architecture}
\label{sec:model_analysis}
Our experiments use two different CNN-based detectors. Both YOLOv11~\cite{Jocher_Ultralytics_YOLO_2023} and EfficientDet~\cite{tan2020efficientdet,automl} follow the standard three-part architecture. Backbone for feature extraction, neck for multi-scale feature aggregation (FPN for YOLOv11, BiFPN for EfficientDet), and head for final detection. However, they differ in their anchor systems, loss formulations, and post-processing pipelines.

YOLOv11 employs anchor-free detection, directly predicting class probabilities and bounding box offsets at each feature map location without predefined anchor boxes, avoiding size priors and their constraints. The training loss combines Binary Cross-Entropy for classification with Complete IoU (CIoU) and Distribution Focal Loss (DFL) for box regression. In contrast, EfficientDet relies on anchor-based detection with predefined anchor boxes across multiple scales and aspect ratios. GT assignment uses IoU-based matching, with training loss combining Focal Loss for classification and Smooth L1 Loss for box regression. 

YOLOv11 applies confidence filtering pre-NMS. Therefore, DCT would modulate detection scores before spatial suppression occurs, directly influencing which detections survive NMS. EfficientDet applies confidence filtering post-NMS using soft-NMS, positioning DCT as a final step that refines scores after spatial relationships have been resolved. 

\subsection{Extended Depth-Detection Analysis}
\label{sec:appendix_depth_detection}
This section further investigates the depth-detection relationship, complementing the main analysis in Section~\ref{sec:depth_detection_analysis}.
\subsubsection{Threshold-Independent Error Analysis}
Figure~\ref{fig:error_distribution_appendix} extends Figure~\ref{fig:error_distribution_main} across architectures and splits on MS COCO, confirming MD concentration in distant regions generalizes.
\begin{figure}[htbp]
    \centering
    \includegraphics[width=\linewidth]{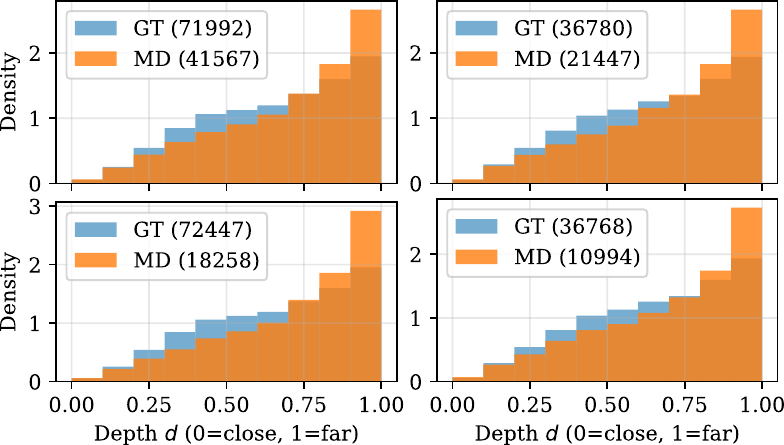}
    \caption{MD patterns on MS COCO. Top: EfficientDet, bottom: YOLOv11. Left: inference set, right: validation set. GT (blue) and MD (orange).}
    \label{fig:error_distribution_appendix}
\end{figure}

\subsubsection{Threshold-Dependent Error Analysis}
Figure~\ref{fig:threshold_analysis_appendix} extends Figure~\ref{fig:threshold_analysis_main} on VisDrone. The threshold-dependent error patterns discussed in Section~\ref{sec:depth_detection_analysis} generalize. YOLOv11 shows favorable MD-ED trade-offs in far depth ranges ($> 0.8$) at $\tau_0 = 0.4$ for both inference and validation, i.e., threshold reduction recovers MD without proportional ED inflation. EfficientDet shows more recovery opportunities in mid-range depths ($0.1$--$0.6$).
\begin{figure}[htbp]
    \centering
    \includegraphics[width=\linewidth]{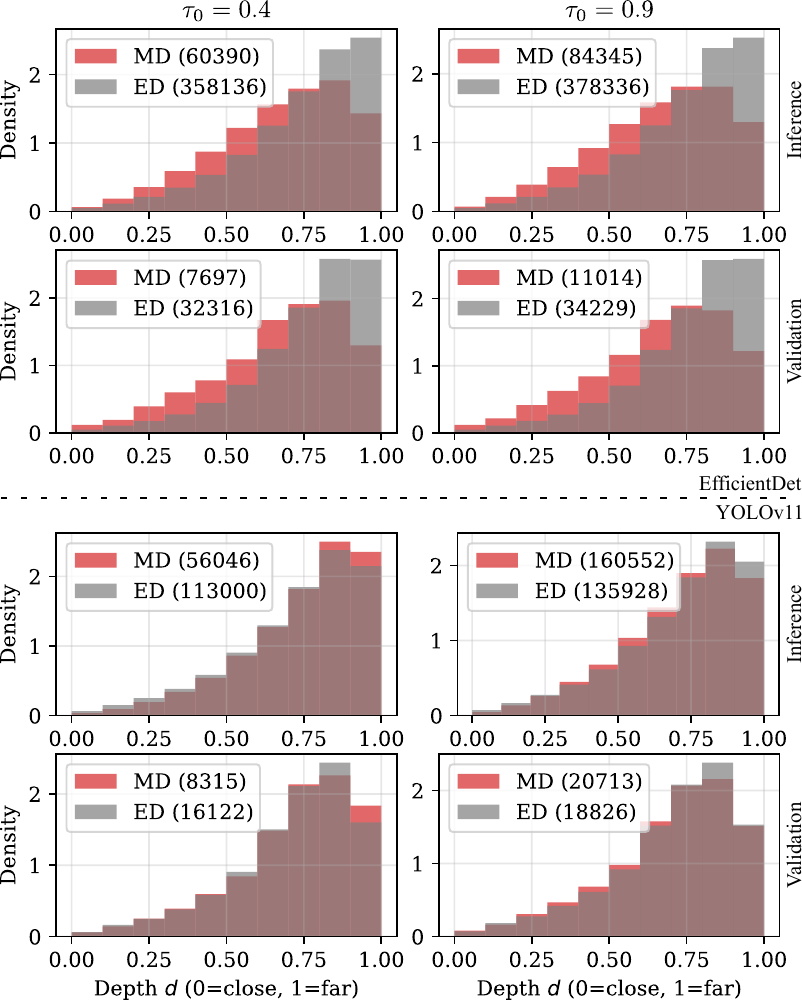}
    \caption{Threshold-dependent error patterns on VisDrone. EfficientDet (rows 1--2) and YOLOv11 (rows 3--4) at $\tau_0 = 0.4$ (left) and $\tau_0 = 0.9$ (right). MD (red) and ED (gray).}
    \label{fig:threshold_analysis_appendix}
\end{figure}

\subsubsection{Confidence-Depth Relationship}
Figure~\ref{fig:confidence_depth_main_appendix} extends Figure~\ref{fig:confidence_depth_main} on KITTI. EfficientDet produces matched detections across broader confidence ranges, with match rates persisting in mid-score regions ($0.4-0.7$) at far depths. This confidence dispersion indicates that EfficientDet's anchor-based training produces less peaked score distributions compared to YOLOv11's anchor-free approach, which concentrates matched detections at higher confidence values ($> 0.7$). Hence, EfficientDet benefits from threshold reduction across wider score ranges, while YOLOv11 requires more conservative adjustment to avoid ED inflation. Both detectors systematically assign lower scores to distant objects even when correctly detected, creating recoverable MD through depth-aware threshold adjustment. Very low confidence regions ($< 0.2$) show minimal match rates across both architectures, establishing a lower bound below which threshold reduction adds primarily ED regardless of depth.

\begin{figure}[htbp]
    \centering
    \includegraphics[width=\linewidth]{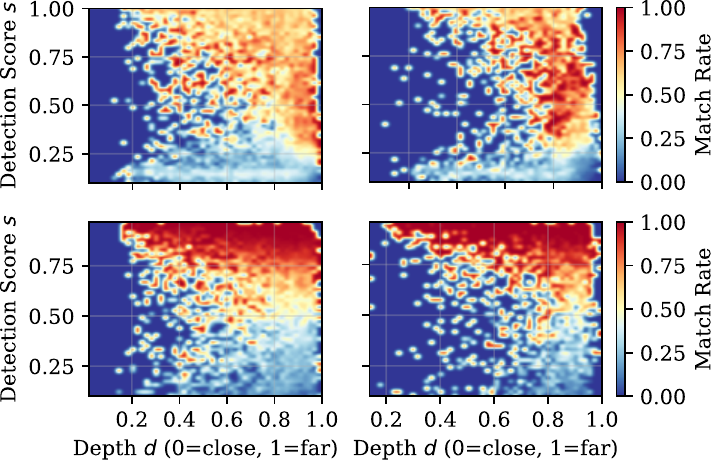}
    \caption{Match rate heatmaps on KITTI. Top: EfficientDet, bottom: YOLOv11. Left: inference set, right: validation set. Color intensity indicates fraction of detections matching GT in each confidence-depth bin.}
    \label{fig:confidence_depth_main_appendix}
\end{figure}

\subsection{DLS Masks Visualization}
\label{sec:dsl_masks}
Figure~\ref{fig:KITTI_dist_masks} visualizes DLS masks at different split factor $\beta$ (25th, 50th, and 75th percentiles of the image's object depth distribution). Lighter regions indicate the distant interval.
\begin{figure}
    \centering
    \includegraphics[width=1\linewidth]{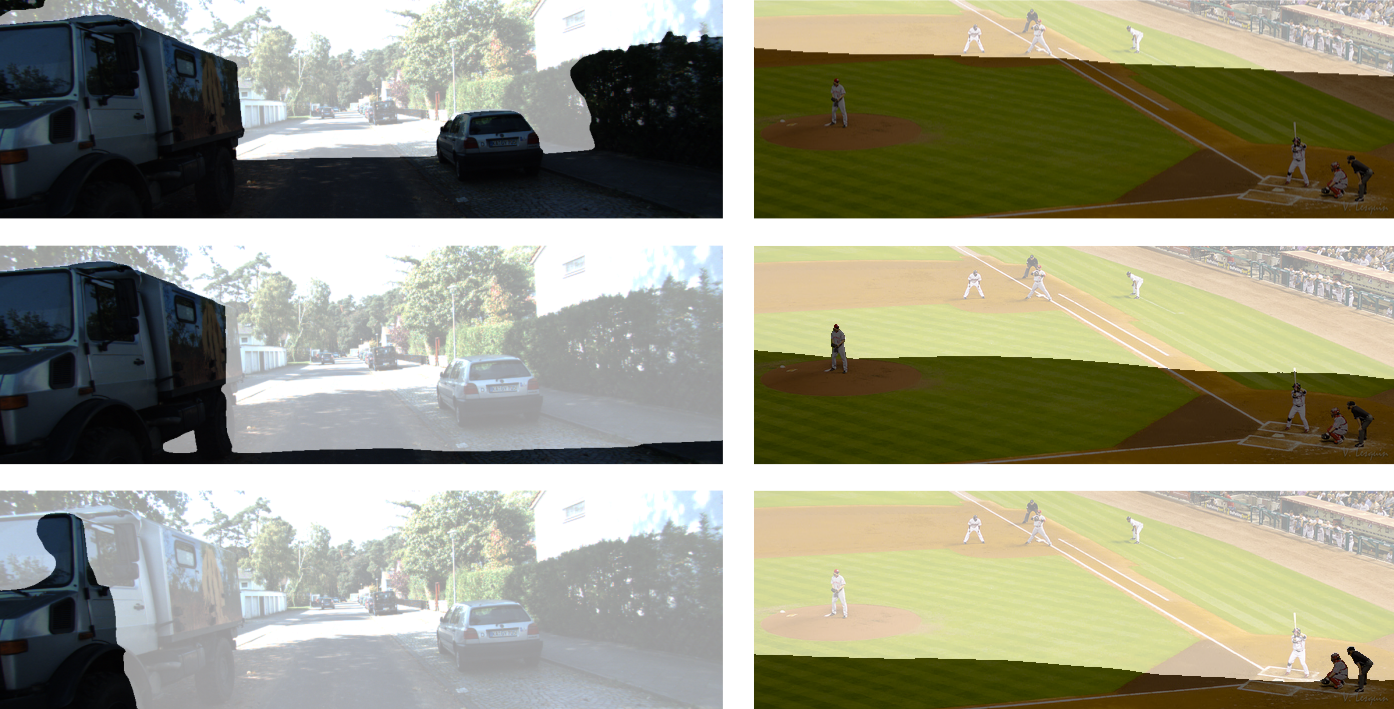}
    \caption{DLS masks on KITTI (left) and MS COCO (right) at $\beta=0.75$, $0.50$, $0.25$ (top to bottom). Lighter regions indicate the distant interval.}
   \label{fig:KITTI_dist_masks}
\end{figure}

\subsection{Extended DCT Analysis}
\label{sec:dct_appendix}
This section provides additional DCT experiments to showcase its robustness across diverse conditions and provide insights into the learned depth-threshold relationships.
\subsubsection{Baseline Performance}
Table~\ref{tab:supp_dct_baseline} presents the baseline mAP of the detectors on $10\%$ of each dataset. The high variability in detection performance across datasets ($3.7\%$ to $47.5\%$) and architectures tests DCT under diverse conditions.
\begin{table}[htbp]
    \centering
    \caption{Baseline detection performance of models used in DCT experiments, trained on $10\%$ of each dataset's training set. Evaluated on the respective validation set.}
    \label{tab:supp_dct_baseline}
    \footnotesize
    \setlength{\tabcolsep}{4pt}
    \begin{tabular}{l|rrrr}
         \toprule
         Detector & KITTI & VisDrone & SUN & COCO \\
         \midrule
         EfficientDet & $30.8$ & $3.7$ & $9.8$ & $9.7$ \\
         YOLOv11 & $47.5$ & $16.0$ & $14.2$ & $33.0$ \\
         \bottomrule
    \end{tabular}
\end{table}

\subsubsection{Optimization and Learned Curves}
Figure~\ref{fig:supp_spline_functions} shows learned threshold functions on KITTI. The learned curves exhibit dataset-specific and architecture-specific characteristics that reflect the empirical patterns established in Section~\ref{sec:depth_detection_analysis}. Figure~\ref{fig:confidence_depth_main} shows that YOLOv11 is rather confident in its predictions with the concentration of most matched detections in high score ranges ($> 0.75$). This behavior is reflected in Figure~\ref{fig:supp_spline_functions} (right).

\begin{figure}[htp]
    \centering
    \includegraphics[width=1\linewidth]{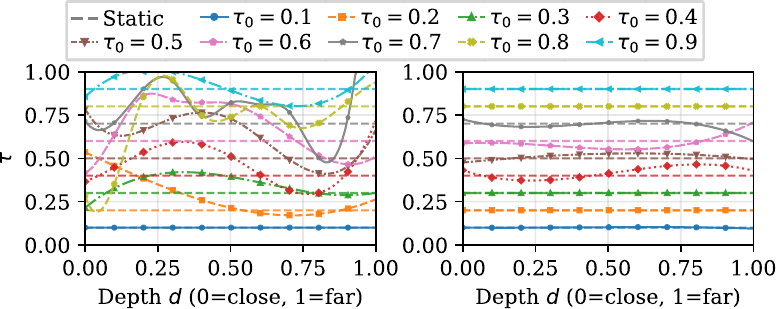}
    \caption{Learned depth-adaptive thresholding curves for KITTI. Left: EfficientDet, Right: YOLOv11.}
    \label{fig:supp_spline_functions}
\end{figure}

Figures~\ref{fig:pareto_kitti} and~\ref{fig:pareto_visdrone} compare Pareto fronts for KITTI and VisDrone with EfficientDet. On KITTI, DCT increases TD in total by $1,427$ while reducing ED by $21$, with higher thresholds enabling more pronounced relaxation due to larger margins before ED inflation. In contrast, improvements on VisDrone distribute evenly across threshold ranges. This validates DCT's adaptive optimization, as it discovers and exploits whatever depth-threshold structure exists for each dataset without requiring manually specified patterns.

\begin{figure}[htp]
    \centering
    \includegraphics[width=0.8\linewidth]{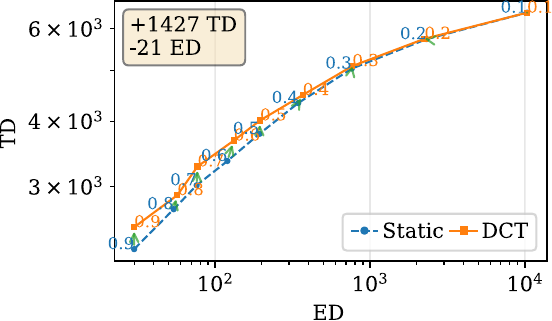}
    \caption{Pareto front comparison between static thresholding (blue) and DCT (orange) on KITTI validation data with EfficientDet. Each point represents a reference threshold $\tau_0 \in \{0.1, 0.2, \dots, 0.9\}$.}
    \label{fig:pareto_kitti}
\end{figure}
\begin{figure}[htp]
        \centering
        \includegraphics[width=0.8\linewidth]{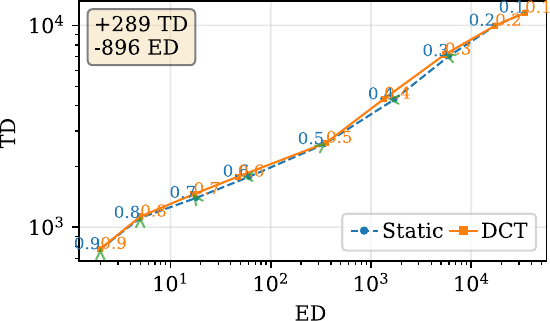}
        \caption{Pareto front comparison on VisDrone validation data with EfficientDet. Static thresholding (blue) and DCT (orange) across $\tau_0 \in \{0.1, 0.2, \dots, 0.9\}$.}
    \label{fig:pareto_visdrone}
    \end{figure}

\subsubsection{Threshold-Specific Generalization}
Tables~\ref{tab:kitti_threshold_sweep} and~\ref{tab:visdrone_threshold_sweep} provide per-threshold breakdowns. Negative deltas on KITTI inference (e.g., $\tau_0\in\{0.2, 0.4, 0.7\}$) arise from the random validation set characteristics, unlike VisDrone's default split. On KITTI, high thresholds ($\tau_0 \geq 0.8$) involve fewer detections and exhibit more consistent generalization, while mid-range thresholds ($0.4-0.7$) include more detections whose depth-confidence characteristics may not be represented in the validation set. 

\begin{table}[hbt]
    \centering
\caption{KITTI threshold sweep for EfficientDet across all reference thresholds. Validation set (top), inference set (bottom). $\tau_0=0.1$ shows zero change as only detections with scores $>0.1$ are retained.}
    \label{tab:kitti_threshold_sweep}
    \resizebox{\columnwidth}{!}{%
\begin{tabular}{l|rrrr|rr}
\toprule
$\tau_0$ & TD & ED & $\text{TD}^*$ & $\text{ED}^*$ & $\Delta_{TD}\uparrow$ & $\Delta_{ED}\downarrow$ \\
\midrule
$0.90$ & $2,281$ & $30$ & $2,511$ & $30$ & $230$ & $0$ \\
$0.80$ & $2,718$ & $54$ & $2,891$ & $57$ & $173$ & $3$ \\
$0.70$ & $3,019$ & $76$ & $3,279$ & $77$ & $260$ & $1$ \\
$0.60$ & $3,359$ & $120$ & $3,674$ & $132$ & $315$ & $12$ \\
$0.50$ & $3,783$ & $193$ & $4,013$ & $195$ & $230$ & $2$ \\
$0.40$ & $4,332$ & $344$ & $4,487$ & $368$ & $155$ & $24$ \\
$0.30$ & $5,039$ & $763$ & $5,095$ & $769$ & $56$ & $6$ \\
$0.20$ & $5,734$ & $2,316$ & $5,742$ & $2,247$ & $8$ & $-69$ \\
$0.10$ & $6,441$ & $10,334$ & $6,441$ & $10,334$ & $0$ & $0$ \\
\midrule
$0.90$ & $15,200$ & $266$ & $16,009$ & $320$ & $809$ & $54$ \\
$0.80$ & $17,459$ & $457$ & $17,866$ & $548$ & $407$ & $91$ \\
$0.70$ & $18,867$ & $666$ & $18,610$ & $782$ & $-257$ & $116$ \\
$0.60$ & $19,922$ & $930$ & $20,472$ & $1,027$ & $550$ & $97$ \\
$0.50$ & $20,838$ & $1,255$ & $21,064$ & $1,313$ & $226$ & $58$ \\
$0.40$ & $21,803$ & $1,740$ & $21,801$ & $1,929$ & $-2$ & $189$ \\
$0.30$ & $22,946$ & $2,668$ & $22,958$ & $2,496$ & $12$ & $-172$ \\
$0.20$ & $24,232$ & $5,587$ & $24,168$ & $5,974$ & $-64$ & $387$ \\
$0.10$ & $25,647$ & $28,920$ & $25,647$ & $28,920$ & $0$ & $0$ \\
\bottomrule
\end{tabular}
}
\end{table}

\begin{table}[hbt]
    \centering
\caption{VisDrone threshold sweep for YOLOv11 across all reference thresholds. Validation set (top), inference set (bottom).}
    \label{tab:visdrone_threshold_sweep}
    \resizebox{\columnwidth}{!}{%
\begin{tabular}{l|rrrr|rr}
\toprule
$\tau_0$ & TD & ED & $\text{TD}^*$ & $\text{ED}^*$ & $\Delta_{TD}\uparrow$ & $\Delta_{ED}\downarrow$ \\
\midrule
$0.90$ & $1,441$ & $2$ & $1,851$ & $3$ & $410$ & $1$ \\
$0.80$ & $5,904$ & $46$ & $6,174$ & $49$ & $270$ & $3$ \\
$0.70$ & $8,342$ & $183$ & $8,665$ & $194$ & $323$ & $11$ \\
$0.60$ & $10,392$ & $448$ & $10,515$ & $477$ & $123$ & $29$ \\
$0.50$ & $12,407$ & $1,000$ & $12,705$ & $1,081$ & $298$ & $81$ \\
$0.40$ & $14,490$ & $2,055$ & $14,825$ & $2,219$ & $335$ & $164$ \\
$0.30$ & $16,727$ & $4,139$ & $16,746$ & $4,152$ & $19$ & $13$ \\
$0.20$ & $19,267$ & $8,523$ & $19,267$ & $8,523$ & $0$ & $0$ \\
$0.10$ & $22,151$ & $18,831$ & $22,151$ & $18,831$ & $0$ & $0$ \\
\midrule
$0.90$ & $11,090$ & $45$ & $14,867$ & $69$ & $3,777$ & $24$ \\
$0.80$ & $50,302$ & $604$ & $52,130$ & $676$ & $1,828$ & $72$ \\
$0.70$ & $72,406$ & $1,998$ & $74,269$ & $2,254$ & $1,863$ & $256$ \\
$0.60$ & $89,370$ & $4,502$ & $90,388$ & $4,714$ & $1,018$ & $212$ \\
$0.50$ & $105,195$ & $9,307$ & $106,908$ & $9,988$ & $1,713$ & $681$ \\
$0.40$ & $120,437$ & $18,132$ & $122,121$ & $19,653$ & $1,684$ & $1521$ \\
$0.30$ & $135,932$ & $34,169$ & $136,014$ & $34,273$ & $82$ & $104$ \\
$0.20$ & $152,555$ & $64,645$ & $152,632$ & $64,865$ & $77$ & $220$ \\
$0.10$ & $171,623$ & $135,992$ & $171,623$ & $135,992$ & $0$ & $0$ \\
\bottomrule
\end{tabular}
}
\end{table}

\subsubsection{Qualitative Examples}
\label{sec:dct_qualitative_appendix}
Figure~\ref{fig:dct_examples_v2} shows DCT examples on SUN RGB-D (indoor truncated/distant furniture) and MS COCO (effective despite weak correlation).
\begin{figure}
        \centering
        \includegraphics[width=\linewidth]{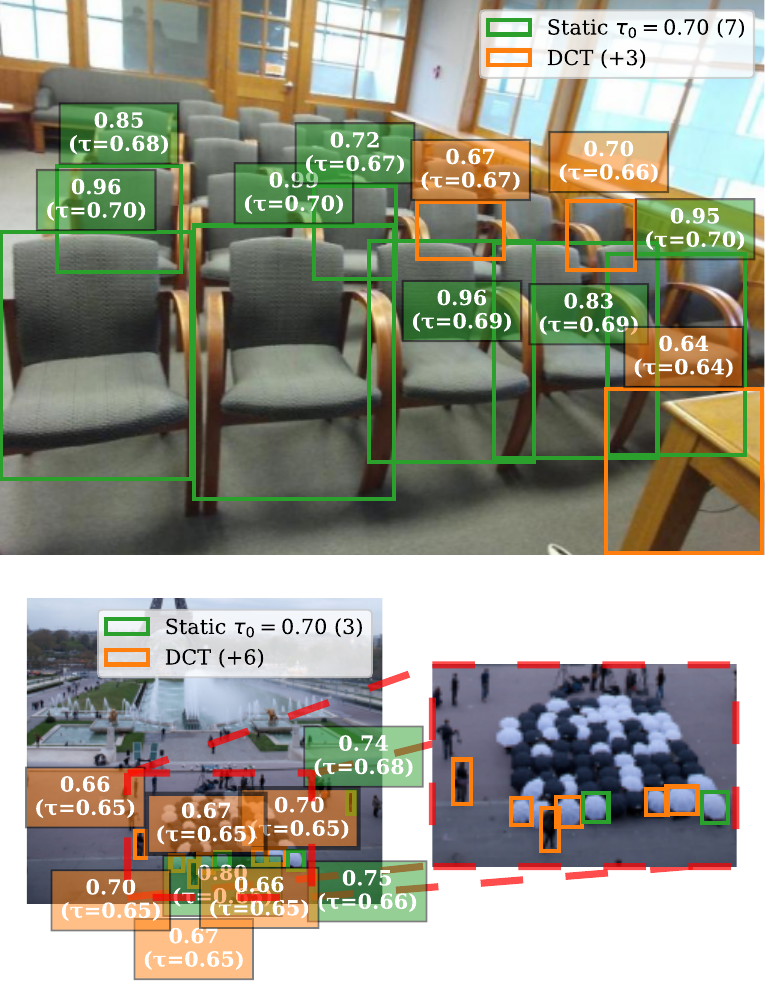}
        \caption{DCT examples on inference data. Top: EfficientDet on SUN RGB-D at $\tau_0=0.70$. Bottom: YOLOv11 on MS COCO at $\tau_0=0.70$. Green boxes: static detections. Orange boxes: DCT recoveries.}
    \label{fig:dct_examples_v2}
    \end{figure}
    
\subsection{Ablation Studies}
\label{sec:ablation_studies}
\subsubsection{DLW Components}
\label{sec:dwl_component_ablation}
We start our ablation studies by ablating each component of DLW in Table~\ref{tab:dwl_component_ablation}: batch-level min-max normalization, inverse transformation ($1 - d$), and exponential weighting ($\exp(d)$).
\begin{table*}[htp]
    \centering
    \caption{DLW component ablation on KITTI (top) and SUN RGB-D (bottom). $D$: raw depth estimator output where far objects receive lower values (Section~\ref{subsec:dwl}). DLW: $w = 1 + \alpha \cdot \exp(d_{\text{norm}})$, $\alpha=1.0$.}
    \label{tab:dwl_component_ablation}
    \resizebox{\textwidth}{!}{%
    \begin{tabular}{l|ccccc|cccc}
         \toprule
         \textbf{Config} & \textbf{mAP} & \textbf{mAP}$_{50}$ & \textbf{mAP}$_S$ & \textbf{mAP}$_M$ & \textbf{mAP}$_L$ & \textbf{mAR} & \textbf{mAR}$_S$ & \textbf{mAR}$_M$ & \textbf{mAR}$_L$ \\
        \midrule
        Baseline & $53.5_{\pm0.3}$ & $81.2_{\pm0.3}$ & $19.7_{\pm0.6}$ & $53.7_{\pm0.1}$ & $72.6_{\pm0.9}$ & $66.8_{\pm0.5}$ & $38.5_{\pm0.5}$ & $66.5_{\pm0.4}$ & $80.7_{\pm0.8}$ \\
        \midrule
        $D$ & $34.2_{\pm2.2}$ & $52.2_{\pm3.3}$ & $7.4_{\pm0.8}$ & $40.8_{\pm1.1}$ & $64.1_{\pm1.3}$ & $57.2_{\pm0.5}$ & $19.0_{\pm1.8}$ & $56.0_{\pm0.8}$ & $78.7_{\pm0.7}$ \\
        $1-d_{\text{norm}}$ & $39.4_{\pm1.1}$ & $59.3_{\pm1.4}$ & $4.8_{\pm1.2}$ & $38.6_{\pm1.8}$ & $68.0_{\pm1.1}$ & $55.0_{\pm0.9}$ & $16.1_{\pm1.2}$ & $53.7_{\pm1.1}$ & $77.3_{\pm0.5}$ \\
        $\exp(1-d_{\text{norm}})$ & $53.3_{\pm0.5}$ & $80.7_{\pm0.3}$ & $19.6_{\pm0.8}$ & $52.8_{\pm0.1}$ & $73.0_{\pm0.6}$ & $66.7_{\pm0.6}$ & $38.1_{\pm1.1}$ & $67.0_{\pm0.6}$ & $80.3_{\pm0.6}$ \\
         $d_{\text{norm}}$ & $54.3_{\pm0.3}$ & $81.8_{\pm0.5}$ & $23.2_{\pm1.7}$ & $54.4_{\pm0.5}$ & $72.9_{\pm0.2}$ & $67.5_{\pm0.3}$ & $41.5_{\pm0.8}$ & $67.1_{\pm0.7}$ & $80.5_{\pm0.6}$\\
        $d_{\text{norm}}^2$ & $54.3_{\pm0.2}$ & $81.7_{\pm0.6}$ & $\mathbf{25.1_{\pm0.3}}$ & $\mathbf{54.6_{\pm0.4}}$ & $72.1_{\pm0.8}$ & $\mathbf{67.6_{\pm0.6}}$ & $\mathbf{42.3_{\pm1.3}}$ & $\mathbf{67.4_{\pm0.3}}$ & $80.4_{\pm0.7}$\\
        \textbf{DLW} & $\mathbf{54.7_{\pm0.2}}$ & $\mathbf{82.6_{\pm0.3}}$ & $23.7_{\pm1.2}$ & $54.1_{\pm0.6}$ & $\mathbf{73.3_{\pm0.3}}$ & $\mathbf{67.6_{\pm0.1}}$ & $41.3_{\pm1.8}$ & $67.3_{\pm0.4}$ & $\mathbf{81.1_{\pm0.3}}$ \\
        \midrule
        \midrule
        Baseline & $17.6_{\pm0.3}$ & $31.3_{\pm0.3}$ & $\mathbf{2.0_{\pm0.2}}$ & $8.9_{\pm0.7}$ & $24.7_{\pm0.3}$ & $32.0_{\pm0.3}$ & $\mathbf{3.8_{\pm0.3}}$ & $20.8_{\pm0.6}$ & $43.7_{\pm1.8}$ \\
        \midrule
        $D$ & $15.8_{\pm0.1}$ & $28.7_{\pm1.1}$ & $0.5_{\pm0.2}$ & $7.0_{\pm0.5}$ & $23.2_{\pm0.3}$ & $32.2_{\pm0.3}$ & $1.0_{\pm0.3}$ & $20.3_{\pm0.5}$ & $43.0_{\pm1.0}$ \\
        $1-d_{\text{norm}}$ & $16.1_{\pm0.2}$ & $28.6_{\pm0.9}$ & $0.6_{\pm0.2}$ & $7.3_{\pm0.2}$ & $23.6_{\pm0.3}$ & $31.2_{\pm1.1}$ & $0.9_{\pm0.1}$ & $19.3_{\pm0.9}$ & $41.0_{\pm1.5}$ \\
        $\exp(1-d_{\text{norm}})$ & $18.8_{\pm0.3}$ & $33.2_{\pm0.2}$ & $1.7_{\pm0.5}$ & $9.4_{\pm0.2}$ & $26.7_{\pm0.4}$ & $\mathbf{35.7_{\pm0.5}}$ & $3.0_{\pm0.6}$ & $23.6_{\pm0.5}$ & $\mathbf{47.6_{\pm1.9}}$ \\
         $d_{\text{norm}}$ & $18.8_{\pm0.3}$ & $33.3_{\pm0.4}$ & $1.5_{\pm0.4}$ & $\mathbf{10.3_{\pm0.1}}$ & $26.4_{\pm0.3}$ & $35.6_{\pm0.5}$ & $3.4_{\pm0.3}$ & $\mathbf{24.3_{\pm0.4}}$ & $46.3_{\pm2.0}$\\
         $d_{\text{norm}}^2$ & $18.2_{\pm0.6}$ & $32.5_{\pm0.7}$ & $1.3_{\pm0.5}$ & $9.8_{\pm0.3}$ & $26.0_{\pm0.5}$ & $35.5_{\pm0.7}$ & $3.4_{\pm0.2}$ & $24.1_{\pm0.9}$ & $45.6_{\pm0.6}$\\
        \textbf{DLW} & $\mathbf{19.1_{\pm0.1}}$ & $\mathbf{33.6_{\pm0.2}}$ & $1.5_{\pm0.2}$ & $9.8_{\pm0.3}$ & $\mathbf{27.0_{\pm0.1}}$ & $35.3_{\pm0.2}$ & $3.5_{\pm0.2}$ & $\mathbf{24.3_{\pm0.4}}$ & $44.6_{\pm0.3}$ \\
         \bottomrule
    \end{tabular}%
    }
\end{table*}

Using the depth estimator's output directly without any normalization, transformation, or inverting ($D$) reduces performance, in accordance with~\cite{cetinkaya2022does}. Performance on KITTI drops to $34.2\%$ mAP ($-19.3\%$), $7.4\%$ mAP$_S$ ($-12.3\%$), and $19.0\%$ mAR$_S$ ($-19.5\%$). Performance on SUN RGB-D also reduces to $15.8\%$ mAP ($-1.8\%$) and $0.5\%$ mAP$_S$ ($-1.5\%$). 

\textbf{Min-Max Normalization.} Adding batch-level min-max normalization ($1-d_{\text{norm}}$) provides some recovery but performance remains below baseline. KITTI achieves $39.4\%$ mAP ($-14.1\%$), SUN RGB-D $16.1\%$ mAP ($-1.5\%$). Small object performance remains low with $4.8\%$ mAP$_S$ ($-14.9\%$) on KITTI and $0.6\%$ mAP$_S$ ($-1.4\%$) on SUN RGB-D.

\textbf{Exponential Weighting.} Adding exponential transformation ($\exp(1-d_{\text{norm}})$) without inverting recovers near-baseline performance. KITTI achieves $53.3\%$ mAP ($-0.2\%$), SUN RGB-D $18.8\%$ mAP ($+1.2\%$). Small object performance approaches baseline (KITTI: $19.6\%$ vs.~$19.7\%$), and recall metrics improve substantially (SUN RGB-D: $35.7\%$ mAR, $+3.7\%$ over baseline). However, this configuration still lacks the distant-object prioritization that inverting provides.

\textbf{Linear and Quadratic Weighting.} Inverse normalization with linear weighting ($d_{\text{norm}}$) achieves competitive overall performance: KITTI $54.3\%$ mAP ($+0.8\%$), SUN RGB-D $18.8\%$ mAP ($+1.2\%$). Small object metrics also improve, with KITTI reaching $23.2\%$ mAP$_S$ ($+3.5\%$) and $41.5\%$ mAR$_S$ ($+3.0\%$). Quadratic weighting ($d_{\text{norm}}^2$) further improves small object precision on KITTI ($25.1\%$ mAP$_S$, $+5.4\%$) but does not improve overall mAP beyond linear weighting. On SUN RGB-D, quadratic weighting slightly underperforms linear ($18.2\%$ vs.~$18.8\%$ mAP), suggesting diminishing returns from stronger nonlinearity on datasets with bounded depth ranges.

\textbf{DLW.} Full DLW with all components achieves the highest overall performance: KITTI $54.7\%$ mAP ($+1.2\%$), SUN RGB-D $19.1\%$ mAP ($+1.5\%$). Compared to linear weighting, DLW improves KITTI mAP by $+0.4\%$ while maintaining comparable small object performance. Compared to quadratic weighting, DLW achieves better overall mAP ($54.7\%$ vs.~$54.3\%$) and large object performance ($73.3\%$ vs.~$72.1\%$ mAP$_L$). 

\textbf{Scaling Factor $\alpha$.} Table~\ref{tab:dwl_alpha_ablation} examines robustness to the $\alpha$ parameter controlling distant objects weight magnitude in $w(d) = 1 + \alpha \cdot \exp(d_{\text{norm}})$. Performance remains stable across the entire tested range. DLW effectiveness does not depend on hyperparameter tuning, enabling rapid deployment. Performance on KITTI varies by only $0.4\%$ mAP ($54.5\%$ to $54.9\%$), with all values outperforming baseline ($+1.0\%$ to $+1.4\%$). VisDrone shows even tighter clustering with $6.4\%$ to $6.5\%$ mAP ($0.1\%$ variation), all improving over baseline ($+0.2\%$ to $+0.3\%$). Small object metrics also exhibit minimal sensitivity with KITTI mAP$_S$ ranging $23.5\%-24.2\%$ (vs.~baseline $19.7\%$) and VisDrone maintaining $1.1\%$ across all $\alpha$ values. Practitioners can adjust based on application priorities: lower values ($\alpha=0.5$) for more conservative reweighting when baseline performance is already strong, higher values ($\alpha=1.5$) for strong far-object emphasis in safety-critical applications.
\begin{table*}[t]
    \centering
    \caption{DLW scaling factor ($\alpha$) ablation on KITTI (top) and VisDrone (bottom).}
    \label{tab:dwl_alpha_ablation}
    \resizebox{\textwidth}{!}{%
    \begin{tabular}{l|ccccc|cccc}
         \toprule
         \textbf{$\alpha$} & \textbf{mAP} & \textbf{mAP}$_{50}$ & \textbf{mAP}$_S$ & \textbf{mAP}$_M$ & \textbf{mAP}$_L$ & \textbf{mAR} & \textbf{mAR}$_S$ & \textbf{mAR}$_M$ & \textbf{mAR}$_L$ \\
        \midrule
        Baseline & $53.5_{\pm0.3}$ & $81.2_{\pm0.3}$ & $19.7_{\pm0.6}$ & $53.7_{\pm0.1}$ & $72.6_{\pm0.9}$ & $66.8_{\pm0.5}$ & $38.5_{\pm0.5}$ & $66.5_{\pm0.4}$ & $80.7_{\pm0.8}$ \\
        \midrule
        $0.5$ & $\mathbf{54.9_{\pm0.4}}$ & $82.4_{\pm0.3}$ & $23.5_{\pm1.6}$ & $\mathbf{54.9_{\pm0.5}}$ & $\mathbf{73.5_{\pm0.7}}$ & $\mathbf{68.0_{\pm0.4}}$ & $41.6_{\pm1.4}$ & $67.9_{\pm0.3}$ & $80.8_{\pm0.7}$ \\
        $1.0$ & $54.7_{\pm0.2}$ & $82.6_{\pm0.3}$ & $23.7_{\pm1.2}$ & $54.1_{\pm0.6}$ & $73.3_{\pm0.3}$ & $67.6_{\pm0.1}$ & $41.3_{\pm1.8}$ & $67.3_{\pm0.4}$ & $\mathbf{81.1_{\pm0.3}}$ \\
        $1.5$ & $54.5_{\pm0.8}$ & $\mathbf{82.7_{\pm0.6}}$ & $\mathbf{24.2_{\pm2.7}}$ & $54.8_{\pm0.5}$ & $72.6_{\pm1.3}$ & $67.7_{\pm0.4}$ & $\mathbf{42.1_{\pm2.3}}$ & $\mathbf{68.0_{\pm0.5}}$ & $80.0_{\pm1.0}$ \\
        \midrule
        \midrule
        Baseline & $6.2_{\pm0.1}$ & $11.2_{\pm0.2}$ & $0.9_{\pm0.0}$ & $9.5_{\pm0.2}$ & $\mathbf{33.4_{\pm0.9}}$ & $9.3_{\pm0.2}$ & $2.7_{\pm0.1}$ & $15.2_{\pm0.4}$ & $40.8_{\pm0.5}$ \\
        \midrule
        $0.5$ & $6.4_{\pm0.1}$ & $11.7_{\pm0.2}$ & $\mathbf{1.1_{\pm0.1}}$ & $9.9_{\pm0.2}$ & $33.3_{\pm1.6}$ & $9.5_{\pm0.1}$ & $2.8_{\pm0.1}$ & $15.4_{\pm0.2}$ & $42.0_{\pm1.3}$ \\
        $1.0$ & $\mathbf{6.5_{\pm0.1}}$ & $\mathbf{12.1_{\pm0.3}}$ & $\mathbf{1.1_{\pm0.1}}$ & $\mathbf{10.0_{\pm0.1}}$ & $33.2_{\pm1.0}$ & $\mathbf{9.7_{\pm0.1}}$ & $2.9_{\pm0.0}$ & $15.7_{\pm0.1}$ & $\mathbf{42.1_{\pm1.3}}$ \\
        $1.5$ & $\mathbf{6.5_{\pm0.1}}$ & $11.9_{\pm0.3}$ & $\mathbf{1.1_{\pm0.0}}$ & $\mathbf{10.0_{\pm0.1}}$ & $32.2_{\pm0.8}$ & $\mathbf{9.7_{\pm0.1}}$ & $\mathbf{3.0_{\pm0.1}}$ & $\mathbf{15.8_{\pm0.1}}$ & $41.9_{\pm1.4}$ \\
         \bottomrule
    \end{tabular}%
    }
\end{table*}
\subsubsection{DLS Components}
\label{sec:dsl_component_ablation}
We continue by ablating each component in DLS: interval weight ratios ($(\lambda_{\text{close}}, \lambda_{\text{distant}})$), and split factor $\beta$.

\textbf{Interval Weight Ratios $(\lambda_{\text{close}}, \lambda_{\text{distant}})$.} Table~\ref{tab:dsl_weight_ablation} examines how the close-to-distant weight ratio $(\lambda_{\text{close}}, \lambda_{\text{distant}})$ affects performance when the split factor $\beta$ is fixed at 50\%.

Moderate distant-object emphasis consistently improves performance across all datasets. DLS on KITTI achieves highest overall mAP at $(1,2)$: $54.5\%$, but $(1,10)$ maximizes small object performance (mAP$_S$: $25.3\%$, $+5.6\%$ over baseline). On SUN RGB-D, DLS maximizes mAP at $(1,2)$: $19.0\%$ mAP ($+1.4\%$). VisDrone demonstrates similar patterns: $(1,5)$ achieves $+0.3\%$ mAP ($6.5\%$) with improvements in small objects ($+0.3\%$ mAP$_S$, $1.2\%$) and medium objects ($+0.4\%$ mAP$_M$, $9.9\%$).

Ratios favoring close objects decrease performance. On KITTI, $(5,1)$ reduces performance to $47.7\%$ mAP ($-5.8\%$), with small object performance reduced to $14.7\%$ mAP$_S$ ($-5.0\%$) and medium objects to $45.6\%$ mAP$_M$ ($-8.1\%$). Even stronger close-object emphasis $(15,1)$ drops performance on KITTI to $40.5\%$ mAP ($-13.0\%$), with small objects at $7.2\%$ mAP$_S$ ($-12.5\%$). VisDrone shows similar directional effects: $(5,1)$ reduces performance to $6.0\%$ mAP ($-0.2\%$), with small objects at $0.7\%$ mAP$_S$ ($-0.2\%$) and medium objects at $9.1\%$ mAP$_M$ ($-0.4\%$). This demonstrates that over-emphasizing close objects creates a training regime where challenging far, small objects are even more neglected. This supports our theoretical motivation in Section~\ref{sec:dd_rel_methods}.

The optimal ratio varies by dataset geometry and imaging perspective. KITTI's far-field concentration tolerates strong distant-object emphasis (($1,10$) still performs well: $54.4\%$ mAP). Meanwhile, SUN RGB-D's more balanced depth distribution requires gentler reweighting as performance reduces beyond ($1,5$): $18.6\%$ $\rightarrow$ $17.8\%$ for ($1,15$). VisDrone's compressed aerial perspective and weak size-depth correlation ($r=-0.275^{***}$) result in smaller improvements but maintain directional consistency, with optimal performance at $(1,5)$ with $\beta=15\%$ ($6.6\%$ mAP). 

\begin{table*}[t]
    \centering
    \caption{DLS interval weight ratio ($\lambda_{\text{close}}, \lambda_{\text{distant}}$) ablation with fixed $\beta=50\%$ on KITTI (top) and SUN RGB-D (bottom).}
    \label{tab:dsl_weight_ablation}
    \resizebox{\textwidth}{!}{%
    \begin{tabular}{cc|ccccc|cccc}
         \toprule
         $\lambda_{\text{close}}$ & $\lambda_{\text{distant}}$ & \textbf{mAP} & \textbf{mAP}$_{50}$ & \textbf{mAP}$_S$ & \textbf{mAP}$_M$ & \textbf{mAP}$_L$ & \textbf{mAR} & \textbf{mAR}$_S$ & \textbf{mAR}$_M$ & \textbf{mAR}$_L$ \\
        \midrule
        \multicolumn{2}{c|}{Baseline} & $53.5_{\pm0.3}$ & $81.2_{\pm0.3}$ & $19.7_{\pm0.6}$ & $53.7_{\pm0.1}$ & $72.6_{\pm0.9}$ & $66.8_{\pm0.5}$ & $38.5_{\pm0.5}$ & $66.5_{\pm0.4}$ & $\mathbf{80.7_{\pm0.8}}$ \\
        \midrule
        1 & 2 & $\mathbf{54.5_{\pm0.3}}$ & $\mathbf{82.7_{\pm0.4}}$ & $24.2_{\pm0.6}$ & $54.7_{\pm0.5}$ & $\mathbf{72.7_{\pm1.5}}$ & $67.6_{\pm0.3}$ & $\mathbf{41.9_{\pm0.2}}$ & $67.4_{\pm0.4}$ & $80.5_{\pm0.5}$ \\
        1 & 5 & $54.4_{\pm0.6}$ & $82.2_{\pm0.7}$ & $24.0_{\pm1.9}$ & $54.5_{\pm0.4}$ & $71.9_{\pm0.9}$ & $67.6_{\pm0.8}$ & $41.8_{\pm2.3}$ & $67.7_{\pm0.7}$ & $80.0_{\pm0.9}$ \\
        1 & 10 & $54.4_{\pm0.5}$ & $82.1_{\pm0.2}$ & $\mathbf{25.3_{\pm0.6}}$ & $\mathbf{54.9_{\pm0.4}}$ & $71.1_{\pm0.9}$ & $\mathbf{67.8_{\pm0.4}}$ & $42.0_{\pm1.0}$ & $\mathbf{68.1_{\pm0.2}}$ & $79.8_{\pm1.4}$ \\
        1 & 15 & $54.0_{\pm0.6}$ & $81.4_{\pm0.4}$ & $24.8_{\pm2.8}$ & $\mathbf{54.9_{\pm0.6}}$ & $70.6_{\pm1.3}$ & $67.3_{\pm0.4}$ & $41.2_{\pm1.2}$ & $\mathbf{68.1_{\pm0.6}}$ & $78.7_{\pm1.6}$ \\
        \midrule
        2 & 1 & $51.9_{\pm0.5}$ & $79.5_{\pm1.0}$ & $17.7_{\pm0.9}$ & $51.3_{\pm0.8}$ & $72.5_{\pm0.5}$ & $65.4_{\pm0.4}$ & $36.8_{\pm0.7}$ & $64.8_{\pm0.5}$ & $80.4_{\pm0.4}$ \\
        5 & 1 & $47.7_{\pm0.3}$ & $75.6_{\pm0.3}$ & $14.7_{\pm1.0}$ & $45.6_{\pm0.2}$ & $70.8_{\pm0.5}$ & $61.5_{\pm0.3}$ & $32.0_{\pm1.2}$ & $60.2_{\pm0.1}$ & $79.1_{\pm1.0}$ \\
        10 & 1 & $42.5_{\pm0.2}$ & $69.4_{\pm0.8}$ & $6.9_{\pm1.3}$ & $38.7_{\pm0.2}$ & $68.8_{\pm0.2}$ & $55.7_{\pm0.2}$ & $23.3_{\pm0.3}$ & $53.7_{\pm0.3}$ & $76.9_{\pm0.6}$ \\
        15 & 1 & $40.5_{\pm0.7}$ & $66.1_{\pm0.6}$ & $7.2_{\pm1.8}$ & $35.9_{\pm0.4}$ & $68.0_{\pm1.2}$ & $53.7_{\pm0.8}$ & $21.5_{\pm1.7}$ & $51.1_{\pm1.1}$ & $76.7_{\pm0.3}$ \\
        \midrule
        \midrule
        \multicolumn{2}{c|}{Baseline} & $17.6_{\pm0.3}$ & $31.3_{\pm0.3}$ & $\mathbf{2.0_{\pm0.2}}$ & $8.9_{\pm0.7}$ & $24.7_{\pm0.3}$ & $32.0_{\pm0.3}$ & $\mathbf{3.8_{\pm0.3}}$ & $20.8_{\pm0.6}$ & $43.7_{\pm1.8}$ \\
        \midrule
        1 & 2 & $\mathbf{19.0_{\pm0.2}}$ & $\mathbf{33.5_{\pm0.2}}$ & $1.6_{\pm0.5}$ & $\mathbf{10.0_{\pm0.8}}$ & $\mathbf{27.0_{\pm0.4}}$ & $\mathbf{35.4_{\pm0.7}}$ & $3.6_{\pm0.9}$ & $23.6_{\pm1.0}$ & $\mathbf{46.1_{\pm0.5}}$ \\
        1 & 5 & $18.6_{\pm0.5}$ & $33.1_{\pm0.9}$ & $1.7_{\pm0.5}$ & $9.9_{\pm0.6}$ & $26.2_{\pm0.9}$ & $34.9_{\pm0.3}$ & $3.7_{\pm0.4}$ & $\mathbf{24.2_{\pm0.2}}$ & $44.4_{\pm0.7}$ \\
        1 & 10 & $18.2_{\pm0.3}$ & $32.1_{\pm0.3}$ & $1.3_{\pm0.5}$ & $9.8_{\pm0.3}$ & $25.0_{\pm0.4}$ & $34.1_{\pm0.7}$ & $3.3_{\pm0.6}$ & $22.9_{\pm0.4}$ & $43.3_{\pm0.8}$ \\
        1 & 15 & $17.8_{\pm0.6}$ & $32.0_{\pm0.9}$ & $1.5_{\pm0.2}$ & $9.2_{\pm0.2}$ & $25.0_{\pm0.5}$ & $33.5_{\pm0.4}$ & $3.3_{\pm0.2}$ & $22.7_{\pm0.6}$ & $42.4_{\pm0.8}$ \\
        \midrule
        2 & 1 & $18.7_{\pm0.3}$ & $32.9_{\pm0.3}$ & $1.3_{\pm0.3}$ & $9.1_{\pm0.2}$ & $26.7_{\pm0.8}$ & $35.0_{\pm0.2}$ & $2.9_{\pm0.2}$ & $23.8_{\pm0.6}$ & $45.9_{\pm1.3}$ \\
        5 & 1 & $18.6_{\pm0.3}$ & $33.1_{\pm0.5}$ & $0.7_{\pm0.1}$ & $9.1_{\pm0.7}$ & $26.2_{\pm0.4}$ & $34.8_{\pm0.8}$ & $2.1_{\pm0.3}$ & $23.7_{\pm0.9}$ & $44.7_{\pm0.8}$ \\
        10 & 1 & $17.0_{\pm0.4}$ & $30.6_{\pm0.3}$ & $0.6_{\pm0.2}$ & $7.9_{\pm0.1}$ & $24.4_{\pm0.2}$ & $33.2_{\pm0.2}$ & $1.6_{\pm0.5}$ & $22.1_{\pm0.4}$ & $42.4_{\pm0.3}$ \\
        15 & 1 & $16.3_{\pm0.1}$ & $29.9_{\pm0.2}$ & $0.7_{\pm0.3}$ & $6.9_{\pm0.2}$ & $23.9_{\pm0.3}$ & $32.0_{\pm0.2}$ & $1.5_{\pm0.2}$ & $20.8_{\pm0.2}$ & $43.4_{\pm1.7}$ \\
         \bottomrule
    \end{tabular}%
    }
\end{table*}

\textbf{Split Factor $\beta$.} Table~\ref{tab:dsl_split_1_5_full} provides a split factor ablation at fixed weight ratio of $(1,5)$ to isolate $\beta$'s contribution.

DLS with $\beta=75\%$, i.e., split at $d\geq75\%$, achieves the highest performance on KITTI ($55.1\%$ mAP, $+1.6\%$ over baseline) and $28.4\%$ mAP$_S$ ($+8.7\%$ over baseline) for small objects. The high split creates a large ``close'' region and small ``distant'' region, aligning with KITTI's far-field concentration (Figures~\ref{fig:dataset_depth_distributions} and~\ref{fig:KITTI_dist_masks}). Lower splits ($50\%$ and $25\%$) reduce improvements on small object progressively ($24.0\%$, $24.5\%$) as more far objects are classified as ``close'', diluting the focused emphasis.

DLS with $\beta=25\%$ achieves the best performance ($18.7\%$ mAP, $+1.1\%$) on SUN RGB-D. The linear depth growth (Figure~\ref{fig:dataset_depth_distributions}) means objects distribute evenly. Higher splits ($50\%$ and $75\%$) achieve comparable mAP but with different trade-offs. DLS with $\beta=50\%$ maximizes mAP$_{50}$ ($33.1\%$) and medium-object recall ($24.2\%$ mAR$_M$). DLS with $\beta=75\%$ maximizes small object recall ($3.9\%$ mAR$_S$), which shows DLS works as intended. 

DLS with $\beta=15\%$ achieves the highest mAP ($6.6\%$, $+0.4\%$) on VisDrone. The mid-range depth concentration ($0.3-0.6$) indicates that most objects lie at similar distances, necessitating a low threshold. Higher splits underperform $\beta=15\%$ because they classify too little mid-range objects as ``far'', diluting emphasis on the challenging objects. $\beta=25\%$ provides a competitive alternative ($10.2\%$ mAP$_M$, $16.1\%$ mAR$_M$), demonstrating robustness around the optimal value.

Optimal split factor reflects depth distribution characteristics. Far-field datasets (KITTI) require high splits, balanced datasets (SUN RGB-D) prefer lower splits, and mid-range aerial datasets (VisDrone) need even lower splits. Practitioners should analyze dataset depth histograms (similar to Figure~\ref{fig:dataset_depth_distributions}) and set splits to isolate the depth interval where detection difficulty concentrates.

\begin{table*}[htp]
    \centering
    \caption{DLS split factor ($\beta$, in \%) ablation with fixed $\lambda_{\text{close}}=1$, $\lambda_{\text{distant}}=5$ on KITTI (top), SUN RGB-D (middle), and VisDrone (bottom).}
    \label{tab:dsl_split_1_5_full} 
    \resizebox{\textwidth}{!}{%
    \begin{tabular}{l|ccccc|cccc}
         \toprule
         \textbf{$\beta$} & \textbf{mAP} & \textbf{mAP}$_{50}$ & \textbf{mAP}$_S$ & \textbf{mAP}$_M$ & \textbf{mAP}$_L$ & \textbf{mAR} & \textbf{mAR}$_S$ & \textbf{mAR}$_M$ & \textbf{mAR}$_L$ \\
        \midrule
        Baseline & $53.5_{\pm0.3}$ & $81.2_{\pm0.3}$ & $19.7_{\pm0.6}$ & $53.7_{\pm0.1}$ & $72.6_{\pm0.9}$ & $66.8_{\pm0.5}$ & $38.5_{\pm0.5}$ & $66.5_{\pm0.4}$ & $80.7_{\pm0.8}$ \\
        \midrule
        75 & $\mathbf{55.1_{\pm0.5}}$ & $\mathbf{82.9_{\pm0.5}}$ & $\mathbf{28.4_{\pm0.5}}$ & $\mathbf{55.7_{\pm0.6}}$ & $71.3_{\pm0.2}$ & $\mathbf{68.3_{\pm0.2}}$ & $\mathbf{45.0_{\pm0.7}}$ & $\mathbf{68.6_{\pm0.1}}$ & $79.3_{\pm0.4}$ \\
        50 & $54.4_{\pm0.6}$ & $82.2_{\pm0.7}$ & $24.0_{\pm1.9}$ & $54.5_{\pm0.4}$ & $71.9_{\pm0.9}$ & $67.6_{\pm0.8}$ & $41.8_{\pm2.3}$ & $67.7_{\pm0.7}$ & $80.0_{\pm0.9}$ \\
        25 & $54.6_{\pm0.4}$ & $81.7_{\pm0.5}$ & $24.5_{\pm2.3}$ & $54.5_{\pm0.2}$ & $\mathbf{73.0_{\pm0.6}}$ & $68.0_{\pm0.4}$ & $41.8_{\pm0.8}$ & $67.8_{\pm0.4}$ & $\mathbf{81.2_{\pm0.5}}$ \\
        \midrule
        \midrule
        Baseline & $17.6_{\pm0.3}$ & $31.3_{\pm0.3}$ & $\mathbf{2.0_{\pm0.2}}$ & $8.9_{\pm0.7}$ & $24.7_{\pm0.3}$ & $32.0_{\pm0.3}$ & $3.8_{\pm0.3}$ & $20.8_{\pm0.6}$ & $43.7_{\pm1.8}$ \\
        \midrule
        75 & $18.4_{\pm0.2}$ & $32.8_{\pm0.5}$ & $1.7_{\pm0.2}$ & $\mathbf{10.0_{\pm0.8}}$ & $25.9_{\pm0.3}$ & $34.8_{\pm0.4}$ & $\mathbf{3.9_{\pm0.3}}$ & $24.1_{\pm0.8}$ & $44.0_{\pm0.2}$ \\
        50 & $18.6_{\pm0.5}$ & $\mathbf{33.1_{\pm0.9}}$ & $1.7_{\pm0.5}$ & $9.9_{\pm0.6}$ & $26.2_{\pm0.9}$ & $34.9_{\pm0.3}$ & $3.7_{\pm0.4}$ & $\mathbf{24.2_{\pm0.2}}$ & $44.4_{\pm0.7}$ \\
        25 & $\mathbf{18.7_{\pm0.4}}$ & $32.8_{\pm0.7}$ & $1.6_{\pm0.3}$ & $9.4_{\pm0.5}$ & $\mathbf{26.6_{\pm0.7}}$ & $\mathbf{35.2_{\pm0.7}}$ & $3.5_{\pm0.3}$ & $\mathbf{24.2_{\pm0.8}}$ & $\mathbf{45.1_{\pm1.1}}$ \\
        \midrule
        \midrule
        Baseline & $6.2_{\pm0.1}$ & $11.2_{\pm0.2}$ & $0.9_{\pm0.0}$ & $9.5_{\pm0.2}$ & $\mathbf{33.4_{\pm0.9}}$ & $9.3_{\pm0.2}$ & $2.7_{\pm0.1}$ & $15.2_{\pm0.4}$ & $40.8_{\pm0.5}$ \\
        \midrule
        85 & $6.2_{\pm0.2}$ & $11.3_{\pm0.3}$ & $1.0_{\pm0.1}$ & $9.3_{\pm0.1}$ & $31.3_{\pm1.1}$ & $9.3_{\pm0.2}$ & $2.8_{\pm0.1}$ & $14.8_{\pm0.3}$ & $41.0_{\pm0.9}$ \\
        75 & $6.2_{\pm0.2}$ & $11.5_{\pm0.4}$ & $1.1_{\pm0.1}$ & $9.3_{\pm0.0}$ & $32.2_{\pm0.5}$ & $9.4_{\pm0.1}$ & $2.9_{\pm0.1}$ & $15.0_{\pm0.2}$ & $40.9_{\pm1.4}$ \\
        50 & $6.5_{\pm0.2}$ & $11.8_{\pm0.1}$ & $\mathbf{1.2_{\pm0.1}}$ & $9.9_{\pm0.1}$ & $32.3_{\pm0.6}$ & $9.7_{\pm0.2}$ & $\mathbf{3.0_{\pm0.1}}$ & $15.6_{\pm0.3}$ & $40.9_{\pm2.0}$ \\
        25 & $6.5_{\pm0.1}$ & $\mathbf{11.9_{\pm0.3}}$ & $1.0_{\pm0.0}$ & $\mathbf{10.2_{\pm0.4}}$ & $32.8_{\pm1.0}$ & $\mathbf{9.8_{\pm0.2}}$ & $2.9_{\pm0.0}$ & $\mathbf{16.1_{\pm0.5}}$ & $\mathbf{41.2_{\pm0.4}}$ \\
        15 & $\mathbf{6.6_{\pm0.2}}$ & $\mathbf{11.9_{\pm0.2}}$ & $1.0_{\pm0.1}$ & $10.0_{\pm0.3}$ & $33.3_{\pm1.2}$ & $9.7_{\pm0.2}$ & $2.8_{\pm0.1}$ & $15.8_{\pm0.5}$ & $40.8_{\pm0.2}$ \\
         \bottomrule
    \end{tabular}%
    }
\end{table*}
\subsubsection{YOLOv11}
The following ablations are on YOLOV11. We first isolate the contributions of DLW components in Table~\ref{tab:yolov11_simple_ablation}. Averaging without inversion underperforms vanilla training. BW ($1-d_\text{norm}$) reduces mAP by $-1.2\%$ ($47.2\%$) and IW ($1-d_\text{norm}$) by $-0.3\%$ ($48.1\%$). This validates that aggregating per-object depth values into scalar batch/image weights eliminates the fine-grained geometric information necessary for effective reweighting. Inversion recovers some performance but remains less than DLW. BW achieves $47.9\%$ ($-0.5\%$ vs.~baseline), IW reaches $48.6\%$ ($+0.2\%$). The improvement over non-inverted versions confirms that prioritizing distant objects is correct, but the continued underperformance demonstrates that simple averaging cannot capture per-object variance patterns. DLW ($1-d_\text{norm}$) without inversion achieves only $47.8\%$ ($-0.6\%$), reducing performance despite maintaining fine-grained per-object information. This proves that granularity alone is insufficient. The weighting must emphasize distant objects. DLW achieves $49.4\%$ ($+1.0\%$), demonstrating that per-object granularity, exponential transformation, and inverse normalization are all necessary. 
\begin{table*}[htp]
    \centering
    \caption{DLW component ablation on KITTI using YOLOv11.}
\label{tab:yolov11_simple_ablation}
    \resizebox{\textwidth}{!}{%
    \begin{tabular}{l|ccccc|cccc}
         \toprule
         \textbf{Method} & \textbf{mAP} & \textbf{mAP}$_{50}$ & \textbf{mAP}$_S$ & \textbf{mAP}$_M$ & \textbf{mAP}$_L$ & \textbf{mAR} & \textbf{mAR}$_S$ & \textbf{mAR}$_M$ & \textbf{mAR}$_L$ \\
        \midrule
        Baseline & $48.4_{\pm0.3}$ & $77.8_{\pm0.4}$ & $31.0_{\pm1.8}$ & $52.4_{\pm0.3}$ & $52.9_{\pm0.8}$ & $58.0_{\pm0.3}$ & $38.1_{\pm1.2}$ & $60.8_{\pm0.6}$ & $63.3_{\pm0.3}$ \\
        \midrule
        BW ($1-d_\text{norm}$) & $47.2_{\pm0.2}$ & $77.2_{\pm0.8}$ & $30.7_{\pm1.7}$ & $50.6_{\pm0.3}$ & $53.1_{\pm0.4}$ & $58.0_{\pm0.4}$ & $38.0_{\pm1.6}$ & $60.5_{\pm0.3}$ & $64.0_{\pm0.9}$ \\
        BW & $47.9_{\pm1.0}$ & $78.1_{\pm0.8}$ & $31.3_{\pm2.5}$ & $51.5_{\pm1.2}$ & $52.8_{\pm0.4}$ & $58.1_{\pm0.9}$ & $39.5_{\pm1.5}$ & $61.0_{\pm0.7}$ & $63.2_{\pm1.6}$ \\
        IW ($1-d_\text{norm}$) & $48.1_{\pm0.3}$ & $77.6_{\pm0.4}$ & $31.2_{\pm0.4}$ & $51.3_{\pm0.6}$ & $53.5_{\pm1.3}$ & $57.9_{\pm0.1}$ & $38.0_{\pm0.5}$ & $60.4_{\pm0.4}$ & $64.3_{\pm1.1}$ \\
        IW & $48.6_{\pm0.2}$ & $77.9_{\pm0.5}$ & $31.5_{\pm1.9}$ & $52.0_{\pm0.5}$ & $52.9_{\pm0.9}$ & $58.2_{\pm0.5}$ & $38.6_{\pm1.5}$ & $60.9_{\pm0.1}$ & $63.6_{\pm1.7}$ \\
        \midrule
        DLW ($1-d_\text{norm}$) & $47.8_{\pm0.5}$ & $77.7_{\pm0.8}$ & $30.7_{\pm0.5}$ & $51.1_{\pm0.5}$ & $53.6_{\pm0.2}$ & $58.3_{\pm0.6}$ & $38.3_{\pm0.7}$ & $60.9_{\pm0.9}$ & $\mathbf{64.4_{\pm0.5}}$ \\
        \textbf{DLW} & $\mathbf{49.4_{\pm0.4}}$ & $\mathbf{78.8_{\pm0.9}}$ & $\mathbf{33.3_{\pm0.2}}$ & $\mathbf{53.3_{\pm0.6}}$ & $\mathbf{53.8_{\pm1.0}}$ & $\mathbf{59.2_{\pm0.1}}$ & $\mathbf{40.0_{\pm0.3}}$ & $\mathbf{62.3_{\pm0.3}}$ & $63.8_{\pm0.7}$ \\
         \bottomrule
    \end{tabular}%
    }
\end{table*}

Cross-architecture consistency with EfficientDet (Table~\ref{tab:dwl_component_ablation}) validates design choices generalize across different detection paradigms. We continue the YOLOv11 ablations by exploring DLS's configuration space in Table~\ref{tab:yolov11_dsl_ablations} to identify optimal settings for YOLOv11. For moderate weighting $(1,2)$, mid-range splits ($\beta=50$) optimize balanced performance, while high splits ($\beta=75$) boost small objects at minimal overall cost ($+0.0\%$ mAP, $+2.2\%$ mAP$_S$). For strong weighting $(1,5)$, split factor becomes critical. High splits ($\beta=75$) create small object focus  ($34.4\%$ mAP$_S$, $+3.4\%$) beneficial for specialized applications but sacrifices overall mAP ($47.4\%$, $-1.0\%$). Mid splits ($\beta=50$) balance improvements with best recall ($59.6\%$ mAR). Low splits ($\beta=25$) maximize overall mAP ($48.9\%$, $+0.5\%$). This demonstrates that weight ratio and split factor interact. Optimal configurations cannot be determined independently but require joint tuning. Unlike EfficientDet which preferred $(1,5)$ weights on KITTI emphasizing distant objects, YOLOv11 achieves competitive performance with various configurations. This suggests anchor-free architectures naturally handle scale variation better.
\begin{table*}[htp]
    \centering
    \caption{DLS configuration ablation on KITTI using YOLOv11.}
\label{tab:yolov11_dsl_ablations}
    \resizebox{\textwidth}{!}{%
\begin{tabular}{ccc|ccccc|cccc}
\toprule
$\lambda_{\text{close}}$ & $\lambda_{\text{distant}}$ & $\beta$ & \textbf{mAP} & \textbf{mAP}$_{50}$ & \textbf{mAP}$_S$ & \textbf{mAP}$_M$ & \textbf{mAP}$_L$ & \textbf{mAR} & \textbf{mAR}$_S$ & \textbf{mAR}$_M$ & \textbf{mAR}$_L$ \\
\midrule
\multicolumn{3}{c|}{\text{Baseline}} & $48.4_{\pm0.3}$ & $77.8_{\pm0.4}$ & $31.0_{\pm1.8}$ & $\mathbf{52.4_{\pm0.3}}$ & $52.9_{\pm0.8}$ & $58.0_{\pm0.3}$ & $38.1_{\pm1.2}$ & $60.8_{\pm0.6}$ & $63.3_{\pm0.3}$ \\
\midrule
1 & 2 & 75 & $48.4_{\pm0.6}$ & $\mathbf{78.6_{\pm0.9}}$ & $\mathbf{33.2_{\pm1.0}}$ & $52.3_{\pm0.9}$ & $52.7_{\pm0.7}$ & $\mathbf{58.3_{\pm0.5}}$ & $\mathbf{39.8_{\pm1.1}}$ & $\mathbf{61.4_{\pm0.7}}$ & $63.3_{\pm1.2}$ \\
1 & 2 & 50 & $\mathbf{48.7_{\pm0.4}}$ & $\mathbf{78.6_{\pm0.4}}$ & $32.4_{\pm1.0}$ & $52.2_{\pm0.4}$ & $53.1_{\pm0.8}$ & $58.2_{\pm0.3}$ & $39.7_{\pm0.8}$ & $60.8_{\pm0.2}$ & $\mathbf{63.5_{\pm0.9}}$ \\
1 & 2 & 25 & $48.2_{\pm0.7}$ & $77.7_{\pm1.8}$ & $31.1_{\pm0.1}$ & $51.3_{\pm0.6}$ & $\mathbf{53.8_{\pm1.3}}$ & $58.2_{\pm0.4}$ & $38.4_{\pm0.6}$ & $60.6_{\pm0.5}$ & $64.2_{\pm0.7}$ \\
\midrule
1 & 5 & 75 & $47.4_{\pm0.8}$ & $77.8_{\pm1.5}$ & $\mathbf{34.4_{\pm0.5}}$ & $51.3_{\pm0.7}$ & $49.9_{\pm1.0}$ & $58.7_{\pm0.4}$ & $\mathbf{41.2_{\pm0.8}}$ & $61.4_{\pm0.9}$ & $63.2_{\pm0.6}$ \\
1 & 5 & 50 & $48.5_{\pm0.4}$ & $\mathbf{78.2_{\pm0.7}}$ & $32.6_{\pm0.4}$ & $52.3_{\pm0.5}$ & $51.9_{\pm0.7}$ & $\mathbf{59.6_{\pm0.7}}$ & $39.4_{\pm0.5}$ & $\mathbf{61.8_{\pm0.6}}$ & $\mathbf{65.7_{\pm1.6}}$ \\
1 & 5 & 25 & $\mathbf{48.9_{\pm0.8}}$ & $78.1_{\pm0.2}$ & $31.9_{\pm1.1}$ & $\mathbf{52.4_{\pm1.0}}$ & $\mathbf{54.1_{\pm0.5}}$ & $58.5_{\pm0.7}$ & $39.0_{\pm1.0}$ & $60.9_{\pm0.7}$ & $64.3_{\pm1.0}$ \\
\bottomrule
\end{tabular}%
    }
\end{table*}
\subsubsection{DCT Optimization Objective}
Table~\ref{tab:dct_ablation} ablates the DCT optimization objective, comparing three versions: (1) basic TD maximization with ED penalty, (2) adding absolute TD/ED ratio term, and (3) our final formulation with relative ratio improvement term. The basic formulation ($-\Delta_{\text{TD}}+\Delta_{\text{ED}}$) achieves the highest TD increase ($+24,356$) but poorest ED control ($+5,282$). Adding the absolute ratio term ($\text{TD}^*/\text{ED}^*$) reduces performance, as the objective shifts toward optimizing the final ratio rather than improving upon the static baseline. Our final formulation with relative ratio improvement ($\text{TD}^*/\text{ED}^* - \text{TD}/\text{ED}$) maintains focus on improvement over the baseline, achieving $+22,042$ TD with only $+2,412$ ED and yielding the highest net benefit ($19,630$).
\begin{table}[t]
    \centering
\caption{DCT optimization objective ablation, aggregated across all datasets (KITTI, VisDrone, SUN RGB-D, MS COCO), architectures (EfficientDet, YOLOv11), and reference thresholds ($\tau_0 \in [0.1, 0.9]$) on held-out inference data. }
\label{tab:dct_ablation}
    \resizebox{\columnwidth}{!}{%
\begin{tabular}{l|cc|c}
\toprule
Objective & $\Delta_{TD}\uparrow$ & $\Delta_{ED}\downarrow$ & $\Delta_{TD}-\Delta_{ED}\uparrow$ \\
\midrule
$-\Delta_{TD}+\Delta_{ED}$ & $\mathbf{24,356}$ & $5,282$ & $19,074$\\
$-\Delta_{TD}+\Delta_{ED}-\frac{\text{TD}^*}{\text{ED}^*}$ & $18,343$ & $5,163$ & $13,180$\\
$-\Delta_{TD}+\Delta_{ED}-(\frac{\text{TD}^*}{\text{ED}^*}-\frac{TD}{ED})$ & $22,042$ & $\mathbf{2,412}$ & $\mathbf{19,630}$\\
\bottomrule
\end{tabular}%
}
\end{table}

\end{document}